\newtheorem{theorem}{Theorem}
\newtheorem{lemma}{Lemma}
\title{Multi-class Support Vector Machine with Maximizing Minimum Margin}
\author{
	Zhezheng Hao\textsuperscript{\rm 1,2}\footnotemark[1],
    Feiping Nie\textsuperscript{\rm 1,2}\thanks{Equal Contribution.}\thanks{Corresponding author.}, 
    Rong Wang\textsuperscript{\rm 2}
}
\begin{document}

\maketitle

\begin{abstract}
Support Vector Machine (SVM) stands out as a prominent machine learning technique widely applied in practical pattern recognition tasks. 
It achieves binary classification by maximizing the "margin", which represents the minimum distance between instances and the decision boundary.
Although many efforts have been dedicated to expanding SVM for multi-class case through strategies such as one versus one and one versus the rest, satisfactory solutions remain to be developed.
In this paper, we propose a novel method for multi-class SVM that incorporates pairwise class loss considerations and maximizes the minimum margin.
Adhering to this concept, we embrace a new formulation that imparts heightened flexibility to multi-class SVM.
Furthermore, the correlations between the proposed method and multiple forms of multi-class SVM are analyzed.
The proposed regularizer, akin to the concept of "margin", can serve as a seamless enhancement over the softmax in deep learning, providing guidance for network parameter learning.
Empirical evaluations demonstrate the effectiveness and superiority of our proposed 
method over existing multi-classification methods.
Code is available at https://github.com/zz-haooo/M3SVM.
\end{abstract}

\section{Introduction}
Support vector machine, a fundamental machine learning technique, initially emerged as a binary linear classifier \cite{Boser1992}. 
Rooted in the theory of VC-dimension and the principle of structural risk minimization, SVM operates as a binary classifier delineated by a hyperplane.
Its mathematical rigor and notable performance in practical applications have garnered significant attention.
SVM-based classification methods have found extensive application in diverse machine learning tasks, including image classification \cite{Wei2016}, text classification \cite{Nie2014}, etc.
Diverse SVM variants have emerged over time, such as Twin SVM \cite{Khemchandani2007}, Optimal Margin Distribution Machine \cite{Zhang2019},Decision Tree SVM \cite{Nie2020}, etc.
Furthermore, SVM has been extended to encompass unsupervised and semi-supervised learning scenarios \cite{Xu2005, Amer2013}.

Despite its outstanding performance in binary classification tasks, research of SVM on multi-class classification progresses sluggishly.
Existing multi-class SVM can be summarized into two main categories, the first of which are One versus the Rest (OvR) \cite{Vapnik1998} and One versus One (OvO) techniques \cite{Hsu2002}.
For a $c$-class classification, OvR utilizes $c$ binary SVMs for hyperplanes between each class and the rest, while OvO utilizes $\frac{c(c-1)}{2}$ binary SVMs for hyperplanes between each pair of classes.
A critical shortcoming of OvR arises from the unbalancedness of subproblem.
Besides, treating the rest of the classes as a single class may lead to potential inseparability \cite{Pisner2020}.
The drawback of OvO lies in the large time and space overhead when there are numerous classes.
For $c$-class data, each sample is associated with $c-1$ hyperplanes, introducing unnecessary redundancy.
Therefore, OvO necessitates $\mathcal{O}(c^{2}d)$ for each test sample.
Since OvR and OvO involve multiple independent binary subproblems, they fail to achieve a complete partition of the feature space, resulting in certain regions with constant tied votes.
Additionally, decomposing the multi-class problem into multiple sub-problems leads to independent hyperparameters for each sub-problem,  posing challenges for fine-tuning.

The second major method is the direct separation of multi-class data under a unified optimization formulation \cite{Weston1998, Crammer2001, Guermeur2002}.
While these multi-class methods achieve sound performance on well-structured data, they may be less effective for datasets with ambiguous class boundaries.
Moreover, many of these multi-class SVM methods deviate from the fundamental principle of margin, thus impacting generalization capabilities.

To address the aforementioned issues, we propose a novel method called Multi-Class \textbf{S}upport \textbf{V}ector \textbf{M}achines with \textbf{M}aximizing \textbf{M}inimum \textbf{M}argin (\textbf{M$^3$SVM}) to overcome the limitations of existing methods.
The main contributions of this paper are summarized as follows:
\begin{itemize}
	\item In this paper, we propose a concise yet effective multi-class SVM method that computes classification loss between each class pair.
	More importantly, we derive a novel regularizer that enlarges the lower bound of the margin by introducing a parameter $p$. 

	\item The proposed method offers a lucid geometric interpretation and addresses issues prevalent in existing methods, such as the imbalanced subproblem in OvR, the redundancy of OvO, the reciprocal constraints within the unified formulations.
	It also functions as a plug-and-play improvement over the softmax in neural networks.
	We theoretically analyze the association of M$^3$SVM with the previous methods.
	Besides, the proposed regularizer can be interpreted in the context of minimizing structural risk.
	
	\item 
	Through exhaustive experimentation on  realistic datasets, our proposed method demonstrates a marked enhancement in classification performance.
	Moreover, experimental results in the realm of deep learning underscore its contribution to robustness and overfitting prevention.
\end{itemize}

The proofs of all involved lemmas and theorems together with supplementary experiments are relegated to Appendix.

\noindent \itshape Notations: \upshape The vectors and matrices are denoted by bold lowercase and bold uppercase letters, respectively. 
The transpose, the $i$-th column and the Frobenius norm of matrix $\mathbf{W}$ are denoted by $\mathbf{W}^{T}$, $\mathbf{w}_{i}$, $\|\mathbf{W}\|_{F}$, respectively.
Set $\{1,2,\cdots,n\}$ is abbreviated by $[n]$ for simplicity.

\section{Related Work}
\subsection{Binary SVM}
SVM was first proposed under the form of hard margin, whose basic formulation can be written as the following optimization problem \cite{Boser1992}:
\begin{equation}
		\min_{\mathbf{w},b} \frac{1}{2}\|\mathbf{w}\|_2^2, \quad
		\text { s.t. } y_{i}\left(\mathbf{w}^T \mathbf{x}_{i}+b\right) \geqslant 1, i \in [n],
\end{equation}
where $\{(\mathbf{x}_{i},y_{i})\}_{i = 1}^{n}$ is the train set.
SVM aims to seek a separating hyperplane by maximizing the margin between two classes such that $\mathbf{w}^T \mathbf{x}_{i}+b-1 \geqslant 0$ with $y_{i} = 1$ and $\mathbf{w}^T \mathbf{x}_{i}+b +1 \leqslant 0$ with $y_{i} = -1$.
The principle of SVM is demonstrated in Figure \ref{Binary SVM}.
It maximizes the "margin" in the figure through searching the appropriate support vectors.
The separating hyperplane $\mathbf{w}^{T}\mathbf{x} + b = 0$ obtained by solving the quadratic programming problem above can be employed to classify the upcoming data without label.
Cortes and Vapnik \cite{Cortes1995} proposed SVM with soft margin by introducing slack variables,
\begin{equation}\label{slack}
	\begin{aligned}
		&\min_{\mathbf{w},b} \sum_{i = 1}^{n}\xi_{i} + \lambda \|\mathbf{w}\|_2^2, \\
		&\text { s.t. } y_{i}\left(\mathbf{w}^T \mathbf{x}_{i}+b\right) \geqslant 1 - \xi_{i}, i \in [n].
	\end{aligned}
\end{equation}
Each slack variable corresponds to one sample, depicting the degree of unsatisfied constraints.
By substituting the slack variables, Eq. (\ref{slack}) can be formulated as the form of "loss $+$ regularization".
Thus, the general form of soft margin SVM with hinge loss is as follows:
\begin{equation}
	\min_{\mathbf{w},b} \sum_{i = 1}^{n} [1 - y_{i}(\mathbf{w}^T \mathbf{x}_{i}+b)]_{+} + \lambda\|\mathbf{w}\|_2^2.
\end{equation}
where $[\cdot]_{+} = max\{0,\cdot\}$.
$\lambda$ is a trade-off parameter to weigh the two objectives. 
Subsequent studies have refined the vanilla binary SVM in multiple ways \cite{Crisp1999, Mangasarian2001, Grandvalet2008, Ladicky2011, Zhou2012, Nie2014, Zhang2017}.
Nevertheless, the exploration of multi-class SVM remains incomplete.
A few representative methods are outlined below.

\subsection{Multi-class SVM with Unified Formulation}

When contemplating the decision function of multi-class linear models, an intuitive way of decision making is
\begin{equation}\label{decision}
	y = \emph{arg} \max_{k \in [c]} \mathbf{w}_{k}^T \mathbf{x}+b_{k}.
\end{equation}
In this way, each class corresponds to one projection vector $\mathbf{w}$.
In accordance with such decision function, a series of multi-class SVM models have been proposed.
Weston and Watkins \cite{Weston1998} integrated multi-class SVM into a unified framework rather than solving multiple subproblems separately, which is formulated as
\begin{equation}\label{Weston}
	\begin{aligned}
		&\min _{\mathbf{W} \in \mathbb{R}^{d \times c}, \mathbf{b} \in \mathbb{R}^c}    \sum_{i=1}^n \sum_{k \neq y_i} \xi_{ki} + \lambda \sum_{k=1}^c\left\|\mathbf{w}_{k}\right\|_2^2 \\
		&{ s.t. }\left\{\begin{array}{l}
			\mathbf{w}_{y_i}^T \mathbf{x}_i+b_{y_i} \geq \mathbf{w}_{k}^T \mathbf{x}_i+b_{k}+1-\xi_{ki}, \\
			\xi_{ki} \geqslant 0,k \in [c], k \neq y_{i},i \in [n].
		\end{array} \right.
	\end{aligned}
\end{equation}
where $\mathbf{W} \in \mathbb{R}^{d \times c}$ and $\mathbf{b} \in \mathbb{R}^c$ are applied for test data through Eq. (\ref{decision}).
Crammer and Singer \cite{Crammer2001} proposed a new-look loss function from the perspective of decision function, which is formulated as follows.
\begin{equation}\label{Crammer}
	\begin{aligned}
		&\min _{\mathbf{W} \in \mathbb{R}^{d \times c}, \mathbf{b} \in \mathbb{R}^c}   \sum_{i=1}^n \xi_{i} +  \lambda \sum_{k=1}^c\left\|\mathbf{w}_{k}\right\|_2^2 \\
		&{ s.t. }\mathbf{w}_{y_i}^T \mathbf{x}_i + \delta_{y_{i},k} - \mathbf{w}_{k}^T \mathbf{x}_i\geqslant 1 - \xi_{i},k \in [c],i \in [n].
	\end{aligned}
\end{equation}
where $\delta_{y_{i},k}$ equals to $1$ if $k = y_{i}$ and $0$ otherwise.

Denoting $[1 - (\mathbf{w}_{y_{i}}^{T} \mathbf{x}_{i} + b_{y_{i}} - \mathbf{w}_{k}^{T} \mathbf{x}_{i} - b_{k} )]_{+}$ as $\Delta_{ik}$, the optimization objective  in Eq. (\ref{Weston}) can be converted to
\begin{equation}\label{hinge_Weston}
	\min _{\mathbf{W} \in \mathbb{R}^{d \times c}, \mathbf{b} \in \mathbb{R}^c}  \sum_{i=1}^n \sum_{k \neq y_i} \Delta_{ik} + \lambda \sum_{k=1}^c\left\|\mathbf{w}_{k}\right\|_2^2.
\end{equation}
By derivation, the objective in Eq. (\ref{Crammer})  can be rewritten as
\begin{equation}
	\min _{\mathbf{W} \in \mathbb{R}^{d \times c}, \mathbf{b} \in \mathbb{R}^c}   \sum_{i=1}^n \max_{k \neq y_i} \Delta_{ik} + \lambda \sum_{k=1}^c\left\|\mathbf{w}_{k}\right\|_2^2.
\end{equation}
This makes it clear that per-sample loss in Eq. (\ref{Weston}) is the sum of loss of misclassification while that in Eq. (\ref{Crammer}) is the maximum loss.

\begin{figure*}[t]
	\centering
	\subfloat[Binary SVM.]{
		\includegraphics[height=2.5cm, width=3.8cm]{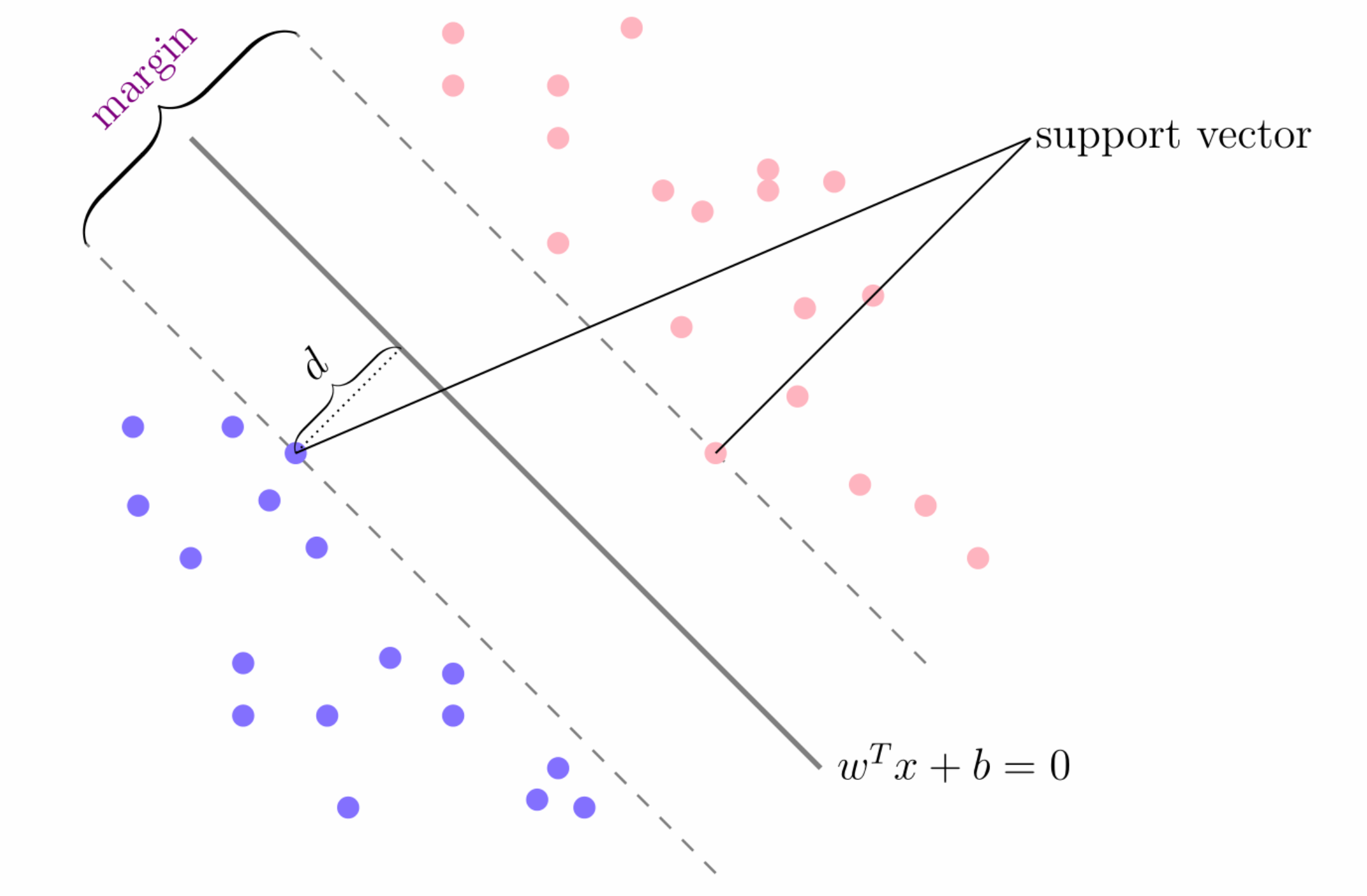} \label{Binary SVM}}  
	\subfloat[Multi-class SVM with three classes.]{
		\includegraphics[ height=2.5cm, width=5cm]{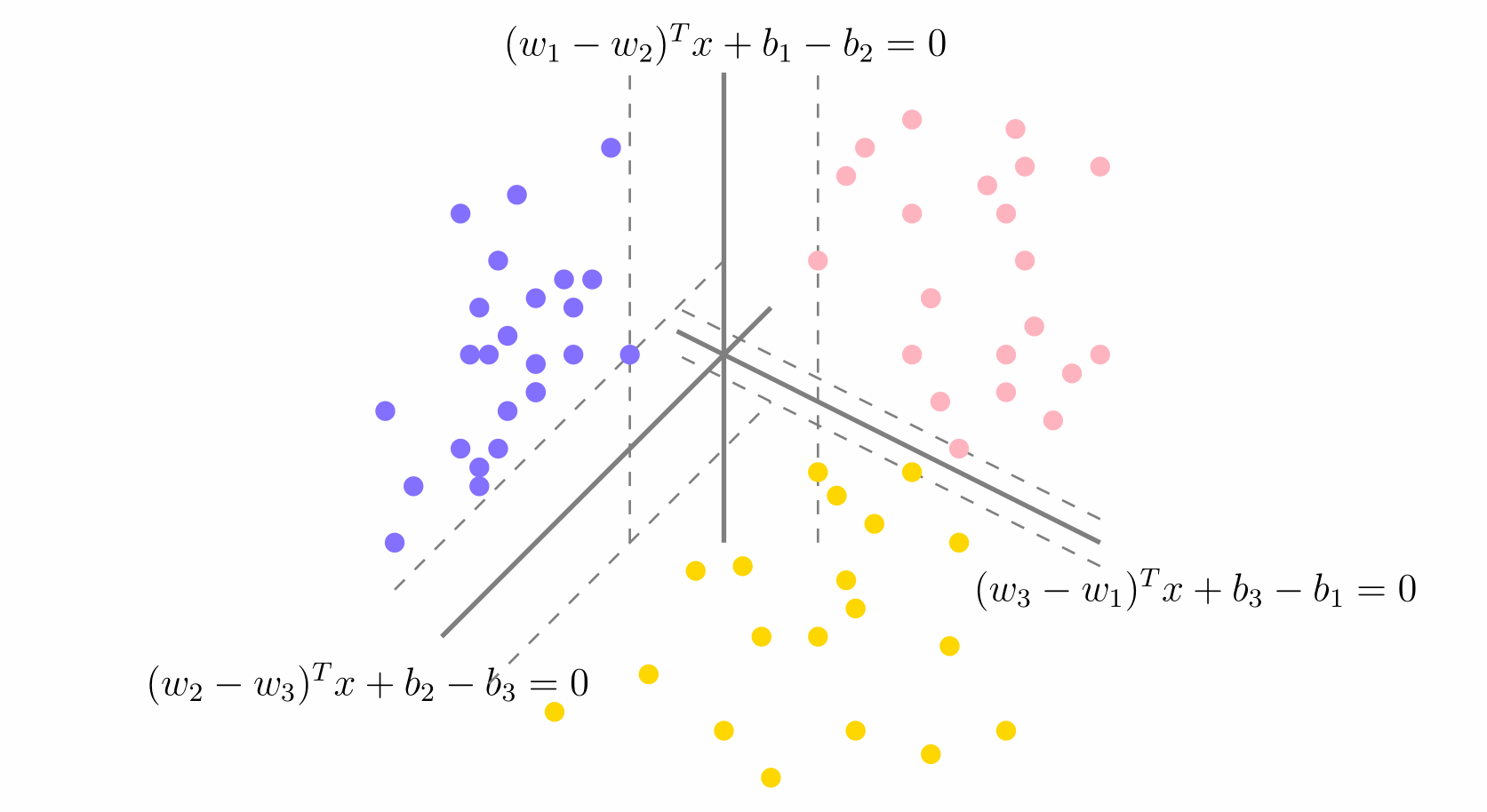} 
		\label{Piecewise-linear separator}}
	\subfloat[Iris with LDA.]{
		\includegraphics[height=2.5cm, width=4.2cm]{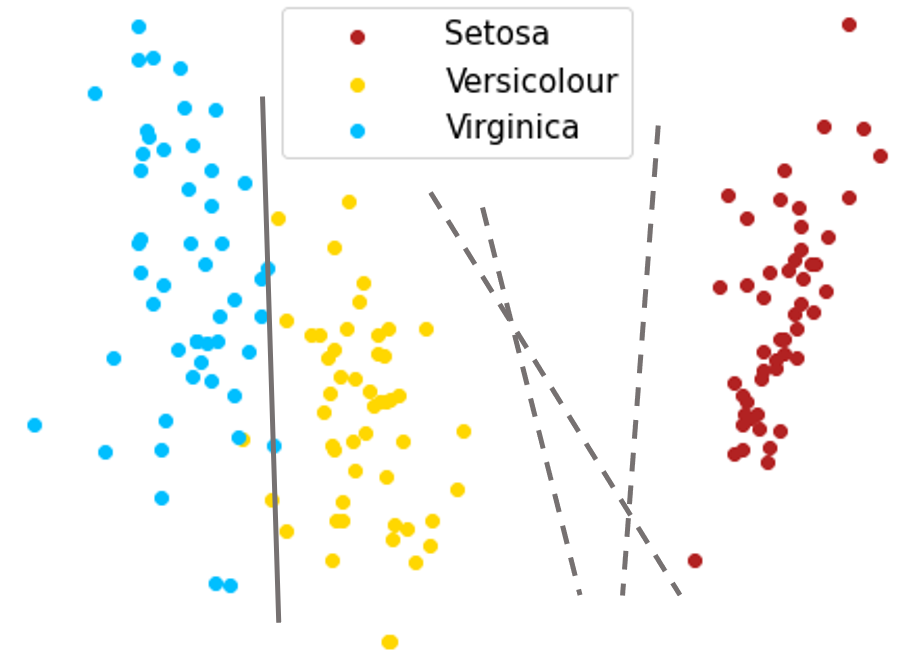} \label{Iris with LDA}}
	\subfloat[Iris with PCA.]{
		\includegraphics[height=2.5cm, width=4.2cm]{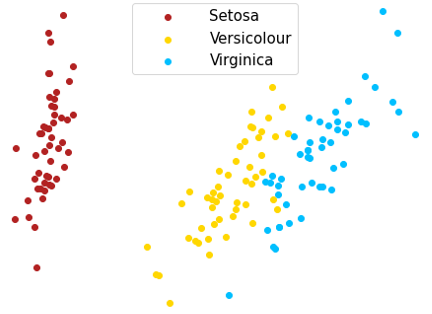} \label{Iris with PCA}}
	\caption{Illustrative figures.}
	\label{Confusion}
\end{figure*}
The two methods above represent unified multi-class models, however, the explicit interpretation of the margin concept is lacking.
Bredensteiner and Bennett \cite{Bredensteiner1999} constructed the M-SVM upon the definition of piecewise-linear separability, which introduced explicit margin for multi-class SVM.
Guermeur \cite{Guermeur2002}
explained multi-class SVM based on uniform strong law of large numbers and proved the equivalence with \cite{Bredensteiner1999}.
The aforementioned methods are representative, and subsequent work mainly revolves around their improvement \cite{Vural2004, Lauer2011, Burg2016}.
The explanation from the margin perspective was collated by Xu et al. \cite{Xu2017}.
Nie el al. proposed capped $\ell_{p}$-norm multi-class SVM to deal with light and heavy outliers \cite{Nie2017}. 
Doan et al. \cite{Doan2016} composed and experimented with the aforementioned multi-class SVM methods.
Lapin et al. \cite{Lapin2015} further proposed a new multi-class SVM model based on a tight convex upper bound of the top-k error. 
Moreover, softmax logical regression \cite{Boehning1992}, the multi-class generalization of logistic regression, is another fundamental classifier in machine learning.
The softmax loss and its variants \cite{Krishnapuram2005, Liu2016, Liu2017} are extensively utilized in the final layer of classification networks.


\section{Method and Discussions}

\subsection{Problem Setup}




Rethinking the decision hyperplane from a plain perspective, with each class associated with parameters $(\mathbf{w},b)$, a desirable linear multi-classifier is supposed to meet
\begin{equation}
	\mathbf{w}_{y_{i}}^{T} \mathbf{x}_{i} + b_{y_{i}}  > \mathbf{w}_{k}^{T} \mathbf{x}_{i} +b_{k}, k \neq y_{i}, \forall i \in [n].
\end{equation}
Accordingly, a $c$-classification problem can be implemented by solving $c$ vectors.
From this, the distinction between samples belonging to class $j$ and class $k$ is established through the decision function:
\begin{equation}
	f_{kl}(\mathbf{x}) = (\mathbf{w}_{k}-\mathbf{w}_{l})^T \mathbf{x} + b_{k} -b_{l}, k<l. 
\end{equation}
Thereby, $f_{kl}(\mathbf{x}) = 0$ is the corresponding separating hyperplane, $f_{kl}(\mathbf{x}) > 0$ for class $k$ and $f_{kl}(\mathbf{x}) < 0$ for class $l$.
The illustration of our multi-class SVM is shown in Figure \ref{Piecewise-linear separator}.


To achieve OvO strategy with a unified model, our multi-class SVM seeks separating hyperplanes by maximizing the margin between two classes such that $(\mathbf{w}_{k} - \mathbf{w}_{l})^T \mathbf{x}_{i}+b_{k} - b_{l}  \geqslant 1$ with $y_{i} = k$ and $(\mathbf{w}_{k} - \mathbf{w}_{l})^T \mathbf{x}_{i}+b_{k} - b_{l} \leqslant -1$ with $y_{i} = l$.
Similar to binary case in Figure \ref{Binary SVM}, the margin of separating between class $k$ and class $l$ is 
\begin{equation}
	Margin\left(C_{k},C_{l}\right) = 2d_{kl} = \frac{2}{\|\mathbf{w}_{k}-\mathbf{w}_{l}\|_{2}}.
\end{equation}
For multi-class SVM, the ideal scheme would be to maximize the margin between class pair. However, due to the mutual limitation between hyperplanes, multi-class SVM cannot do what OvO method does.
Since $c$ vectors are expected to represent $\frac{c(c-1)}{2}$ hyperplanes, there exists mutual restriction between hyperplanes.
Simply put, the hyperplanes that split any three classes are bound to intersect at one point in space, seen in Figure \ref{Piecewise-linear separator}.
This renders the strategy of maximizing the margin between each class pair impractical.

As classification tasks become more complex, researchers are increasingly exploring the semantic similarity between classes \cite{Cacheux2019, Chen2021}.
This arises from the inherent semantic information among classes, resulting in varying classification difficulty,with certain class pairs posing greater challenges to distinguish than others.
For instance, in CIFAR-10 images \cite{Krizhevsky2009}, it is generally more challenging to distinguish cats from dogs than to distinguish birds from trucks.
In handwritten digital images, intuitively, "3" and "5" , "7" and "9" are comparatively similar.
To illustrate this, we perform dimensionality reduction on the widely-used Iris dataset and the results are 
presented in Figure \ref{Iris with LDA} and \ref{Iris with PCA}.
It can be observed that the versicolor and virginica species exhibit higher similarity and pose a greater challenge in practical classification tasks, while the setosa is more distinguishable.
LDA result shows meticulous training is required to obtain a separating plane between versicolor and virginica (solid line), while there is considerable flexibility in separating setosa (dashed line).
Illustrative experiments on other datasets are provided in Appendix A.2.

\subsection{Multi-objective Optimization}
While differences in inter-class similarity are ubiquitous, characterizing them in advance is often challenging.
We address this issue by enlarging the lower bound of the margins,  thereby ensuring the margin between each class pair is not excessively small.
This strategy remains effective in the absence of explicit semantic similarity between classes.
Therefore, the optimization problem characterizing this strategy is
\begin{equation}
	\max_{\mathbf{W} \in \mathbb{R}^{d \times c},\mathbf{b} \in \mathbb{R}^{c}} \min_{k,l \in [c] , k \neq l} Margin\left(C_{k},C_{l}\right)
\end{equation}
Diverging from the previous methods, we consider separating hyperplanes to satisfy linear separability between each class pair.
Hence, the optimization objective can be precisely formulated as follows:
\begin{equation} \label{max_min}
	\begin{aligned}
		&\max_{\mathbf{W} \in \mathbb{R}^{d \times c},\mathbf{b} \in \mathbb{R}^{c}} \min_{k,l \in [c] , k < l}   \frac{1}{ \|\mathbf{w}_{k}-\mathbf{w}_{l}\|_{2}},\\
		{ s.t. }& \begin{cases}
			f_{kl}(\mathbf{x}_{i}) \geqslant 1, &y_{i} = k,\\ f_{kl}(\mathbf{x}_{i}) \leqslant -1, &y_{i} = l,
		\end{cases} \quad  i \in [n].
	\end{aligned}
\end{equation}
It should be pointed out that the lower bound optimization strategy is not sufficient to focus on the all classes, which may lead to nearly equal margins. We next convert problem (\ref{max_min}) into a tractable form with the following lemma.
\begin{lemma}\label{equ_p}
	Assume $ g_{1}(\mathbf{z}),g_{2}(\mathbf{z}),\cdots,g_{m}(\mathbf{z})$ are given real functions.
	The following two optimization problem is equivalent when $p\rightarrow \infty$:
	\begin{equation}
		(a) \max_{\mathbf{z}} \min_{j} g_{j}(\mathbf{z}) \qquad \qquad (b) \min_{\mathbf{z}} \sum_{j = 1}^{m} g^{-p}_{j}(\mathbf{z}).
	\end{equation}
\end{lemma}
This lemma can be interpreted as an objective of multitasking and leads to the following theorem.

\begin{theorem}
	The problem (\ref{max_min}) is equivalent to
	\begin{equation}\label{hard_p}
	\begin{aligned}
		&\min_{\mathbf{W} \in \mathbb{R}^{d \times c},\mathbf{b} \in \mathbb{R}^{c}} \sum_{k=1}^{c-1} \sum_{l=k+1}^{c} \|\mathbf{w}_{k} - \mathbf{w}_{l}\|_{2}^{p},\\
		&{ s.t. } \begin{cases}
			f_{kl}(\mathbf{x}_{i})  \geqslant 1, &y_{i} = k,\\ f_{kl}(\mathbf{x}_{i})  \leqslant -1, &y_{i} = l,
		\end{cases} \quad i \in [n].
	\end{aligned}
\end{equation}
	with the given parameter $p \rightarrow \infty$.
\end{theorem}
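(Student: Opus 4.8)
The plan is to apply Lemma~\ref{equ_p} directly, treating the optimization variables $(\mathbf{W},\mathbf{b})$ jointly as the vector $\mathbf{z}$ and the $\frac{c(c-1)}{2}$ pairwise quantities $1/\|\mathbf{w}_{k}-\mathbf{w}_{l}\|_{2}$, indexed by pairs $k<l$, as the real functions $g_{j}$. First I would record that problems (\ref{max_min}) and (\ref{hard_p}) are optimized over exactly the same feasible set
\[
\mathcal{F}=\{(\mathbf{W},\mathbf{b}): f_{kl}(\mathbf{x}_{i})\geqslant 1 \text{ if } y_{i}=k,\ f_{kl}(\mathbf{x}_{i})\leqslant -1 \text{ if } y_{i}=l,\ k<l,\ i\in[n]\},
\]
so it suffices to show that the two objectives induce the same set of optimizers over $\mathcal{F}$; the constraints play no role in the transformation and simply come along unchanged.

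Next I would check that each $g_{kl}(\mathbf{z}):=1/\|\mathbf{w}_{k}-\mathbf{w}_{l}\|_{2}$ is a well-defined, finite, strictly positive real function on $\mathcal{F}$, which is what Lemma~\ref{equ_p} implicitly requires. Indeed, if $\mathbf{w}_{k}=\mathbf{w}_{l}$ for some pair $k<l$ (with both classes nonempty, as is assumed throughout), then for a sample $\mathbf{x}_{i}$ with $y_{i}=k$ and one with $y_{i}=l$ the constraints collapse to $b_{k}-b_{l}\geqslant 1$ and $b_{k}-b_{l}\leqslant -1$ at once, which is impossible; hence $\|\mathbf{w}_{k}-\mathbf{w}_{l}\|_{2}>0$ everywhere on $\mathcal{F}$ and $g_{kl}$ is a legitimate input to the lemma.

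Then I would invoke Lemma~\ref{equ_p} with $m=\frac{c(c-1)}{2}$ and $\{g_{j}\}=\{g_{kl}\}_{k<l}$, with the variable $\mathbf{z}$ restricted to $\mathcal{F}$: the pointwise limiting argument underlying the lemma (in essence, $(\sum_{j} g_{j}(\mathbf{z})^{-p})^{-1/p}\to\min_{j} g_{j}(\mathbf{z})$ as $p\to\infty$) is unaffected by intersecting the domain with $\mathcal{F}$, so
\[
\max_{\mathbf{z}\in\mathcal{F}}\ \min_{k<l} g_{kl}(\mathbf{z}) \quad\Longleftrightarrow\quad \min_{\mathbf{z}\in\mathcal{F}}\ \sum_{k<l} g_{kl}(\mathbf{z})^{-p}\qquad\text{as } p\to\infty.
\]
Finally, substituting $g_{kl}(\mathbf{z})^{-p}=\|\mathbf{w}_{k}-\mathbf{w}_{l}\|_{2}^{p}$ rewrites the right-hand side as $\min_{(\mathbf{W},\mathbf{b})\in\mathcal{F}}\sum_{k=1}^{c-1}\sum_{l=k+1}^{c}\|\mathbf{w}_{k}-\mathbf{w}_{l}\|_{2}^{p}$, which is precisely (\ref{hard_p}), while the left-hand side is (\ref{max_min}).

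I expect the only genuine subtlety — and hence the step I would state most carefully — to be the positivity of $\|\mathbf{w}_{k}-\mathbf{w}_{l}\|_{2}$ on $\mathcal{F}$ (so that $g_{kl}$ and the limit in Lemma~\ref{equ_p} are meaningful) together with the observation that Lemma~\ref{equ_p}, although phrased for unconstrained $\mathbf{z}$, applies verbatim after restricting to the common feasible set $\mathcal{F}$ shared by the two problems. Everything else is a direct rewriting, so the proof is short once these two points are made explicit.
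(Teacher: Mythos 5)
Your proposal is correct and follows essentially the same route as the paper, which likewise proves the theorem by a direct application of Lemma~\ref{equ_p} with $g_{j} = 1/\|\mathbf{w}_{k}-\mathbf{w}_{l}\|_{2}$ over the common feasible set. Your additional checks (strict positivity of $\|\mathbf{w}_{k}-\mathbf{w}_{l}\|_{2}$ on the feasible set and the validity of the lemma under the constraint restriction) are sound refinements of details the paper leaves implicit.
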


We introduce $p$ into the model as a tunable parameter so that problem (\ref{hard_p}) is not only an approximation for problem (\ref{max_min}), but also takes into account the margins between all class pairs.
An appropriate $p$ is supposed to globally enlarge the margin between each class pair while enhancing the lower bound of the margin.
The tunable range of $p$ encompasses positive real numbers.

Given the ubiquitous prevalence of inseparable instances, slack variable between each class pair can be introduced:
\begin{equation}\label{initial_slack}
	\begin{aligned}
		&\min_{\mathbf{W}, \mathbf{b}}    \sum_{k<l} \sum_{y_{i}\in \{k,l\}} \xi_{ikl} + \lambda \sum_{k<l} \|\mathbf{w}_{k}-\mathbf{w}_{l}\|_{2}^{p},\\
		&{ s.t. } \begin{cases}
			f_{kl}(\mathbf{x}_{i})  \geqslant 1 - \xi_{ikl}, &y_{i} = k,\\
			f_{kl}(\mathbf{x}_{i})  \leqslant -1 + \xi_{ikl}, &y_{i} = l,
		\end{cases}  \quad i \in [n].
	\end{aligned}
\end{equation}
By substituting the slack variables, Eq. (\ref{slack}) can be transformed into an unconstrained optimization problem:
\begin{equation}\label{initial}
		\min_{\mathbf{W}, \mathbf{b}}  \sum_{k<l} \sum_{y_{i}\in \{k,l\}} [1-y_{ikl}f_{kl}(\mathbf{x}_{i}) ]_{+} + \lambda \sum_{k<l} \|\mathbf{w}_{k}-\mathbf{w}_{l}\|_{2}^{p}
\end{equation}
where
\begin{equation}
	y_{ikl} =
	\begin{cases}
		1, & \text{if } y_i = k, \\
		-1, & \text{if } y_i = l.
	\end{cases}
\end{equation}

\subsection{Unique Solution}
Assume $\mathbf{W}$ and $\mathbf{b}$ are the optimal solution of problem (\ref{initial}).	
It is obvious that, for an arbitrary $\bm{\sigma} \in \mathbb{R}^{d}$,  suppose $\mathbf{\tilde{W}} = \mathbf{W} + \bm{\sigma} \mathbf{1}^{T}$, there is $\mathbf{w}_{j}-\mathbf{w}_{k} = \mathbf{\tilde{w}}_{j}-\mathbf{\tilde{w}}_{k}$.
For an arbitrary $\eta \in \mathbb{R}$, suppose $\mathbf{\tilde{b}} = \mathbf{b} + \eta \mathbf{1}$, there is $ \mathbf{b}_{j}-\mathbf{b}_{k} = \mathbf{\tilde{b}}_{j}-\mathbf{\tilde{b}}_{k}$.
Therefore $\mathbf{\tilde{W}}$ and $\mathbf{\tilde{b}}$ is also the optimal solution.
It is expected that the model has a unique optimal solution for a specific dataset that is not contingent upon the initialization.
Without loss of generality, we impose a mean-zero constraint on $\mathbf{W}$ and $\mathbf{b}$ that $\mathbf{W} \mathbf{1} = \mathbf{0}$,$\mathbf{b}^{T} \mathbf{1} = 0$.
Here the constraint is converted into an readily solvable form by the following theorem.

\begin{theorem}\label{th_constraints}
	Assume function $f(\mathbf{Z})$ has the property of column translation invariance, i.e., $\forall \bm{\sigma} \in \mathbb{R}^{n}$, there is $f(\mathbf{Z}) = f(\mathbf{Z} + \bm{\sigma} \mathbf{1}^{T}) $.
	With given $\varepsilon \rightarrow 0$, the following two optimization problems have the same optimal solution
	\begin{equation}\label{column}
		\min_{\mathbf{Z} \in \mathbb{R}^{n \times m}} f(\mathbf{Z}), \quad { s.t. } \sum_{j = 1}^{m} \mathbf{z}_{j} = \mathbf{0},
	\end{equation}
	\begin{equation}\label{penalty_Z}
		\min_{\mathbf{Z} \in \mathbb{R}^{n \times m}} f(\mathbf{Z}) + \varepsilon \|\mathbf{Z}\|_{F}^{2}.
	\end{equation}
\end{theorem}
Considering the above theorem, the mean-zero constraint can be transformed into an additional penalty term. The complete model can be written as 
\begin{equation} \label{obj}
	\begin{aligned}
		&\min_{\mathbf{W}  \in \mathbb{R}^{d \times c}, \mathbf{b} \in \mathbb{R}^c} \sum_{k<l} \sum_{y_{i}\in \{k,l\}} [1-y_{ikl}f_{kl}(\mathbf{x}_{i}) ]_{+} + \\
		&\qquad \qquad \lambda \sum_{k<l} \|\mathbf{w}_{k}-\mathbf{w}_{l}\|_{2}^{p} + \varepsilon \left(\|\mathbf{W}\|_{F}^{2} + \|\mathbf{b}\|_{2}^{2}\right).
	\end{aligned}
\end{equation}
$\lambda$ is a trade-off parameter to balance error tolerance and enlargment of inter-class margin.
For a small $\lambda$, instances that fall within the margin receive a high penalty, whereas for a larger $\lambda$, the penalty  decreases.
The effect of $\varepsilon$ to the model is negligible, just for the purpose of unique solution.
$\varepsilon$ is fixed to $1 \times 10^{-6}$ in the experiments.

Since the computation of inter-class loss is expensive in practice, we simplify it by the following theorem. 
	\begin{theorem}\label{th_Sigma}
		The following equation holds,
		\begin{equation}
			\sum_{k<l} \sum_{y_{i}\in \{k,l\}} \left[1-y_{ikl} f_{kl}(\mathbf{x}_{i}) \right]_{+} = \sum_{i=1}^{n} \sum_{k \neq y_{i}}	[1 - f_{y_{i}k}(\mathbf{x}_{i}) ]_{+}.
		\end{equation}
	\end{theorem}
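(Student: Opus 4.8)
The plan is to prove the identity termwise by reorganizing the double sum on the left so that it is indexed by samples rather than by class pairs, exploiting the antisymmetry of the functions $f_{kl}$.

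First I would record the elementary fact that $f_{lk}(\mathbf{x}) = -f_{kl}(\mathbf{x})$ for all indices $k,l$, which is immediate from the definition $f_{kl}(\mathbf{x}) = (\mathbf{w}_{k}-\mathbf{w}_{l})^{T}\mathbf{x} + b_{k} - b_{l}$. This allows us to write $f_{y_i,k}$ for any $k \neq y_i$ without concern for the ordering convention $k<l$.

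Next, fix an unordered pair $\{k,l\}$ with $k<l$ and a sample $i$ with $y_i \in \{k,l\}$, and evaluate its contribution $[1 - y_{ikl} f_{kl}(\mathbf{x}_i)]_{+}$ in two cases. If $y_i = k$, then $y_{ikl} = 1$, so the contribution is $[1 - f_{kl}(\mathbf{x}_i)]_{+} = [1 - f_{y_i,l}(\mathbf{x}_i)]_{+}$. If $y_i = l$, then $y_{ikl} = -1$, so the contribution is $[1 + f_{kl}(\mathbf{x}_i)]_{+} = [1 - f_{lk}(\mathbf{x}_i)]_{+} = [1 - f_{y_i,k}(\mathbf{x}_i)]_{+}$, using the antisymmetry. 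In both cases the contribution equals $[1 - f_{y_i,m}(\mathbf{x}_i)]_{+}$, where $m$ is the unique element of $\{k,l\}$ distinct from $y_i$.

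Finally, I would interchange the order of summation. On the left-hand side the outer sum ranges over pairs $k<l$ and the inner sum over those samples whose label lies in the pair; equivalently, one may sum first over all samples $i \in [n]$ and then over all pairs containing $y_i$. The pairs containing $y_i$ are exactly $\{y_i,m\}$ for $m \in [c]$ with $m \neq y_i$, and by the previous step each contributes $[1 - f_{y_i,m}(\mathbf{x}_i)]_{+}$. Hence the left-hand side equals $\sum_{i=1}^{n}\sum_{m \neq y_i}[1 - f_{y_i,m}(\mathbf{x}_i)]_{+}$, which is the right-hand side. No step is a genuine obstacle; the only point demanding care is the bookkeeping in the change of summation order, i.e.\ verifying that each (sample, pair) incidence is counted exactly once on each side.
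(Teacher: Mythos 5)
Your proposal is correct and follows essentially the same route as the paper's proof: a case split on whether $y_i$ equals $k$ or $l$ within the pair, using the antisymmetry $f_{lk}=-f_{kl}$, and then a reorganization of the double sum so it is indexed by samples and the remaining class. The paper carries out the reindexing by explicitly swapping the summation indices $k$ and $l$, while you phrase it as counting each (sample, pair) incidence once, but the substance is identical.
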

	According to Theorem \ref{th_Sigma}, problem (\ref{obj}) is equivalent to
	\begin{equation} \label{i_obj}
		\begin{aligned}
			&\min_{\mathbf{W}  \in \mathbb{R}^{d \times c}, \mathbf{b} \in \mathbb{R}^c} \sum_{i=1}^{n} \sum_{k \neq y_{i}}	[1 - f_{y_{i}k}(\mathbf{x}_{i}) ]_{+} +  \\
			&\qquad \qquad \lambda \sum_{k<l} \|\mathbf{w}_{k}-\mathbf{w}_{l}\|_{2}^{p} + \varepsilon \left(\|\mathbf{W}\|_{F}^{2} + \|\mathbf{b}\|_{2}^{2}\right).
		\end{aligned}
	\end{equation}
	In this way, we transform our loss into the summation of the individual sample losses during optimization.
	
\subsection{Smoothness and Convexity}	
	Our method is intended to be applicable to gradient optimization, so the non-smooth hinge loss needs to be dealt with.
	We employ the following straightforward function to approximate $[x]_{+}$:
	\begin{equation}\label{smooth}
		g(x) = \frac{x + \sqrt{x^2 + \delta^2}}{2},\quad (\delta > 0).
	\end{equation}
	\begin{lemma}
		$g(x)$ satisfies $0 \leqslant g(x) - [x]_{+}  \leqslant \frac{\delta}{2}$.
		When $\delta \rightarrow 0$, $g(x) \rightarrow [x]_{+}$.
	\end{lemma}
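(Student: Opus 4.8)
The plan is to exploit the elementary identity $[x]_{+} = \tfrac{1}{2}(x + |x|)$, which lets us rewrite $g(x) - [x]_{+}$ in a form that no longer requires splitting into the cases $x \geqslant 0$ and $x < 0$. Substituting this identity gives $g(x) - [x]_{+} = \tfrac{1}{2}\bigl(\sqrt{x^{2}+\delta^{2}} - |x|\bigr)$, so the entire claim reduces to the two-sided estimate $0 \leqslant \sqrt{x^{2}+\delta^{2}} - |x| \leqslant \delta$, valid for every $x \in \mathbb{R}$ and every $\delta > 0$.

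First I would establish the lower bound: since $x^{2} + \delta^{2} \geqslant x^{2}$ and the square root is monotone, $\sqrt{x^{2}+\delta^{2}} \geqslant \sqrt{x^{2}} = |x|$, hence $g(x) \geqslant [x]_{+}$. Next, for the upper bound I would rationalize the difference, $\sqrt{x^{2}+\delta^{2}} - |x| = \dfrac{\delta^{2}}{\sqrt{x^{2}+\delta^{2}} + |x|}$, and observe that the denominator is at least $\sqrt{x^{2}+\delta^{2}} \geqslant \sqrt{\delta^{2}} = \delta$, so the right-hand side is at most $\delta^{2}/\delta = \delta$. Dividing by $2$ yields $g(x) - [x]_{+} \leqslant \delta/2$.

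Finally, the limiting statement follows immediately from the squeeze just obtained: $0 \leqslant g(x) - [x]_{+} \leqslant \delta/2 \to 0$ as $\delta \to 0$, so $g(x) \to [x]_{+}$ (indeed uniformly in $x$). There is essentially no hard step here; the only point worth flagging is that one should resist a brute-force case analysis on the sign of $x$ — the identity $[x]_{+} = \tfrac12(x+|x|)$ together with the rationalization trick compresses the argument to a few lines, whereas the case split, though equally valid, is more tedious and obscures why the bound is exactly $\delta/2$.
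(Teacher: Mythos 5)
Your proof is correct, and it takes a route that differs from the paper's. The paper's argument sets $h(x)=g(x)-[x]_{+}$ and argues by a case split on the sign of $x$: it asserts that $h$ is monotonically increasing for $x<0$ and decreasing for $x>0$, so $h$ peaks at $h(0)=\delta/2$, from which both bounds follow; the monotonicity claims are left unverified (one would check the sign of $h'(x)=\tfrac12\bigl(x/\sqrt{x^{2}+\delta^{2}}\mp 1\bigr)$ on each half-line). You instead avoid any case analysis by writing $[x]_{+}=\tfrac12(x+|x|)$, reducing everything to the single estimate $0\leqslant\sqrt{x^{2}+\delta^{2}}-|x|\leqslant\delta$, which you obtain by monotonicity of the square root and by rationalizing, $\sqrt{x^{2}+\delta^{2}}-|x|=\delta^{2}/\bigl(\sqrt{x^{2}+\delta^{2}}+|x|\bigr)\leqslant\delta$. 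Your version is purely algebraic (no derivative or monotonicity facts needed), fills in the detail the paper leaves implicit, and makes explicit that the convergence $g\to[x]_{+}$ as $\delta\to 0$ is uniform in $x$; the paper's calculus-style argument, once the monotonicity is actually checked, additionally pinpoints that the bound $\delta/2$ is attained exactly at $x=0$, a fact your rationalization also exposes since the denominator equals $\delta$ precisely there. Both are valid; yours is the tighter write-up.
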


	The closeness between two functions exclusively depends on the proximity factor $\delta$.
By replacing the hinge loss, our overall model can be formulated as
	\begin{equation} \label{smooth_obj}
		\begin{aligned}
			&\min_{\mathbf{W}  \in \mathbb{R}^{d \times c}, \mathbf{b} \in \mathbb{R}^c} \sum_{i=1}^{n} \sum_{k \neq y_{i}}	\frac{\gamma_{ik} + \sqrt{\gamma_{ik}^2 + \delta^2}}{2} +  \\
			&\qquad \qquad \lambda \sum_{k<l} \|\mathbf{w}_{k}-\mathbf{w}_{l}\|_{2}^{p} + \varepsilon \left(\|\mathbf{W}\|_{F}^{2} + \|\mathbf{b}\|_{2}^{2}\right).
		\end{aligned}
	\end{equation}
	where $\gamma_{ik} = 1-f_{y_{i}k}(\mathbf{x}_{i})$.
The decision function between class $k$ and class $l$ is $f_{kl}(\mathbf{x}_{i}) = (\mathbf{w}_{k}-\mathbf{w}_{l})^T \mathbf{x}_i+b_{k} -b_{l}$.
It is worth noting that the alterations of $\varepsilon$ and $\delta$ wield negligible influence on the model.
	The only parameters that affect the model performance are actually $p$ and $\lambda$.
	
\begin{theorem}
	Problem (\ref{smooth_obj}) is strictly convex.
\end{theorem}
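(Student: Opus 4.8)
The plan is to split the objective in (\ref{smooth_obj}) into three parts and verify convexity of each, with the strict convexity coming entirely from the last part. Write $F(\mathbf{W},\mathbf{b}) = L(\mathbf{W},\mathbf{b}) + \lambda R(\mathbf{W},\mathbf{b}) + \varepsilon Q(\mathbf{W},\mathbf{b})$, where $L = \sum_{i=1}^{n}\sum_{k\neq y_i} g(\gamma_{ik})$ with $g(x) = \tfrac{1}{2}\big(x+\sqrt{x^2+\delta^2}\big)$ and $\gamma_{ik} = 1 - (\mathbf{w}_{y_i}-\mathbf{w}_{k})^T\mathbf{x}_i - b_{y_i} + b_{k}$, where $R = \sum_{k<l}\|\mathbf{w}_{k}-\mathbf{w}_{l}\|_2^{p}$, and where $Q = \|\mathbf{W}\|_F^2 + \|\mathbf{b}\|_2^2$.

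First I would handle $L$. A direct computation gives $g'(x) = \tfrac{1}{2}\big(1 + x/\sqrt{x^2+\delta^2}\big)$ and $g''(x) = \delta^2 / \big(2(x^2+\delta^2)^{3/2}\big) > 0$ for $\delta>0$, so $g$ is convex on $\mathbb{R}$. Since each $\gamma_{ik}$ is an affine function of $(\mathbf{W},\mathbf{b})$, the composition $g(\gamma_{ik})$ is convex (a convex function precomposed with an affine map), and a finite nonnegative sum of convex functions is convex, so $L$ is convex. Next, for $R$: the Euclidean norm $\mathbf{t}\mapsto\|\mathbf{t}\|_2$ is convex, the scalar map $s\mapsto s^{p}$ is convex and nondecreasing on $[0,\infty)$ when $p\geq 1$, and $(\mathbf{W},\mathbf{b})\mapsto \mathbf{w}_{k}-\mathbf{w}_{l}$ is linear; composing these shows each summand of $R$ is convex, so $\lambda R$ is convex for $\lambda\geq 0$. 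Finally, $Q$ is a quadratic form with Hessian $2\mathbf{I}\succ 0$, hence strictly convex, and so is $\varepsilon Q$ for $\varepsilon>0$. Since the sum of a convex function and a strictly convex function is strictly convex, $F = (L+\lambda R) + \varepsilon Q$ is strictly convex, which is the claim.

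The subtle point — more a matter of care than a real obstacle — is that neither $L$ nor $R$ is strictly convex on its own: both depend on $(\mathbf{W},\mathbf{b})$ only through the differences $\mathbf{w}_k-\mathbf{w}_l$ and $b_k-b_l$, hence are invariant under the translations $\mathbf{W}\mapsto \mathbf{W}+\bm{\sigma}\mathbf{1}^{T}$, $\mathbf{b}\mapsto\mathbf{b}+\eta\mathbf{1}$ noted before Theorem~\ref{th_constraints}. So strict convexity genuinely hinges on the $\varepsilon$-penalty: it both removes this translational degeneracy and (consistently with Theorem~\ref{th_constraints}) pins down a unique minimizer. One should also record that convexity of $R$ uses $p\geq 1$; for $0<p<1$ the map $s\mapsto s^{p}$ is concave and the summands of $R$ need not be convex, so the theorem is to be read under the restriction $p\geq 1$.
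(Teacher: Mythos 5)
Your proof is correct, but it locates the source of strict convexity differently from the paper. The paper's proof argues that $\|\mathbf{x}\|_2^p$ with $p\in(1,\infty)$ is totally (hence strictly) convex in the uniformly convex space $\mathbb{R}^n$ and then asserts that the regularizer $\sum_{k<l}\|\mathbf{w}_k-\mathbf{w}_l\|_2^p$ is strictly convex in $(\mathbf{W},\mathbf{b})$, treating the $\varepsilon$-term as merely convex; you do the opposite, establishing only convexity of the smoothed hinge term (via $g''(x)=\delta^2/(2(x^2+\delta^2)^{3/2})>0$ and affine precomposition) and of the regularizer (via the monotone--convex composition rule with $p\geq 1$), and drawing strictness solely from the quadratic penalty $\varepsilon\left(\|\mathbf{W}\|_F^2+\|\mathbf{b}\|_2^2\right)$. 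Your placement is in fact the sounder one: as you observe, the regularizer is invariant under $\mathbf{W}\mapsto\mathbf{W}+\bm{\sigma}\mathbf{1}^T$ and does not involve $\mathbf{b}$ at all, so it cannot be strictly convex jointly in $(\mathbf{W},\mathbf{b})$, and the displayed inequality in the paper's argument is only the non-strict convexity inequality; the $\varepsilon$-term is what removes the translational degeneracy, consistently with Theorem~\ref{th_constraints}. Your route is also more elementary (no appeal to total convexity in uniformly convex Banach spaces) and covers $p=1$, whereas the paper's argument for the regularizer needs $p>1$. The caveat you flag is real: since the paper elsewhere allows all positive $p$, the theorem as stated should be read under $p\geq 1$ (for $0<p<1$ the map $s\mapsto s^p$ is concave and the summands of the regularizer need not be convex), a restriction the paper's own proof implicitly imposes as well.
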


Therefore, we adopt Adam \cite{Kingma2014} optimization strategy to update $\mathbf{W}$ and $\mathbf{b}$ in problem (\ref{smooth_obj}), which enables our method to be applied to the deep model.
For the sample $(\mathbf{x}_{i} ,y_{i})$ in the train set, there are two cases when we take derivative of the objective function Eq. (\ref{smooth_obj}) with respective to $\mathbf{W}$ by column.
If $k = y_{i}$ , the derivative with respect to $\mathbf{w}_{k}$ at iteration $t$ is
\begin{equation}\label{case1}
	\begin{aligned}
		\nabla_{k}^{(t)} = & -  \sum_{l \neq k}  \frac{\gamma_{il} + \sqrt{\gamma_{il}^2 + \delta^2}}{2 \sqrt{\gamma_{il}^2 + \delta^2}} \mathbf{x}_{i} + 2 \varepsilon \mathbf{w}_{k} + \\
		&\sum_{l \neq k}      \lambda p\|\mathbf{w}_{k}-\mathbf{w}_{l}\|_{2}^{p-2}  \left(\mathbf{w}_{k}-\mathbf{w}_l\right).
	\end{aligned}
\end{equation}

If $k \neq y_{i}$ , the derivative goes to:
	\begin{equation}\label{case2}
		\begin{aligned}
			\nabla_{k}^{(t)} = &\frac{\gamma_{ik} + \sqrt{\gamma_{ik}^2 + \delta^2}}{2 \sqrt{\gamma_{ik}^2 + \delta^2}} \mathbf{x}_{i} + 2 \varepsilon \mathbf{w}_k +  \\
			&  \sum_{l \neq k}      \lambda p\|\mathbf{w}_{k}-\mathbf{w}_{l}\|_{2}^{p-2}  \left(\mathbf{w}_k-\mathbf{w}_l\right).
		\end{aligned}
	\end{equation}
	Note that the optimization is performed for $\mathbf{W}$ as a whole, independent of the order in which the columns are updated.
The convexity of the objective function ensures its convergence to the global optimum through the employment of Adam.
After the training process, test sample $\mathbf{x}$ can be classified through $y = \emph{arg} \max_{k} \mathbf{w}_{k}^T \mathbf{x}+b_{k}$.

\subsection{Connection to $\ell_{2}$-regularizer}

One might consider replacing the regularizer with  the form of $\ell_{2}$-norm sum: $\sum_{k=1}^{c} \|\mathbf{w}_{k}\|^{2}_{2}$, like the regularization of most classifiers \cite{Zhang2003, Raman2019}.
The relationship between the two is summarized as follows.
\begin{lemma}\label{variance}
	The following equation holds,
	\begin{equation}
		\sum_{k=1}^{c-1} \sum_{l=k+1}^{c} \|\mathbf{w}_{k} - \mathbf{w}_{l}\|_{2}^{2} = c \sum_{k=1}^{c} \|\mathbf{w}_{k} - \frac{1}{c} \sum_{l=1}^{c}\mathbf{w}_{l}\|_{2}^{2}.
	\end{equation}
\end{lemma}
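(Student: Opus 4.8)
The plan is to prove the identity by a direct expansion of both sides using the polarization identity $\|\mathbf{a}-\mathbf{b}\|_2^2 = \|\mathbf{a}\|_2^2 - 2\mathbf{a}^T\mathbf{b} + \|\mathbf{b}\|_2^2$, and then matching the two resulting expressions term by term. To keep the bookkeeping light I would introduce the abbreviations $\bar{\mathbf{w}} = \frac{1}{c}\sum_{l=1}^{c}\mathbf{w}_{l}$ for the centroid, $\mathbf{t} = \sum_{k=1}^{c}\mathbf{w}_{k} = c\,\bar{\mathbf{w}}$, and $S = \sum_{k=1}^{c}\|\mathbf{w}_{k}\|_2^2$, so that both sides will be shown to equal $cS - \|\mathbf{t}\|_2^2$.

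First I would handle the left-hand side. Since $\|\mathbf{w}_k-\mathbf{w}_l\|_2^2$ is symmetric in $k,l$ and vanishes on the diagonal, the ordered sum over $k<l$ equals $\tfrac12$ of the full double sum, i.e.\ $\sum_{k<l}\|\mathbf{w}_k-\mathbf{w}_l\|_2^2 = \tfrac12\sum_{k=1}^{c}\sum_{l=1}^{c}\|\mathbf{w}_k-\mathbf{w}_l\|_2^2$. Expanding the summand with the polarization identity and summing over both indices turns the cross term into $-2\|\mathbf{t}\|_2^2$ and the two square-norm terms into $2cS$, which gives $cS - \|\mathbf{t}\|_2^2$ after dividing by two. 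Next I would expand the right-hand side directly: $c\sum_{k=1}^{c}\|\mathbf{w}_k-\bar{\mathbf{w}}\|_2^2 = c\big(S - 2\bar{\mathbf{w}}^T\mathbf{t} + c\|\bar{\mathbf{w}}\|_2^2\big)$, and substituting $\mathbf{t} = c\bar{\mathbf{w}}$ collapses the last two terms to $-c^2\|\bar{\mathbf{w}}\|_2^2 = -\|\mathbf{t}\|_2^2$, again leaving $cS - \|\mathbf{t}\|_2^2$. Comparing the two evaluations closes the argument.

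There is no genuine obstacle here; the only point that requires a moment of care is the symmetrization step, namely the factor $\tfrac12$ incurred when passing from the unordered pairs $k<l$ to all ordered pairs $(k,l)$ and the observation that the diagonal terms $k=l$ contribute zero. An equivalent route, which I would mention only as a remark, is to recognize the claim as the standard second-moment decomposition: the mean of the squared pairwise distances of a finite set of points equals twice the mean squared distance of those points to their centroid, which is exactly the stated equality after multiplying through by $\binom{c}{2}$ and $c$ respectively.
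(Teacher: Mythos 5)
Your proof is correct and follows essentially the same route as the paper's: both symmetrize the ordered-pair sum into $\tfrac12\sum_{k,l}$ and expand with the polarization identity. The only difference is organizational — you evaluate both sides independently and meet at $cS-\|\mathbf{t}\|_2^2$, whereas the paper rewrites the left side into the right by completing the square — which changes nothing of substance.
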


According to Theorem \ref{th_constraints}, there is $\sum_{l=1}^{c}\mathbf{w}_{l} = \mathbf{0}$ when $\mathbf{W}$ is taken to be optimal in problem (\ref{smooth_obj}).
So we draw the conclusion that with $p = 2$, problem (\ref{smooth_obj}) and the problem replacing the regular term  with $\ell_{2}$ regularizer have the same optimal solution.
\begin{theorem}\label{theorem p=2}
	The optimization problem (\ref{i_obj}) with $p=2$ has the same optimal solution as
	\begin{equation} \label{sq_obj}
		\min_{\mathbf{W}, \mathbf{b} } \sum_{i=1}^{n} \sum_{k \neq y_{i}}	[1 - f_{y_{i}k}(\mathbf{x}_{i}) ]_{+} +   \lambda c \sum_{k=1}^c\left\|\mathbf{w}_{k}\right\|_2^2  + \varepsilon \|\mathbf{b}\|_{2}^{2}.
\end{equation}
\end{theorem}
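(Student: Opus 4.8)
The plan is to reduce both problems, via Theorem~\ref{th_constraints} and Lemma~\ref{variance}, to one and the same mean-zero-constrained problem; since each is strictly convex and hence has a unique minimizer, this common reduction forces the two optimal solutions to coincide. I would start from the observation that (\ref{i_obj}) with $p=2$ and (\ref{sq_obj}) share the hinge term $L(\mathbf{W},\mathbf{b}) := \sum_{i=1}^{n}\sum_{k\neq y_{i}}[1-f_{y_{i}k}(\mathbf{x}_{i})]_{+}$, which depends on $\mathbf{W}$ only through the differences $\mathbf{w}_{k}-\mathbf{w}_{l}$ and on $\mathbf{b}$ only through $b_{k}-b_{l}$; thus $L$ is invariant under the column translation $(\mathbf{W},\mathbf{b})\mapsto(\mathbf{W}+\bm{\sigma}\mathbf{1}^{T},\mathbf{b}+\eta\mathbf{1})$, and so is $\sum_{k<l}\|\mathbf{w}_{k}-\mathbf{w}_{l}\|_{2}^{2}$. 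Stacking $\mathbf{Z}=\begin{bmatrix}\mathbf{W}\\ \mathbf{b}^{T}\end{bmatrix}$, the objective of (\ref{i_obj}) with $p=2$ has the form $f(\mathbf{Z})+\varepsilon\|\mathbf{Z}\|_{F}^{2}$ with $f$ column-translation invariant, so Theorem~\ref{th_constraints} identifies its optimal solution with that of $\min f(\mathbf{Z})$ subject to $\mathbf{Z}\mathbf{1}=\mathbf{0}$, i.e. subject to $\mathbf{W}\mathbf{1}=\mathbf{0}$ and $\mathbf{b}^{T}\mathbf{1}=0$. On the set $\mathbf{W}\mathbf{1}=\mathbf{0}$ one has $\frac{1}{c}\sum_{l}\mathbf{w}_{l}=\mathbf{0}$, so Lemma~\ref{variance} rewrites $\sum_{k<l}\|\mathbf{w}_{k}-\mathbf{w}_{l}\|_{2}^{2}=c\sum_{k}\|\mathbf{w}_{k}\|_{2}^{2}=c\|\mathbf{W}\|_{F}^{2}$, and hence (\ref{i_obj}) with $p=2$ has the same optimal solution as
\[
	\min_{\mathbf{W},\mathbf{b}}\ L(\mathbf{W},\mathbf{b})+\lambda c\|\mathbf{W}\|_{F}^{2}\quad\text{s.t.}\quad \mathbf{W}\mathbf{1}=\mathbf{0},\ \mathbf{b}^{T}\mathbf{1}=0 .
\]

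Next I would show (\ref{sq_obj}) reduces to the very same problem. Its objective is $L(\mathbf{W},\mathbf{b})+\lambda c\|\mathbf{W}\|_{F}^{2}+\varepsilon\|\mathbf{b}\|_{2}^{2}$; for fixed $\mathbf{W}$ the $\mathbf{b}$-dependent part is $L(\mathbf{W},\mathbf{b})+\varepsilon\|\mathbf{b}\|_{2}^{2}$ with $L(\mathbf{W},\cdot)$ translation invariant, so applying Theorem~\ref{th_constraints} to the $1\times c$ matrix $\mathbf{b}^{T}$ forces $\mathbf{b}^{T}\mathbf{1}=0$ at the optimum. The remaining constraint $\mathbf{W}\mathbf{1}=\mathbf{0}$ does not come from an $\varepsilon$-penalty but directly from the genuine term $\lambda c\|\mathbf{W}\|_{F}^{2}$: replacing $\mathbf{W}$ by $\mathbf{W}-\bar{\mathbf{w}}\mathbf{1}^{T}$ with $\bar{\mathbf{w}}=\frac{1}{c}\mathbf{W}\mathbf{1}$ leaves $L$ unchanged while turning $\|\mathbf{W}\|_{F}^{2}$ into $\|\mathbf{W}\|_{F}^{2}-\frac{1}{c}\|\mathbf{W}\mathbf{1}\|_{2}^{2}$, which is strictly smaller unless $\mathbf{W}\mathbf{1}=\mathbf{0}$; hence every minimizer of (\ref{sq_obj}) already satisfies $\mathbf{W}\mathbf{1}=\mathbf{0}$, and (\ref{sq_obj}) reduces to the displayed problem above as well.

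Finally I would observe that adding $\varepsilon(\|\mathbf{W}\|_{F}^{2}+\|\mathbf{b}\|_{2}^{2})$, respectively $\lambda c\|\mathbf{W}\|_{F}^{2}+\varepsilon\|\mathbf{b}\|_{2}^{2}$, to the convex $L$ plus a convex regularizer makes both (\ref{i_obj}) with $p=2$ and (\ref{sq_obj}) strictly convex, so each has a unique minimizer, and by the two reductions these unique minimizers coincide. The main obstacle is the bookkeeping of the second step: since Theorem~\ref{th_constraints} is stated with a vanishing penalty, I must apply it only to the $\varepsilon\|\mathbf{b}\|_{2}^{2}$ term and derive $\mathbf{W}\mathbf{1}=\mathbf{0}$ separately from the non-vanishing coefficient $\lambda c>0$; being sloppy about this separation of roles, or about the minimal-norm reading of ``optimal solution'' in Theorem~\ref{th_constraints} when $L$ is not strictly convex in $\mathbf{b}$, is where the argument would break.
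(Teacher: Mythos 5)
Your proposal is correct and follows essentially the same route as the paper: the mean-zero condition at the optimum comes from Theorem~\ref{th_constraints}, and Lemma~\ref{variance} converts the pairwise-difference regularizer into $c\sum_{k}\|\mathbf{w}_{k}\|_{2}^{2}$. The only difference is bookkeeping --- you reduce both objectives to a common mean-zero-constrained problem (with a direct centering argument giving $\mathbf{W}\mathbf{1}=\mathbf{0}$ for the $\ell_{2}$-regularized problem and strict convexity for uniqueness), whereas the paper simply absorbs the residual $\varepsilon\|\mathbf{W}\|_{F}^{2}$ term via $\varepsilon\rightarrow 0$; your treatment is, if anything, slightly more careful on the same argument.
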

According to Theorem \ref{theorem p=2}, Eq. (\ref{Weston}) can be regarded as a special case of our method when $p = 2$.
The only difference is that Eq. (\ref{Weston}) has no constraint on $\mathbf{b}$, which leads to non-unique solutions.
However, the naive $\ell_{2}$ regularizer does not have a clear geometric meaning.
As will be demonstrated in the experiments, $p = 2$ is not optimal in a wide range of cases.
In addition, $\ell_{1}$ regularizer is regarded as an effective way to preform simultaneous variable selection and classification \cite{Bradley1998, Guyon2003}.
It should be pointed out that due to the superiority of the $\ell_{1}$ norm in high-dimensional case (gene, document), our proposed regularizer can be directly replaced by the $\ell_{1}$ regularizer $\sum_{k<l} \|\mathbf{w}_{k} - \mathbf{w}_{l}\|_{1}^{p}$, which will not be expanded in detail here.

\renewcommand\arraystretch{1.7}
\begin{table*}[t]\large
	\vspace{-2mm}
	\begin{center}
		\caption{Average performance (w.r.t.  Accuracy) on test set over $10$ runs by different methods.}
		\label{table:Accuracy}
		\resizebox{\linewidth}{26mm}{
			\begin{tabular}{ccccccccc}
				\toprule[2pt]
				Methods   & OvR          & OvO           & Crammer          & M-SVM          & Top-k                & Multi-LR	& SMLR                   & M$^3$SVM                  \\ \midrule
				\slshape{Cornell}      & 0.812 $\pm$ 0.065 & 0.845 $\pm$ 0.028 & 0.792 $\pm$ 0.015 & 0.755 $\pm$ 0.031 & 0.826 $\pm$ 0.016       & 0.783 $\pm$ 0.026  	& 0.803 $\pm$ 0.009        & \textbf{0.865 $\pm$ 0.013}  \\
				\slshape{ISOLET}        & 0.866 $\pm$ 0.046 & 0.942 $\pm$ 0.004 & 0.922 $\pm$ 0.042 & 0.910 $\pm$ 0.004 & 0.904 $\pm$ 0.013     & 0.940 $\pm$ 0.004  & 0.926 $\pm$ 0.008  & \textbf{0.945 $\pm$ 0.002} \\
				\slshape{HHAR}      & 0.845 $\pm$ 0.059 & 0.966 $\pm$ 0.014 & 0.931 $\pm$ 0.039 & 0.953 $\pm$ 0.008 & 0.970 $\pm$ 0.007       & 0.948 $\pm$ 0.010    & 0.952 $\pm$ 0.012      & \textbf{0.981 $\pm$ 0.004} \\
				\slshape{USPS}       &0.887 $\pm$ 0.042 & 0.898 $\pm$ 0.005 & 0.769 $\pm$ 0.047 & 0.910 $\pm$ 0.018 & 0.825 $\pm$ 0.009 & 0.932 $\pm$ 0.002    & 0.937 $\pm$ 0.004      & \textbf{0.956 $\pm$ 0.011} \\
				\slshape{ORL}   & 0.919 $\pm$ 0.021 & \textbf{0.975 $\pm$ 0.000} & 0.879 $\pm$ 0.018 & 0.790 $\pm$ 0.034 & 0.879 $\pm$ 0.028 & 0.925 $\pm$ 0.000 & 0.925 $\pm$ 0.000 &\textbf{0.975 $\pm$ 0.000}   \\
				\slshape{Dermatology} & 0.939 $\pm$ 0.009 & 0.971 $\pm$ 0.003 & 0.933 $\pm$ 0.015 & 0.868 $\pm$ 0.031 & 0.891 $\pm$ 0.047       & 0.965 $\pm$ 0.007    & 0.965 $\pm$ 0.010      & \textbf{0.988 $\pm$ 0.001} \\
				\slshape{Vehicle}       & 0.794 $\pm$ 0.016 & 0.756 $\pm$ 0.024 & 0.757 $\pm$ 0.021 & 0.762 $\pm$ 0.019 & 0.778 $\pm$ 0.007       & 0.780 $\pm$ 0.010 & 0.771 $\pm$ 0.020 & \textbf{0.800 $\pm$ 0.011}   \\
				\slshape{Glass}        & 0.656 $\pm$ 0.075 & 0.685 $\pm$ 0.008 & 0.594 $\pm$ 0.045 & 0.629 $\pm$ 0.044 & 0.674 $\pm$ 0.025       & 0.664 $\pm$ 0.018  & 0.679 $\pm$ 0.015 & \textbf{0.744 $\pm$ 0.007}  \\ \midrule
			\end{tabular}
		}
	\end{center}
\vspace{-2mm}
\end{table*}

\subsection{Structural Risk Minimization}
Subsequently, we  elucidate the explanation of the prposed M$^3$SVM from structural risk minimization (SRM) inductive principle.
Following the fundamental assumptions in statistical learning theory, there is a canonical but unknown joint distribution on $\mathcal{X} \times \mathcal{C}$.
The goal of learning is to select a function $f: \mathbf{x} \rightarrow \mathbb{R}^{c}$ (or in terms of probability $f: \mathbf{x} \rightarrow [0,1]^{c}$),  among from a specific design functions space   $\mathcal{F}$, such that its error on the joint distribution is minimized.
The discriminant function for the classification problem is typically in the form of $g(\mathbf{x}) = \max_{j} f_{j}(\mathbf{x})$.
The risk of the classification task can be written as
$
	\mathcal{R}(f)  = \int \mathbb{I}(g(\mathbf{x}) \neq y) d P(\mathbf{x},y).
$
Since $P(\mathbf{x},y)$ is unknown, one shallow solution is to minimize empirical risk
$	\mathcal{R}_{e} (f) = \frac{1}{n} \sum_{i}^{n} \mathbb{I}(g(\mathbf{x}) \neq y)$ on certain samples.
SRM inductive principle is a more recognized technique, which is based on the theory that for any $f \in \mathcal{F}$ with a probability of at least $1-\rho$, the risk meets
$
	\mathcal{R} \leqslant \mathcal{R}_{e} + \Omega(\mathcal{F},\rho,n),
$
where $\Omega$ is called guaranteed risk and can be expressed in the form of VC dimension \cite{Vapnik2015}, Rademacher complexity \cite{Bartlett2002}, etc.
The learnable basis of the binary SVM \cite{Schiilkop1995} is to reduce the risk of VC dimensional form by minimizing $\|\mathbf{w}\|_{2}$.
For multi-class SVM model, $f$ can be set as a multi-valued function $f:\mathbf{x} \rightarrow \mathbf{w}_{k} \mathbf{x} + b_{k},k \in [c].$
Note that different $f$ in $\mathcal{F}$ are only differ in the parameters $\mathbf{W}$ and $\mathbf{b}$ in this case.

The theory of generalized risk derived from Uniform Strong Law of Large Numbers \cite{Peres2008, Guermeur2002}  implement the SRM
inductive principle by delineating a  compromise between training performance and complexity.
It can be briefly summarized as follows.
For multi-class SVM model, minimizing its guaranteed risk can be approximately equated to minimizing a norm of the linear operator $\|T(f)\|_{\omega}$, where functional $T: \mathcal{F} \rightarrow \mathbf{M}_{2n \times c(c-1)/2}$ mapping a function to a real matrix.
The norm  is chosen in accordance with the choice of the pseudo-metric on $\mathcal{F}$, for instance, $\forall(f, \bar{f}) \in \mathcal{F}^2$,
\begin{equation}
	\begin{aligned}
		\omega_{l_{\infty}, l_1}(f, \bar{f})&=\max _{x} \sum_{k<l}\left|f_k(x)-\bar{f}_k(x)\right|,\\
		\omega_{l_{\infty}, l_{\infty}}(f, \bar{f})&=\max _{x} \max_{k < l} \left|f_k(x)-\bar{f}_k(x)\right|,
	\end{aligned}
\end{equation}
which correspond to matrix norm $\|\mathbf{M}\|_{l_{\infty}, l_1}$ and $\|\mathbf{M}\|_{l_{\infty}, l_{\infty}}$ respectively.
Define $T(f) = [t^{(1)}, \cdots, t^{(2n)}]^{T}$, where $t^{(i)}(f) = [(\mathbf{w}_1 - \mathbf{w}_2)^{T} \mathbf{x}_{i}, \cdots,(\mathbf{w}_{k} -\mathbf{w}_{l})^{T} \mathbf{x}_{i}, \cdots, (\mathbf{w}_{c-1} -\mathbf{w}_{c})^{T} \mathbf{x}_{i}]^{T} \in \mathbf{M}_{1 \times c(c-1)/2}$.
The matrix norm of $T(f)$ provides a tight upper bound for the guaranteed risk $\Omega$.
In M$^3$SVM, the infimum of margin bears a close relationship with the crude upper bound of  the norm of $T(f)$.
It can be put down to the following theorem.

\begin{theorem}
	Let $\mathcal{F}$ be the multivariate linear model from $\mathcal{X}$ into $\mathbb{R}^{c}$.
	$\mathcal{F}$ are endowed  with the Euclidean norm.
	If $\mathcal{X}$ is included in a ball of radius $\Lambda_{\mathcal{X}}$ about the origin, then $\forall f \in \mathcal{F}$(parametrized by $\mathbf{W}$ and $\mathbf{b}$) the following bound holds:
	\begin{equation}
		\|T(f)\|_{l_{\infty}, l_{p}} \leqslant  \Lambda_{\mathcal{X}} \left(\sum_{k<l}\|\mathbf{w}_{k} - \mathbf{w}_{l}\|_{2}^{p}\right)^{\frac{1}{p}}.
	\end{equation}
\end{theorem}

Our regularizer optimizes an upper bound on the guaranteed risk derived from covering numbers \cite{Guermeur2002}.
Combining the methods described in the previous section, when $p\rightarrow\infty$, minimizing $\sum_{k=1}^{c-1} \sum_{l=k+1}^{c} \|\mathbf{w}_{k} - \mathbf{w}_{l}\|_{2}^{p}$ is equivalent to maximizing $\inf_{k<l} Margin\left(C_{k},C_{l}\right)$, which reduces the upper bound of guaranteed risk.
Moreover, this method is essentially adapted to multiple metrics of $\mathcal{F}$, whereas the previous methods correspond to a special case when $p = 2$.
We draw the conclusion that our method is interpretable in terms of SRM and it is altogether possible to improve the generalization performance (i.e. reduce the guaranteed risk $\Omega$) by maximizing the minimum margin.

\subsection{Extension to Softmax Loss}
Our proposed method exhibits versatility, extending its applicability to other linear classifiers, such as logistic regression (LR).
By altering the misclassification loss, our method acts as a regularized softmax loss and can be applied to the last layer of the neural network.
As it can learn embeddings with large inter-class margins, the proposed loss guides the learning of network parameters through backpropagation. 
Geometric interpretation and discussions are relegated to Appendix A.3.

\section{Experiments}
In this section, we empirically evaluate the effectiveness of our method on multi-class classification task and analyze the experimental results.

\subsection{Experiment Settings}
The datasets chosen for evaluation include Cornell, ISOLET, HHAR, USPS, ORL, Dermatology, Vehicle and Glass, which represent diverse data types (including image, speech, document, etc).
They can all be found at  \footnote{\url{https://archive.ics.uci.edu/ml/datasets.php}}.
The details of the datasets are described in Appendix A.4.
Our method is compared with six linear multi-classification methods, including OvR\cite{Vapnik1998}, OvO \cite{Hsu2002}, Crammer \cite{Crammer2001}, M-SVM \cite{Bredensteiner1999}, Top-k \cite{Lapin2015},
Multi-LR \cite{Boehning1992} and Sparse Multinomial Logistic Regression \cite{Krishnapuram2005} (SMLR).
These methods have been mentioned in the previous sections.
For the sake of fairness and generalizability, all methods are directly trained in the original feature space rather than in well-selected kernel spaces.
For the hyperparameters involved in M$^3$SVM, $\lambda$ is set to ten equidistant values within the interval $[1\times 10^{-4},1\times 10^{-1}]$, while $p$ is set on a grid of $[1, 2, \cdots, 8]$.
For all comparative methods, we adhere to the authors' default parameter settings and, where necessary, similarly conduct parameter grid searches to achieve fair comparisons as far as possible.
We evaluate the performance of each method with five-fold cross-validation and report the average results of $10$ runs.

\subsection{Results}
The experimental results of M$^3$SVM and seven comparative methods are reported in Table \ref{table:Accuracy}, where each result represents the average test accuracy and standard deviation of $10$ runs.
The best result on each dataset is marked in bold.
In comparison to other widely used multi-classification algorithms,  M$^3$SVM achieves the best classification performance on all selected datasets, which can be attributed to the flexible factor $p$ that adapt to diverse data structures.
It is noteworthy that the OvO method generally exhibits great performance, owing to its independent separation of any two classes.
Unfortunately, the complexity of classification for new arrival data considerably limits its application.
In addition to this, the tuning parameter $\lambda$ for each subproblem in OvO are not straightforward, which accounts for the suboptimal experimental results.

Beyond the convexity, the sound convergence property is experimentally verified.
The variation of the objective function values and the accuracy (ACC) on test set over the number of iterations is depicted in Figure \ref{conver} on six datasets.
Throughout the entire training process, accuracy consistently improves as the loss function decreases. 
It can be found that M$^3$SVM converges rapidly, with convergence observed within $500$ iterations across all six datasets.
\begin{figure}[h]
	\centering  
	\subfloat[ISOLET]{
		\includegraphics[height=2cm, width=2.7cm]{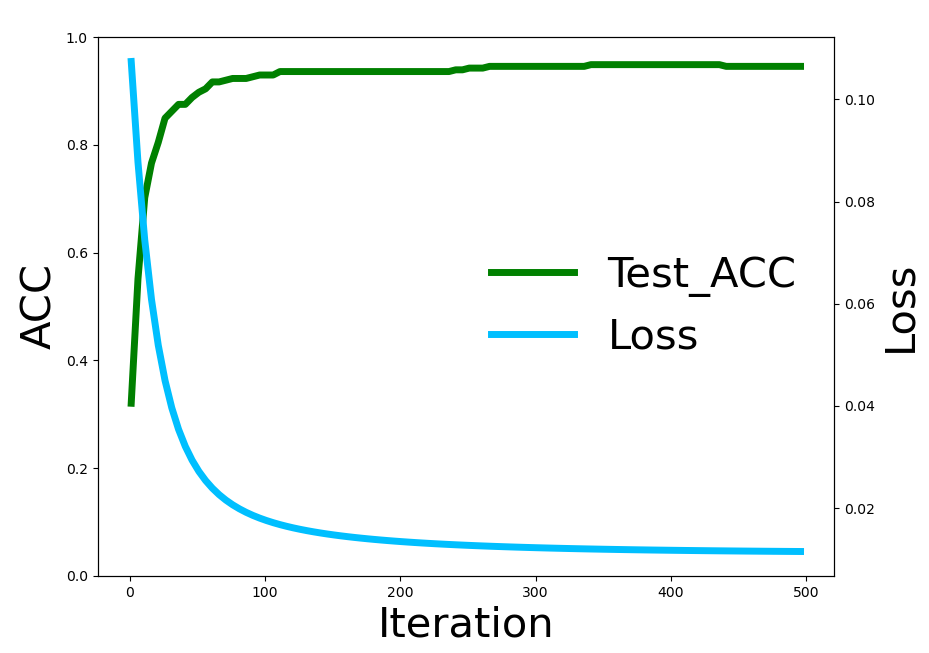}}
	\subfloat[HHAR]{
		\includegraphics[height=2cm, width=2.7cm]{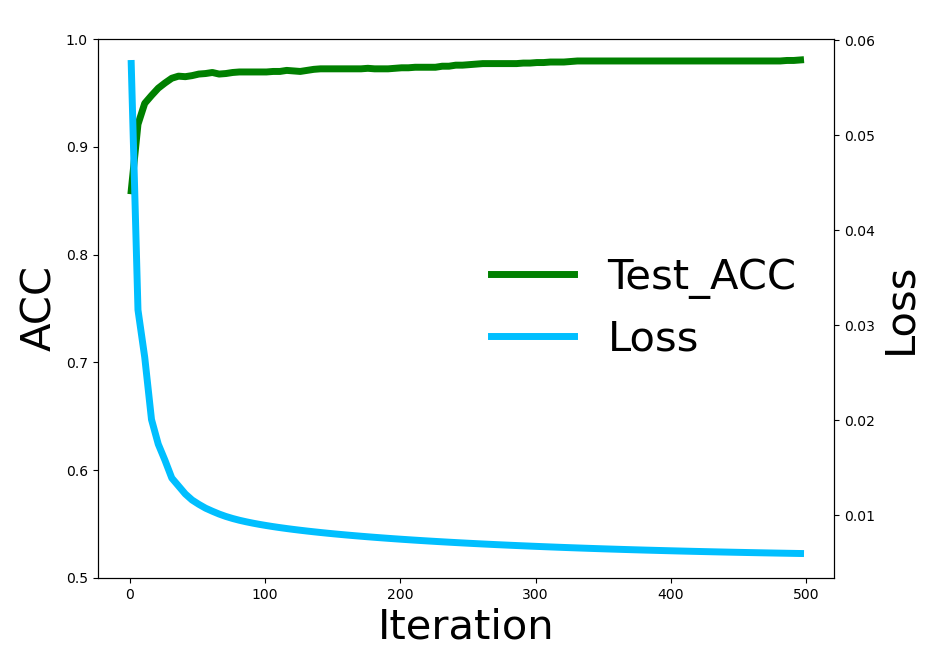}}
	\subfloat[Cornell]{
		\includegraphics[height=2cm, width=2.7cm]{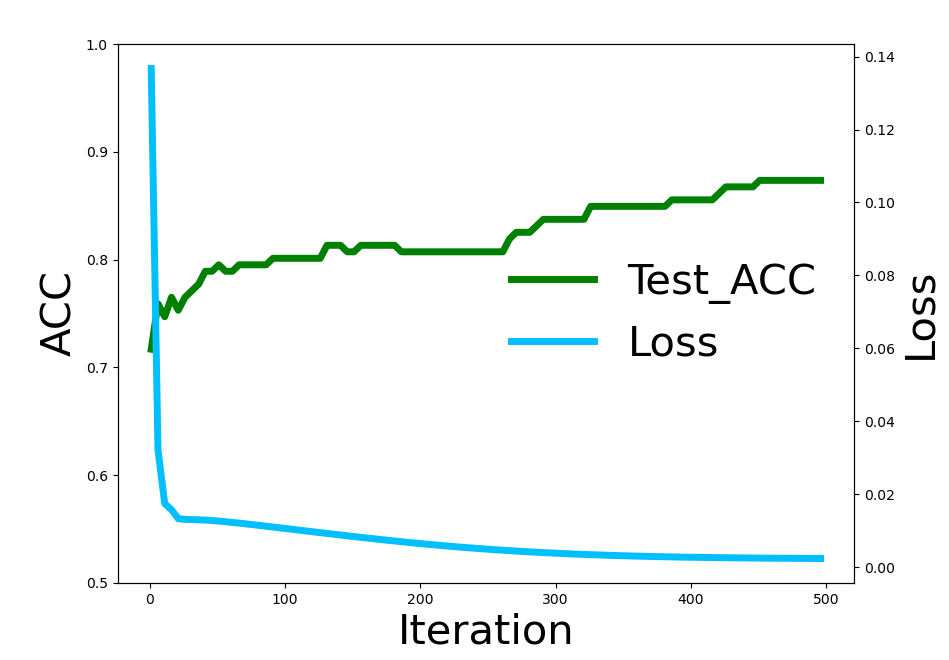}}
	\\
	\subfloat[USPS]{
		\includegraphics[height=2cm, width=2.7cm]{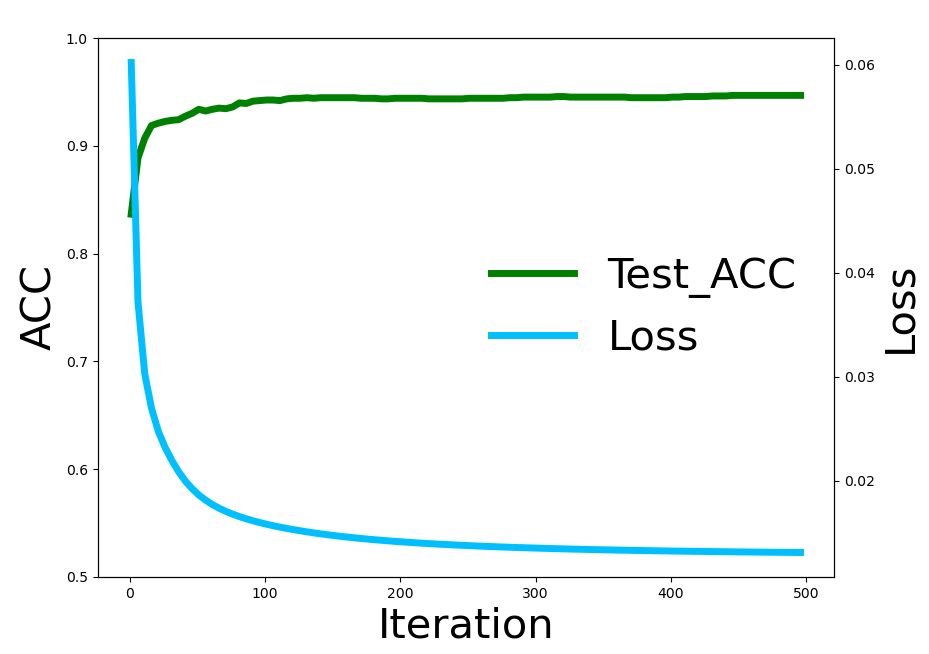}}
	\subfloat[Glass]{
		\includegraphics[height=2cm, width=2.7cm]{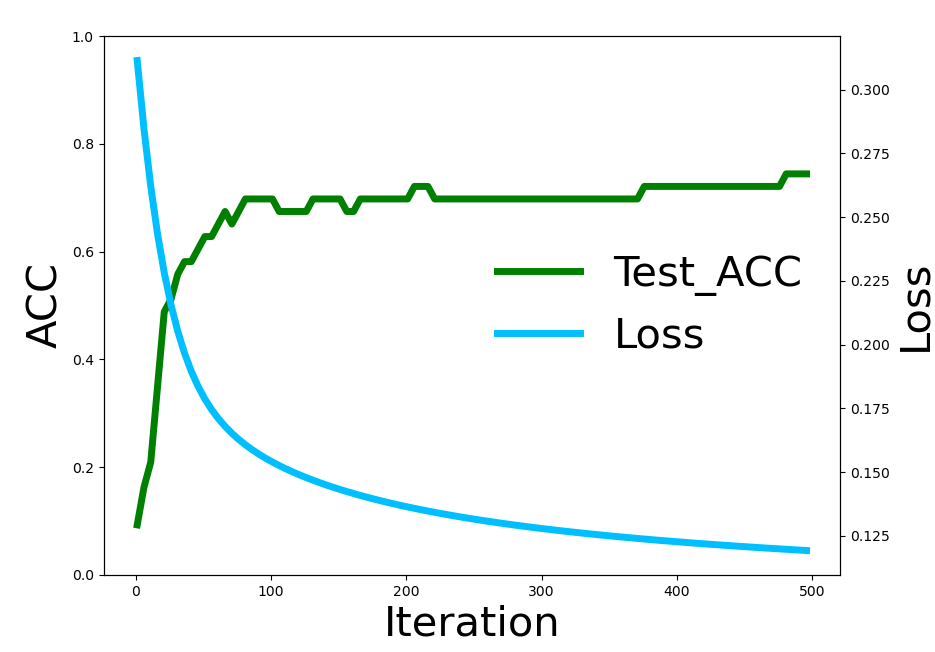}} 
	\subfloat[Vehicle]{
		\includegraphics[height=2cm, width=2.7cm]{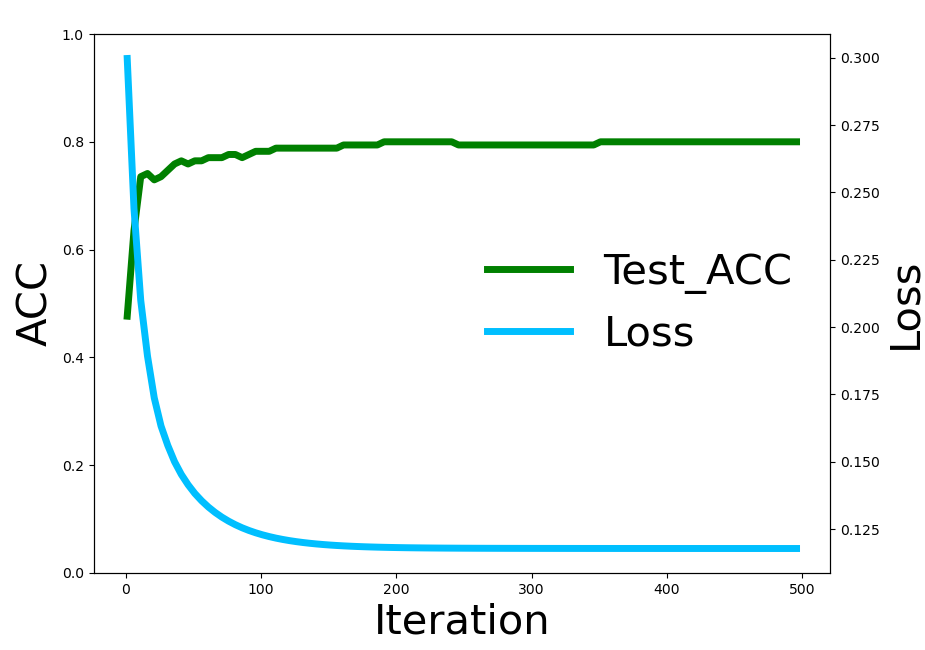}}
	\caption{Convergence of the objective function value.} \label{conver}
\end{figure}

The test accuracy under various values of $p$ on six datasets is presented in Figure \ref{p-value}. 
A noteworthy  observation is that as the value of $p$ increases, the model's generalization performance (test accuracy) initially peaks and subsequently diminishes (note that it is totally possible that the peak is not within $[1,8]$).
This phenomenon aligns with our motivation, where $p$ acts as a balancing factor between the global margins and the lower bound of the margins.
The increase of $p$ can be interpreted as a prioritization of enhancing the lower bound of the margin.
When $p$ is small, enlarging the margin between each class pair contributes to the reduction of the objective function.
When $p$ is large, the objective function primarily emphasizes boosting the lower bound of margins.
In such case, the value of the rest margins are pulled down due to the interlocking separating hyperplanes.
Through extensive experiments, we found model performance is generally better when $p$ is around $4$, which can serve as a reasonable prior.
Furthermore, it's advisable to avoid excessively large values of $p$, as they may result in poor convergence.
\begin{figure}[h]
	\centering  
	\subfloat[ISOLET]{
		\includegraphics[height=1.8cm, width=2.6cm]{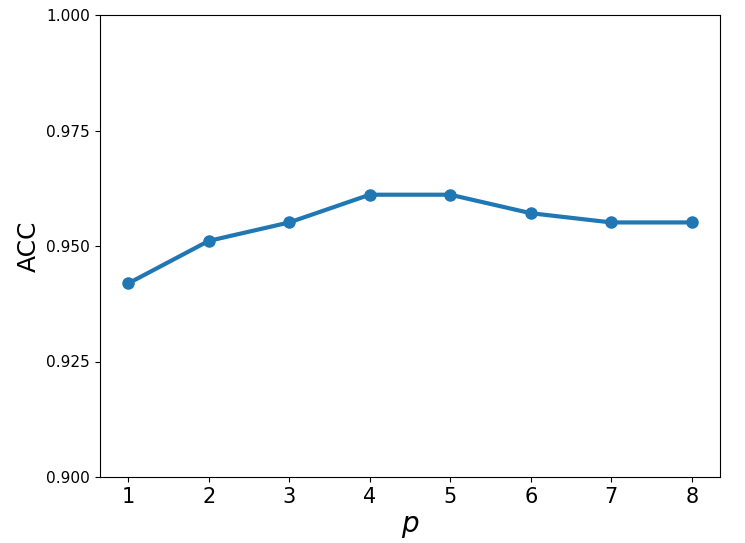}}
	\subfloat[HHAR]{
		\includegraphics[height=1.8cm, width=2.6cm]{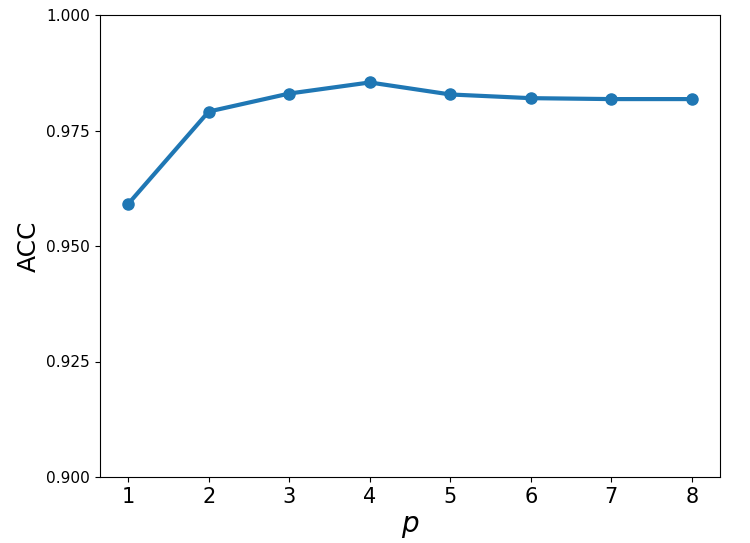}}
	\subfloat[Cornell]{
		\includegraphics[height=1.8cm, width=2.6cm]{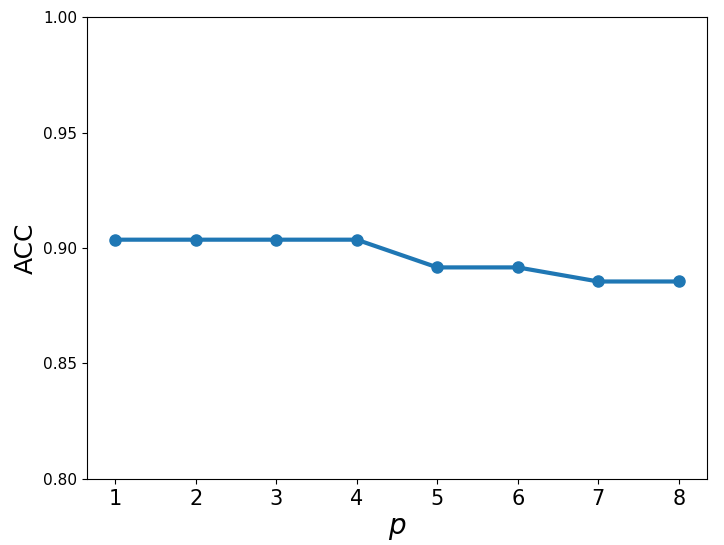}}
	\\
	\subfloat[USPS]{
		\includegraphics[height=1.8cm, width=2.6cm]{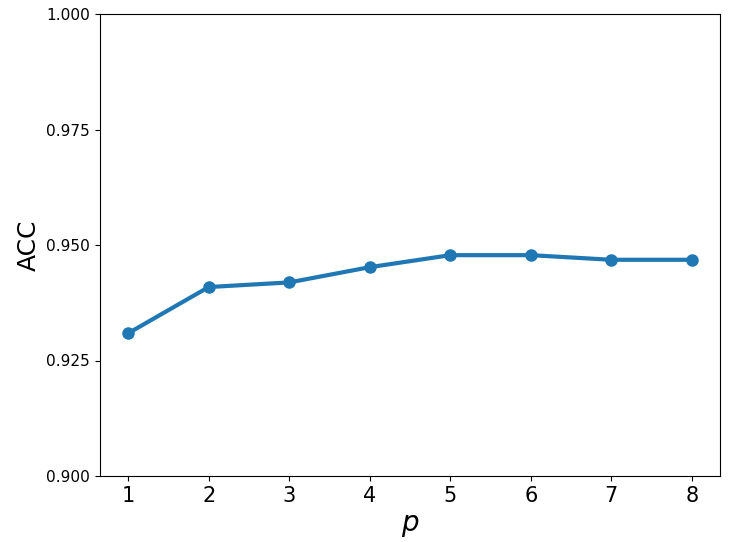}}
	\subfloat[Glass]{
		\includegraphics[height=1.8cm, width=2.6cm]{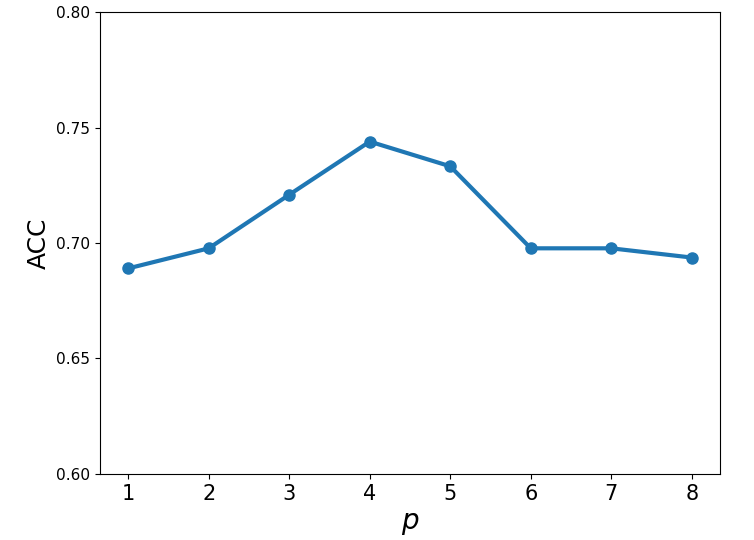}} 
	\subfloat[Vehicle]{
		\includegraphics[height=1.8cm, width=2.6cm]{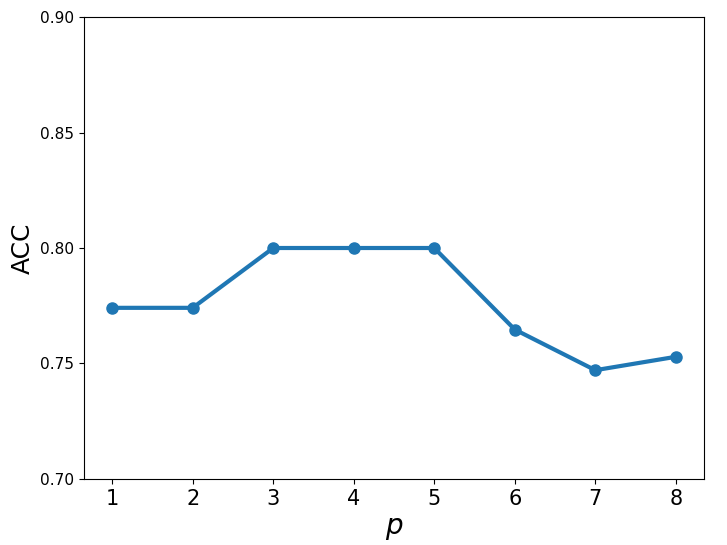}}
	\caption{The effect of parameter $p$ on experimental results.} \label{p-value}
\end{figure}

We study the sensitivity of the trade-off parameter $\lambda$.
Figure \ref{ana_lambda} illustrates the variation of the test accuracy on the eight datasets at $p=4$ as a function of $\lambda$, within the range of $[1\times 10^{-4},1]$.
One can infer that $\lambda$ ensures the generalization performance over a broad range.
Empirically, the margin term is typically several orders of magnitude larger than the loss term.
Therefore, a judicious choice for $\lambda$ lies in the vicinity of $10^{-3}$.
Assigning an excessively large value for $\lambda$ may lead the model to disregard the classification loss.
\begin{figure}[h]
	\centering  
	\subfloat{
		\includegraphics[height=3cm, width=3.8cm]{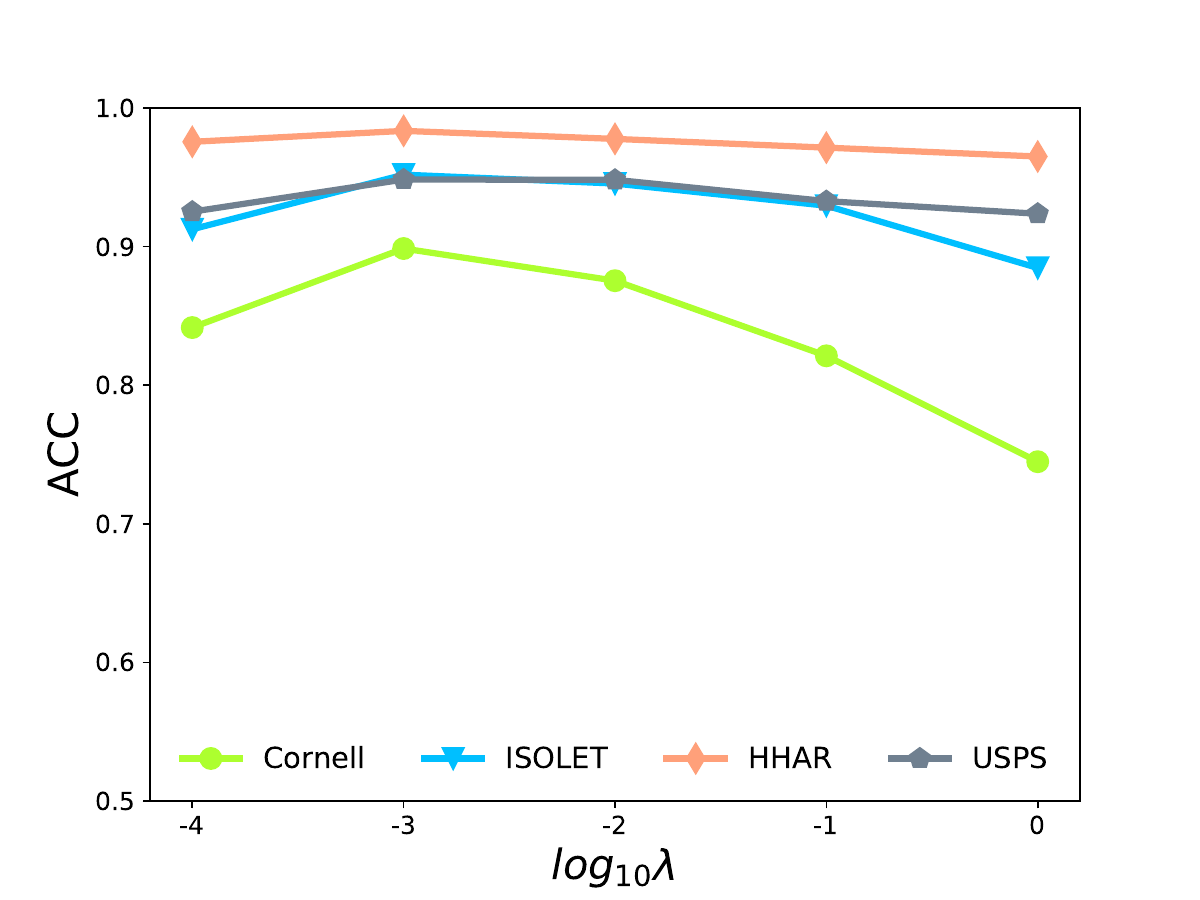}}
	\hspace{2mm} 
	\subfloat{
		\includegraphics[height=3cm, width=3.8cm]{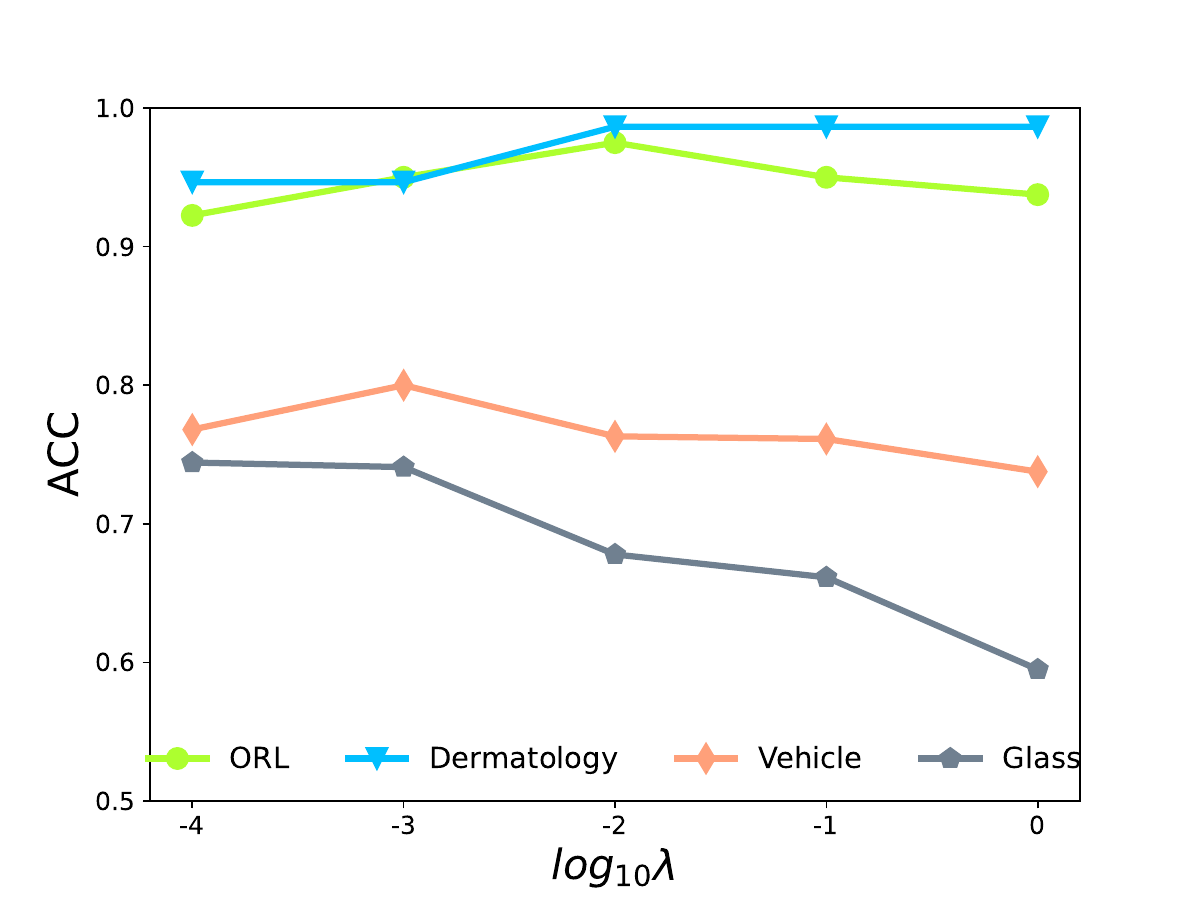}}
	\caption{Study of $\lambda$.} \label{ana_lambda}
	\vspace{-1mm}
\end{figure}
While the primary focus of this paper is on traditional methods, our method can be seamlessly integrated into the realm of deep learning.
We assess the enhancement of our method on softmax loss through visual classification tasks.
Illustrating representative outcomes, the displayed training and test accuracy curves in Figure. \ref{Accuracy curves with iterations on SVHN.} confirm the effectiveness of our method in mitigating overfitting.
Comprehensive descriptions, settings and results are presented in Appendix A.5.
Further discussions are placed in Appendix A.6.
\begin{figure}[h]
	\centering  
	\subfloat[Train]{
		\includegraphics[height=2.6cm, width=3.6cm]{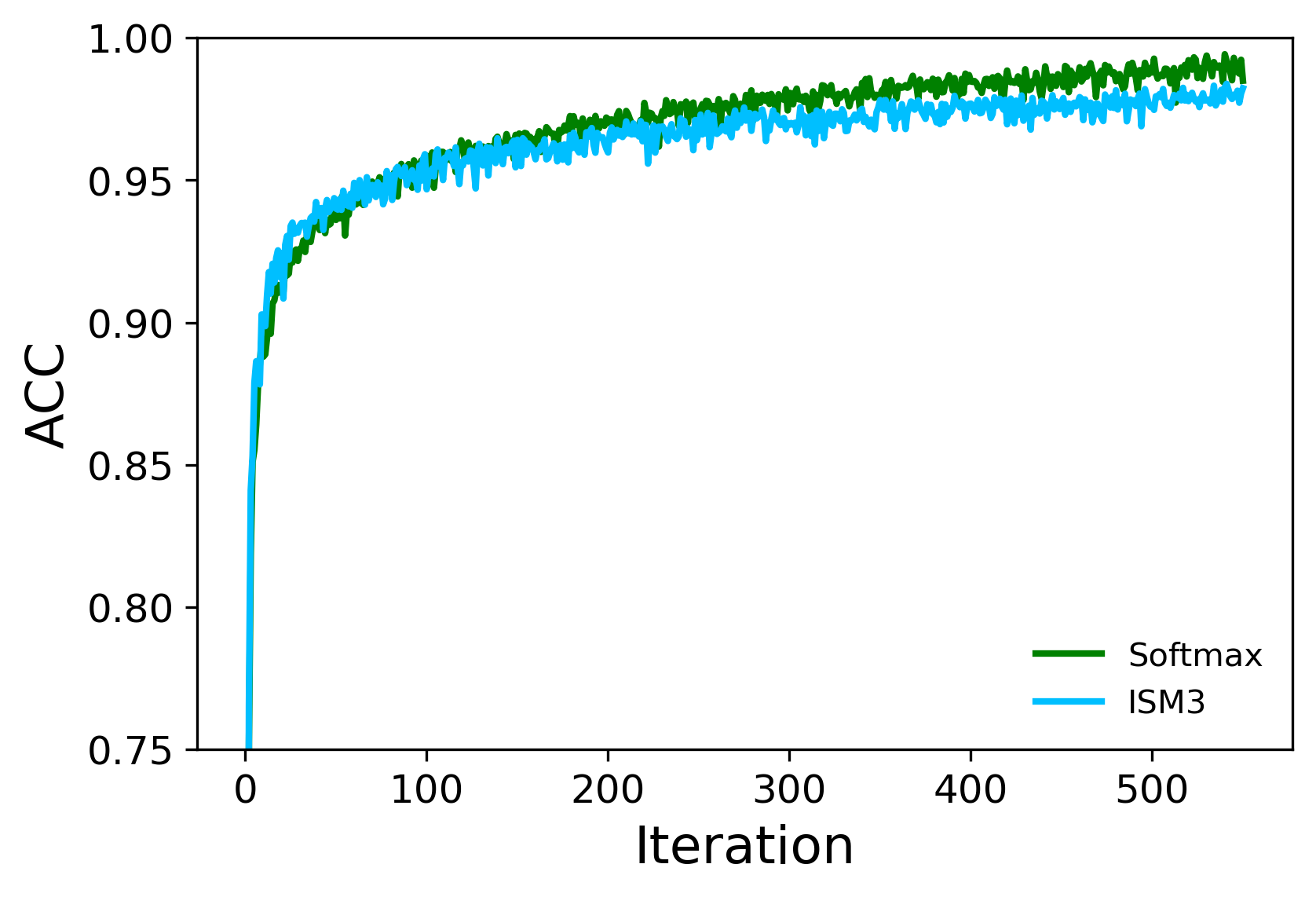}}
	\hspace{2mm} 
	\subfloat[Test]{
		\includegraphics[height=2.6cm, width=3.6cm]{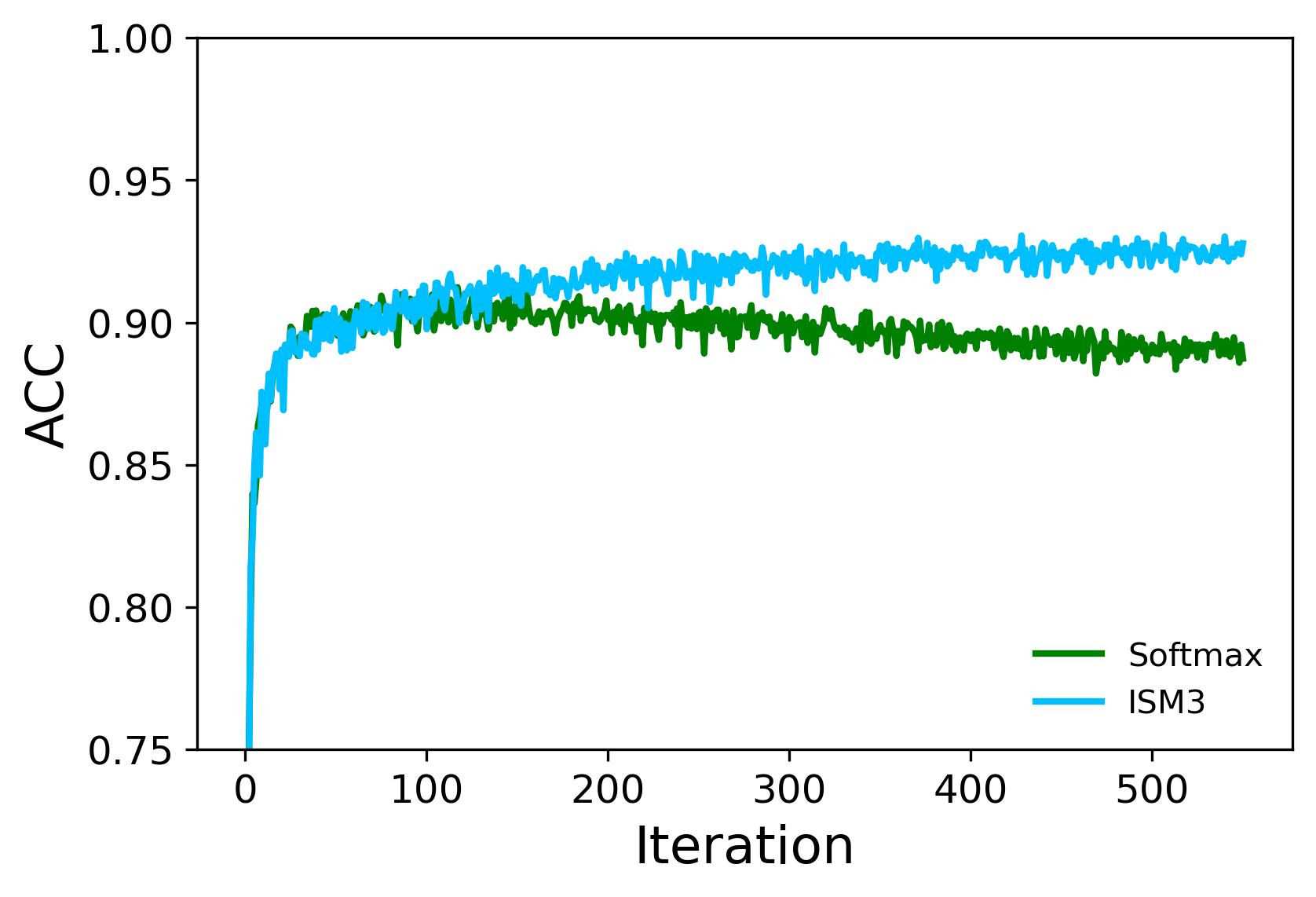}}
	\caption{Accuracy curves with iterations on SVHN.} \label{Accuracy curves with iterations on SVHN.}
	\vspace{-1mm}
\end{figure}

\section{Conclusion}
In this paper, we propose a concise but effective multi-class SVM model that enlarge the margin lower bound for all class pairs.
We reveal the drawbacks of the related methods, while providing the motivations and detailed derivations of our method.
Theoretical analysis confirms that the existing methods can be viewed as non-optimal special cases of our method.
We show the proposed method can broadly improve the generalization performance from the SRM perspective.
Our method can be integrated into neural networks, not only enhancing inter-class discrimination but also effectively mitigating overfitting. 
Both traditional and deep empirical evaluations validate the superiority of our method.

\bibliography{JabRef}

\appendix

\section{Appendix}
\setcounter{equation}{0}
\renewcommand\theequation{\arabic{equation}}

\setcounter{lemma}{0}
\renewcommand\thelemma{\arabic{lemma}}
\setcounter{theorem}{0}
\renewcommand\thetheorem{\arabic{theorem}}

\subsection{A.1 Proof}
\allowdisplaybreaks
%
%
%

\begin{lemma}\label{equ_p}
	Assume $ g_{1}(\mathbf{z}),g_{2}(\mathbf{z}),\cdots,g_{m}(\mathbf{z})$ are given real functions.
	The following two optimization problem is equivalent when $p\rightarrow \infty$:
	\begin{equation}
		(a) \max_{\mathbf{z}} \min_{j} g_{j}(\mathbf{z}) \qquad \qquad (b) \min_{\mathbf{z}} \sum_{j = 1}^{m} g^{-p}_{j}(\mathbf{z}).
	\end{equation}
\end{lemma}
\begin{proof}
	Let $g_{\alpha}(\mathbf{z})$ be temporary variable denoting the smallest elements in the set $ \{g_{j}(\mathbf{z}) \vert j \in [m]\} $.
	\begin{equation}\label{alpha_beta}
		\begin{aligned}
			&\min_{\mathbf{z}} \sum_{j = 1}^{m} g^{-p}_{j}(\mathbf{z}) \\
			\Longleftrightarrow
			&\min_{\mathbf{z}} \left(\sum_{j = 1}^{m} g^{-p}_{j}(\mathbf{z})\right)^{-1}\\
			\Longleftrightarrow
			&\min_{\mathbf{z}} g^{p}_{\alpha}(\mathbf{z}) \left(\sum_{j=1}^{m}  \left(\frac{g_{\alpha}(\mathbf{z})}{g_{j}(\mathbf{z})}\right)^{p}\right)^{-1}.
		\end{aligned}
	\end{equation}
	Because $p \rightarrow \infty$, for $j \neq \alpha$, $\left(\frac{g_{\alpha}(\mathbf{z})}{g_{j}(\mathbf{z})}\right)^{p} \rightarrow 0$.
	Then the problem (\ref{alpha_beta}) can be rewritten as
	\begin{equation}
		\min_{\mathbf{z}} \sum_{j = 1}^{m} g^{-p}_{j}(\mathbf{z}) \Longleftrightarrow \max_{\mathbf{z}} g^{p}_{\alpha}(\mathbf{z}) \Longleftrightarrow \max_{\mathbf{z}} \min_{j} g_{j}(\mathbf{z}).
	\end{equation}
	This concludes the proof.
\end{proof}

\begin{theorem}
	The problem 
	\begin{equation} \label{max_min}
		\begin{aligned}
			&\max_{\mathbf{W} \in \mathbb{R}^{d \times c},\mathbf{b} \in \mathbb{R}^{d}} \min_{k,l \in [c] , k < l}   \frac{1}{ \|\mathbf{w}_{k}-\mathbf{w}_{l}\|_{2}},\\
			{ s.t. }&\left\{ \begin{array}{cc}
				f_{kl}(\mathbf{x}_{i}) \geqslant 1, y_{i} = k\\ f_{kl}(\mathbf{x}_{i}) \leqslant -1, y_{i} = l
			\end{array}\right.,  i \in [n].
		\end{aligned}
	\end{equation}
	is equivalent to
	\begin{equation}\label{hard_p}
		\begin{aligned}
			&\min_{\mathbf{W} \in \mathbb{R}^{d \times c},\mathbf{b} \in \mathbb{R}^{d}} \sum_{k=1}^{c-1} \sum_{l=k+1}^{c} \|\mathbf{w}_{k} - \mathbf{w}_{l}\|_{2}^{p},\\
			&{ s.t. }\left\{ \begin{array}{cc}
				f_{kl}(\mathbf{x}_{i})  \geqslant 1, y_{i} = k\\ f_{kl}(\mathbf{x}_{i})  \leqslant -1, y_{i} = l
			\end{array}\right.,  i \in [n].
		\end{aligned}
	\end{equation}
	with the given parameter $p \rightarrow \infty$.
\end{theorem}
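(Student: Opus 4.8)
The plan is to reduce the theorem to a direct application of Lemma~\ref{equ_p}. Observe that both optimization problems (\ref{max_min}) and (\ref{hard_p}) are constrained by the same set of affine inequalities in $\mathbf{W}$ and $\mathbf{b}$, so it suffices to show that the objectives are equivalent over the common feasible region. First I would set $m = \binom{c}{2}$ and, for each unordered pair $(k,l)$ with $k<l$, define the real function $g_{kl}(\mathbf{W},\mathbf{b}) = \|\mathbf{w}_k - \mathbf{w}_l\|_2^{-1}$ on the feasible set; note that feasibility together with the existence of at least two classes' worth of data forces $\mathbf{w}_k \neq \mathbf{w}_l$, so each $g_{kl}$ is well-defined and strictly positive there. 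Then the objective of (\ref{max_min}) is exactly $\max \min_{k<l} g_{kl}$, which is form (a) of Lemma~\ref{equ_p}.

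Next I would rewrite the objective of (\ref{hard_p}) in terms of the $g_{kl}$: since $\|\mathbf{w}_k - \mathbf{w}_l\|_2^{p} = \big(g_{kl}(\mathbf{W},\mathbf{b})\big)^{-p}$, minimizing $\sum_{k<l}\|\mathbf{w}_k - \mathbf{w}_l\|_2^{p}$ is precisely $\min \sum_{k<l} g_{kl}^{-p}$, which is form (b) of Lemma~\ref{equ_p} with the index $j$ ranging over the pairs $(k,l)$. Invoking Lemma~\ref{equ_p} with $\mathbf{z} = (\mathbf{W},\mathbf{b})$ restricted to the feasible region then yields that, as $p \to \infty$, problems (a) and (b) — equivalently (\ref{max_min}) and (\ref{hard_p}) — have the same optimizers, which is the claim. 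I would also remark that the monotone reparametrization $t \mapsto t^{-p}$ (for $t>0$) is what licenses passing the constraints through unchanged: the feasible set does not involve $p$ at all, so only the objectives need comparison.

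The main obstacle I anticipate is purely a matter of rigor in the limiting statement rather than a conceptual difficulty: Lemma~\ref{equ_p} is stated as an asymptotic equivalence ``when $p \to \infty$,'' and its proof relies on the informal claim that $\big(g_\alpha/g_j\big)^p \to 0$ for non-minimizing indices. To make the reduction airtight I would want to note (i) that $g_\alpha/g_j \le 1$ with equality only at ties, so the ratio powers vanish except on the set where several margins are simultaneously minimal, and (ii) that on that tie set the sum $\sum_j g_j^{-p}$ is asymptotically $(\text{number of ties}) \cdot g_\alpha^{-p}$, whose minimization still drives $g_\alpha$ to its maximum. Since the theorem only asserts equivalence in the limit $p \to \infty$, I would not belabor uniformity in $(\mathbf{W},\mathbf{b})$; the statement is understood pointwise in the limiting sense exactly as in Lemma~\ref{equ_p}, and the proof is then a two-line substitution plus a citation of that lemma.
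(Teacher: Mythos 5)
Your proposal is correct and follows essentially the same route as the paper: the paper's own proof simply invokes Lemma~1 with $g_{j}(\mathbf{W}) = \|\mathbf{w}_{k}-\mathbf{w}_{l}\|_{2}^{-1}$ for the $c(c-1)/2$ class pairs, observing as you do that the feasible set is unchanged and only the objectives need comparison. Your extra remarks on well-definedness of the $g_{kl}$ and on handling ties are a welcome bit of added rigor, but they do not change the argument.
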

\begin{proof}
	According to Lemma \ref{equ_p}, let $f_{j}(\mathbf{W}) = \frac{1}{ \|\mathbf{w}_{k}-\mathbf{w}_{l}\|_{2}},k<l,j = 1,\cdots,c(c-1)/2$, the conclusion stands obviously.
\end{proof}

\begin{theorem}\label{th_constraints}
	Assume function $f(\mathbf{Z})$ has the property of column translation invariance, i.e., $\forall \bm{\sigma} \in \mathbb{R}^{n}$, there is $f(\mathbf{Z}) = f(\mathbf{Z} + \bm{\sigma} \mathbf{1}^{T}) $.
	With given $\varepsilon \rightarrow 0$, the following two optimization problems have the same optimal solution
	\begin{equation}\label{column}
		\min_{\mathbf{Z} \in \mathbb{R}^{n \times m}} f(\mathbf{Z}), \quad { s.t. } \sum_{j = 1}^{m} \mathbf{z}_{j} = \mathbf{0},
	\end{equation}
	\begin{equation}\label{penalty_Z}
		\min_{\mathbf{Z} \in \mathbb{R}^{n \times m}} f(\mathbf{Z}) + \varepsilon \|\mathbf{Z}\|_{F}^{2}.
	\end{equation}
\end{theorem}

\begin{proof}
	Before the proof, the following lemma is introduced.
	
	\textbf{Lemma.}
	With given vectors $\mathbf{z}_{1},\cdots,\mathbf{z}_{m} \in \mathbb{R}^{n}$, the optimal solution of problem
	\begin{equation}
		\min_{\bm{\sigma}\in \mathbb{R}^{n}}  \sum_{j = 1}^{m} \|\mathbf{z}_{j} - \bm{\sigma}\|^{2}
	\end{equation}
	is $\bm{\sigma} = \frac{1}{m}\sum_{j = 1}^{m} \mathbf{z}_{j}$.
	
	\begin{proof}
		Let $f(\bm{\sigma}) = \sum_{j = 1}^{m} \|\mathbf{z}_{j} - \bm{\sigma}\|^{2}$. Setting the derivative of $f(\bm{\sigma})$ with respect to $\bm{\sigma}$ to $0$, there is
		\begin{equation}
			\frac{\partial f}{\partial \bm{\sigma}} = -2 \sum_{j = 1}^{m} \mathbf{z}_{j} + 2 m\bm{\sigma} = \textbf{0}.
		\end{equation}
		Then the global minima is $\bm{\sigma} = \frac{1}{m}\sum_{j = 1}^{m} \mathbf{z}_{j}$.
	\end{proof}
	
	Consider the optimization problem
	\begin{equation}\label{min_Z}
		\min_{\mathbf{Z} \in \mathbb{R}^{n \times m}} g(\mathbf{Z}).
	\end{equation}
	Let $\mathbf{Z}^{*}$ be an arbitrary optimal solution of problem (\ref{min_Z}). Since $g$ has the property of column translation invariance,  $\forall \bm{\sigma} \in \mathbb{R}^{n}$, $\mathbf{Z}^{*} + \bm{\sigma} \mathbf{1}^{T}$ is also the optimal solution.
	Due to the constraints $\sum_{j = 1}^{m} \mathbf{z}_{j} = \mathbf{0}$, the only optimal solution of problem (\ref{column}) is
	\begin{equation}\label{opt_solution}
		\mathbf{Z}^{**} = \mathbf{Z}^{*} - \left(\frac{1}{m} \sum_{j = 1}^{m} \mathbf{z}^{*}_{j}\right) \mathbf{1}^{T}.
	\end{equation}
	According to the lemma above, $\mathbf{Z}^{**}$ is also the optimal solution of problem (\ref{penalty_Z}).
	Therefore, the two optimization problems have the same optimal solution.
\end{proof}

\begin{theorem}\label{th_Sigma}
	The following equation holds,
	\begin{equation}
		\sum_{k=1}^{c-1} \sum_{l=k+1}^{c} \sum_{y_{i}\in \{k,l\}} \left[1-y_{ikl} f_{kl}(\mathbf{x}_{i}) \right]_{+} = \sum_{i=1}^{n} \sum_{k \neq y_{i}}	[1 - f_{y_{i}k}(\mathbf{x}_{i}) ]_{+}.
	\end{equation}
\end{theorem}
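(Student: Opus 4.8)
The plan is to show that the two sides are sums over the \emph{same} multiset of terms, by a reindexing argument. Throughout, extend the definition $f_{kl}(\mathbf{x}) = (\mathbf{w}_{k}-\mathbf{w}_{l})^{T}\mathbf{x} + b_{k} - b_{l}$ to all ordered pairs $k \neq l$; this is antisymmetric, $f_{lk}(\mathbf{x}) = -f_{kl}(\mathbf{x})$, which is immediate from the formula.

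First I would rewrite the inner summand on the left-hand side. Fix $k < l$ and a sample $i$ with $y_{i} \in \{k,l\}$, and let $m$ denote the unique element of $\{k,l\}\setminus\{y_{i}\}$. If $y_{i} = k$ then $y_{ikl} = 1$, so $[1 - y_{ikl} f_{kl}(\mathbf{x}_{i})]_{+} = [1 - f_{kl}(\mathbf{x}_{i})]_{+} = [1 - f_{y_{i}m}(\mathbf{x}_{i})]_{+}$. If $y_{i} = l$ then $y_{ikl} = -1$, so $[1 - y_{ikl} f_{kl}(\mathbf{x}_{i})]_{+} = [1 + f_{kl}(\mathbf{x}_{i})]_{+} = [1 - f_{lk}(\mathbf{x}_{i})]_{+} = [1 - f_{y_{i}m}(\mathbf{x}_{i})]_{+}$, using antisymmetry. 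Hence in both cases the summand equals $[1 - f_{y_{i}m}(\mathbf{x}_{i})]_{+}$.

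Next I would reindex. The triple index $(k,l,i)$ appearing on the left ranges over all $1 \leq k < l \leq c$ and all $i \in [n]$ with $y_{i} \in \{k,l\}$. Send each such triple to the pair $(i,m)$ with $m = \{k,l\}\setminus\{y_{i}\}$. This is a bijection onto $\{(i,m) : i \in [n],\, m \in [c],\, m \neq y_{i}\}$: the inverse sends $(i,m)$ with $m \neq y_{i}$ to the triple with $(k,l) = (\min\{y_{i},m\}, \max\{y_{i},m\})$, and the required constraint $y_{i} \in \{k,l\}$ holds automatically. Combining with the previous paragraph,
\[
\sum_{k=1}^{c-1}\sum_{l=k+1}^{c}\sum_{y_{i}\in\{k,l\}} \left[1 - y_{ikl} f_{kl}(\mathbf{x}_{i})\right]_{+} = \sum_{i=1}^{n}\sum_{m \neq y_{i}} \left[1 - f_{y_{i}m}(\mathbf{x}_{i})\right]_{+},
\]
which is the right-hand side after renaming $m$ to $k$.

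There is no serious obstacle here; the argument is pure bookkeeping. The only points requiring care are making the antisymmetry of $f$ explicit (since $f_{kl}$ was originally introduced only for $k<l$, whereas $f_{y_{i}k}$ on the right may have $y_{i} > k$), and checking that each unordered pair $\{k,l\}$ containing $y_{i}$ is counted exactly once on each side — which is exactly what the bijection in the third paragraph certifies.
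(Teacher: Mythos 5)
Your proposal is correct and follows essentially the same route as the paper: both split the summand according to whether $y_i=k$ or $y_i=l$, use the antisymmetry $f_{lk}=-f_{kl}$, and then regroup — the paper via swapping the roles of $k$ and $l$ in the nested sums, you via an explicit bijection between triples $(k,l,i)$ and pairs $(i,m)$, which is the same bookkeeping stated differently.
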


\begin{proof}
	\begin{align}\label{Theorem 2}
		&\sum_{k=1}^{c-1} \sum_{l=k+1}^{c} \sum_{y_{i}\in \{k,l\}} \left[1-y_{ikl} f_{kl}(\mathbf{x}_{i}) \right]_{+} \notag \\
		=&\sum_{k=1}^{c-1} \sum_{l=k+1}^{c} \sum_{y_{i} = k}[1 - f_{kl}(\mathbf{x}_{i}) ]_{+}  + \sum_{k=1}^{c-1} \sum_{l=k+1}^{c} \sum_{y_{i} = l}[1 - f_{lk}(\mathbf{x}_{i}) ]_{+} \notag \\
		=&\sum_{k=1}^{c-1} \sum_{l=k+1}^{c} \sum_{y_{i} = k}[1 - f_{kl}(\mathbf{x}_{i}) ]_{+}  + \sum_{l=1}^{c-1} \sum_{k=l+1}^{c} \sum_{y_{i} = k}[1 - f_{kl}(\mathbf{x}_{i}) ]_{+} \notag \\
		=&\sum_{k=1}^{c-1} \sum_{l=k+1}^{c} \sum_{y_{i} = k}[1 - f_{kl}(\mathbf{x}_{i}) ]_{+}  + \sum_{k=2}^{c} \sum_{l=1}^{k-1} \sum_{y_{i} = k}[1 - f_{kl}(\mathbf{x}_{i}) ]_{+} \notag \\
		= & \sum_{k=1}^{c}  \sum_{l \neq k} \sum_{y_{i} = k}
		[1 - f_{kl}(\mathbf{x}_{i}) ]_{+} \notag \\
		= & \sum_{k=1}^{c} \sum_{y_{i} = k} \left(\sum_{l \neq k}
		[1 - f_{kl}(\mathbf{x}_{i}) ]_{+}\right) \notag \\
		= & \sum_{i=1}^{n} \sum_{l \neq y_{i}}	[1 - f_{y_{i}l}(\mathbf{x}_{i}) ]_{+},
	\end{align}
	which completes the proof.
\end{proof}

\begin{lemma}
	$g(x)$ satisfies $0 \leqslant g(x) - [x]_{+}  \leqslant \frac{\delta}{2}$.
	When $\delta \rightarrow 0$, $g(x) \rightarrow [x]_{+}$.
\end{lemma}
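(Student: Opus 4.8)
The statement to prove is the smoothing lemma: with $g(x) = \tfrac{x + \sqrt{x^2 + \delta^2}}{2}$ and $\delta > 0$, one has $0 \leqslant g(x) - [x]_+ \leqslant \tfrac{\delta}{2}$, and $g(x) \to [x]_+$ as $\delta \to 0$. The plan is to verify the two-sided bound by a direct algebraic computation, splitting into the cases $x \geqslant 0$ and $x < 0$ according to the definition of $[x]_+ = \max\{0,x\}$; the limit claim then follows immediately from the squeeze theorem once the bound is established.

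First I would treat the case $x \geqslant 0$, where $[x]_+ = x$. Then
\begin{equation}
g(x) - [x]_+ = \frac{x + \sqrt{x^2+\delta^2}}{2} - x = \frac{\sqrt{x^2+\delta^2} - x}{2}.
\end{equation}
Since $\sqrt{x^2+\delta^2} > \sqrt{x^2} = x$ for $x \geqslant 0$, this is positive, giving the lower bound. For the upper bound, rationalize: $\sqrt{x^2+\delta^2} - x = \dfrac{\delta^2}{\sqrt{x^2+\delta^2} + x}$, and since $x \geqslant 0$ the denominator is at least $\sqrt{\delta^2} = \delta$, so $\sqrt{x^2+\delta^2} - x \leqslant \delta$, whence $g(x) - [x]_+ \leqslant \tfrac{\delta}{2}$.

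Next I would treat the case $x < 0$, where $[x]_+ = 0$, so $g(x) - [x]_+ = g(x) = \tfrac{x + \sqrt{x^2+\delta^2}}{2}$. Here $\sqrt{x^2+\delta^2} > \sqrt{x^2} = |x| = -x > 0$, so $x + \sqrt{x^2+\delta^2} > 0$, which gives the lower bound. For the upper bound, writing $x = -|x|$, rationalize again: $\sqrt{x^2+\delta^2} - |x| = \dfrac{\delta^2}{\sqrt{x^2+\delta^2}+|x|} \leqslant \dfrac{\delta^2}{\delta} = \delta$, so $g(x) = \tfrac{\sqrt{x^2+\delta^2} - |x|}{2} \leqslant \tfrac{\delta}{2}$. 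This exhausts both cases, so $0 \leqslant g(x) - [x]_+ \leqslant \tfrac{\delta}{2}$ for all real $x$. Finally, letting $\delta \to 0$ in this chain of inequalities and applying the squeeze theorem yields $g(x) \to [x]_+$ pointwise. There is no real obstacle here — the only mild subtlety is remembering that $\sqrt{x^2} = |x|$ rather than $x$, which is exactly what forces the two-case split; everything else is routine rationalization of a conjugate surd.
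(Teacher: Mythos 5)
Your proof is correct, and it takes a somewhat different route from the paper's. The paper sets $h(x) = g(x) - [x]_{+}$ and argues by monotonicity: $h$ is increasing on $x<0$ and decreasing on $x>0$, with $h(0)=\frac{\delta}{2}$, so $0 < h(x) \leqslant \frac{\delta}{2}$ on each half-line (the lower bound implicitly relying on $h(x)\to 0$ as $x\to\pm\infty$, which the paper does not spell out, nor does it compute the derivative justifying the monotonicity). You instead bound the difference purely algebraically: in each case you rationalize the conjugate surd, writing $\sqrt{x^2+\delta^2}-|x| = \delta^2/\bigl(\sqrt{x^2+\delta^2}+|x|\bigr) \leqslant \delta$, which yields the two-sided bound without any calculus, and then the limit claim follows by squeezing. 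Your version is more self-contained and arguably tighter as an argument, since every inequality is verified explicitly; the paper's monotonicity argument is shorter to state but leaves the derivative sign and the behavior at infinity to the reader. Both establish exactly the same bound, and your handling of $x=0$ (where the difference equals $\frac{\delta}{2}$, attaining the upper bound) is consistent with the non-strict inequality in the statement.
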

\begin{proof}
	Let $h(x) = g(x) - [x]_{+} $. For $x<0$, $h(x)$ monotonically increasing, so $0 <h(x)<h(0) = \frac{\delta}{2}$. For $x>0$, $h(x)$ monotonically decreasing, so $0 <h(x) <h(0) = \frac{\delta}{2}$.
\end{proof}

\begin{theorem}
	Problem (\ref{smooth_obj}) is strictly convex.
	\begin{equation}\label{smooth_obj}
		\begin{aligned}
			\min_{W \in \mathbb{R}^{d \times c}, \mathbf{b} \in \mathbb{R}^c} \mathcal{O}(\mathbf{W},\mathbf{b}) =  \sum_{i=1}^{n} \sum_{k \neq y_{i}}	\frac{\gamma_{ik} + \sqrt{\gamma_{ik}^2 + \delta^2}}{2}    \\
			+ \varepsilon \|\mathbf{W}\|_{F}^{2}  + \varepsilon \|\mathbf{b}\|_{2}^{2} + \lambda \sum_{k=1}^{c-1} \sum_{l=k+1}^{c} \|\mathbf{w}_{k} - \mathbf{w}_{l}\|_{2}^{p}.
		\end{aligned}
	\end{equation}
\end{theorem}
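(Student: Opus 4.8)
The plan is to show that $\mathcal{O}(\mathbf{W},\mathbf{b})$ is a sum of convex terms, at least one of which is strictly convex, which suffices for strict convexity of the whole. I would decompose the objective into three groups: (i) the smoothed hinge terms $\sum_{i}\sum_{k\neq y_i} g(\gamma_{ik})$ where $g(x)=\tfrac{1}{2}(x+\sqrt{x^2+\delta^2})$ and $\gamma_{ik}=1-f_{y_ik}(\mathbf{x}_i)$ is affine in $(\mathbf{W},\mathbf{b})$; (ii) the regularizer $\lambda\sum_{k<l}\|\mathbf{w}_k-\mathbf{w}_l\|_2^{p}$; and (iii) the penalty $\varepsilon(\|\mathbf{W}\|_F^2+\|\mathbf{b}\|_2^2)$. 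Group (iii) is a positive-definite quadratic form in all the entries of $(\mathbf{W},\mathbf{b})$, hence strictly convex; so if I can show (i) and (ii) are convex, the sum is strictly convex.

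For group (i): I would first verify $g$ is convex on $\mathbb{R}$ by computing $g''(x)=\tfrac{\delta^2}{2(x^2+\delta^2)^{3/2}}>0$. Since $\gamma_{ik}$ is an affine function of $(\mathbf{W},\mathbf{b})$, each composition $g(\gamma_{ik})$ is convex (convex function composed with affine map), and a finite sum of convex functions is convex. For group (ii): the map $(\mathbf{w}_k,\mathbf{w}_l)\mapsto\mathbf{w}_k-\mathbf{w}_l$ is linear, $\|\cdot\|_2$ is a norm hence convex and nonnegative, and $t\mapsto t^{p}$ is convex and nondecreasing on $[0,\infty)$ for $p\geq 1$; composing a nondecreasing convex function with a convex function gives a convex function, so each $\|\mathbf{w}_k-\mathbf{w}_l\|_2^{p}$ is convex in $(\mathbf{W},\mathbf{b})$ (constant in $\mathbf{b}$), and again the sum is convex. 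Multiplying by $\lambda>0$ preserves convexity.

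The main thing to be careful about is the admissible range of $p$: the argument that $t\mapsto t^p$ is convex and nondecreasing on $[0,\infty)$ requires $p\geq 1$, which matches the paper's stated tunable range (positive reals, with the grid $\{1,\dots,8\}$ in experiments); for $0<p<1$ the term $t^p$ is concave and the regularizer need not be convex, so I would state the hypothesis $p\geq 1$ explicitly (or note it is implicit in the model). With that caveat, assembling the pieces: $\mathcal{O}$ is the sum of a convex function (groups (i) and (ii)) and a strictly convex function (group (iii)), hence strictly convex. I do not expect any genuine obstacle here — the only subtlety is the $p\geq1$ condition and making sure the strictness is correctly sourced from the $\varepsilon$-penalty rather than claimed for the hinge or regularizer terms, which are merely convex (the hinge term is flat on the region where all $\gamma_{ik}$ are very negative, and the regularizer is flat along $\mathbf{W}=\boldsymbol{\sigma}\mathbf{1}^T$).
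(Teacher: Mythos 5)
Your proof is correct, and it takes a cleaner route than the paper's. The paper tries to source the strictness from the regularizer: it invokes total convexity of $\|\cdot\|_2^p$ in uniformly convex Banach spaces (needing $p\in(1,\infty)$) to argue $h(\mathbf{x})=\|\mathbf{x}\|_2^p$ is strictly convex, then passes through the convexity inequality for $\mathcal{M}(\mathbf{W})=\sum_{k<l}h(\mathbf{w}_k-\mathbf{w}_l)$ and the affinity of $\gamma_{ik}$ in $(\mathbf{W},\mathbf{b})$ to conclude strict convexity of the objective. That chain is shaky: $\mathcal{M}$ is invariant under $\mathbf{W}\mapsto\mathbf{W}+\bm{\sigma}\mathbf{1}^T$ and does not depend on $\mathbf{b}$ at all (facts the paper itself uses when motivating the $\varepsilon$-term), so it cannot be strictly convex in $(\mathbf{W},\mathbf{b})$; composition with an affine map also only preserves convexity, not strictness, when the map has a nontrivial null space. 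Your decomposition — elementary convexity of the smoothed hinge via $g''(x)=\delta^2/(2(x^2+\delta^2)^{3/2})>0$ composed with affine maps, convexity of the regularizer for $p\geq 1$ by the norm-plus-monotone-power composition rule, and strictness supplied entirely by the positive-definite quadratic $\varepsilon(\|\mathbf{W}\|_F^2+\|\mathbf{b}\|_2^2)$ — is the logically sound way to get the claim, it covers $p=1$ (which the paper's grid includes but its Banach-space argument does not), and your explicit caveat that $0<p<1$ breaks convexity is a genuine hypothesis the paper leaves implicit. One small imprecision: the smoothed hinge is not literally ``flat'' where all $\gamma_{ik}$ are very negative (since $g''>0$ everywhere it is strictly convex in $\gamma_{ik}$); the correct reason it fails to be strictly convex in $(\mathbf{W},\mathbf{b})$ is that there are directions — e.g.\ $\mathbf{W}\mapsto\mathbf{W}+\bm{\sigma}\mathbf{1}^T$, $\mathbf{b}\mapsto\mathbf{b}+\eta\mathbf{1}$ — along which every $\gamma_{ik}$ is unchanged. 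This does not affect your argument, since you never rely on strictness of that term.
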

\begin{proof}
	The term $\varepsilon \|\mathbf{W}\|_{F}^{2}  + \varepsilon \|\mathbf{b}\|_{2}^{2}$ in Eq. (\ref{smooth_obj}) is obviously convex.
	It can be proved that functions $g(\mathbf{x}) = \|\mathbf{x}\|^{p}$ with $p \in (1,\infty)$ are totally convex in any uniformly convex Banach space.
	In finite dimensional spaces, totally convex deduces strictly convex.
	Since the Euclidean space $\mathbb{R}^{n}$  endowed with $p$-norm is uniformly convex with  $p \in (1,\infty)$, $h(\mathbf{x}) = \|\mathbf{x}\|_{2}^{p}$ is strongly convex in $\mathbb{R}^{n}$.
	Let $\mathcal{M}(\mathbf{W})$ be the last term in Eq. (\ref{smooth_obj}), for arbitrary $\mathbf{W},\mathbf{V} \in \mathbb{R}^{d \times c}$ and $\mu \in (0,1)$, we have
	\begin{equation}
		\begin{aligned}
			&\mathcal{M}(\mu\mathbf{W} + (1-\mu)\mathbf{V})\\
			= &\sum_{k=1}^{c-1} \sum_{l=k+1}^{c} h(\mu(\mathbf{w}_{k} - \mathbf{w}_{l}) + (1-\mu)(\mathbf{v}_{k} - \mathbf{v}_{l}))\\
			\leqslant &\sum_{k=1}^{c-1} \sum_{l=k+1}^{c} \mu h(\mathbf{w}_{k} - \mathbf{w}_{l}) + (1-\mu)h(\mathbf{v}_{k} - \mathbf{v}_{l})\\
			= & \mu \mathcal{M}(\mathbf{W}) + (1-\mu)\mathcal{M}(\mathbf{V})
		\end{aligned}
	\end{equation}
	Thereby $\mathcal{M}(\mathbf{W})$ is a strictly convex problem for $(\mathbf{W},\mathbf{b})$.
	Evidently, the first term in Eq. (\ref{smooth_obj}) is a convex approximation to hinge loss.
	Therefore, from the linear relationship $\gamma_{ik} = 1-(\mathbf{w}_{y_{i}}^{T}\mathbf{x}_i-\mathbf{w}_{k}^T \mathbf{x}_i+b_{y_{i}} -b_{k})$ between $\gamma_{ik}$ and $(\mathbf{W},\mathbf{b})$, it follows that the last term in Eq. (\ref{smooth_obj}) is strictly convex with respect to $(\mathbf{W},\mathbf{b})$.
\end{proof}

%

\begin{lemma}\label{variance}
	The following equation holds,
	\begin{equation}
		\sum_{j<k} \|\mathbf{w}_{j} - \mathbf{w}_{k}\|^{2} = c \sum_{j=1}^{c} \|\mathbf{w}_{j} - \frac{1}{c} \sum_{k=1}^{c}\mathbf{w}_{k}\|^{2}.
	\end{equation}
\end{lemma}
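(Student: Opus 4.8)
The plan is to prove this classical "sum of pairwise squared distances equals $c$ times the total variance" identity by expanding both sides into inner products and matching the resulting expressions. I would first introduce the shorthand $\bar{\mathbf{w}} = \frac{1}{c}\sum_{k=1}^{c}\mathbf{w}_{k}$, so that $\sum_{k=1}^{c}\mathbf{w}_{k} = c\bar{\mathbf{w}}$, and note the symmetrization trick $\sum_{j<k}\|\mathbf{w}_{j}-\mathbf{w}_{k}\|^{2} = \frac{1}{2}\sum_{j=1}^{c}\sum_{k=1}^{c}\|\mathbf{w}_{j}-\mathbf{w}_{k}\|^{2}$, which holds because the diagonal terms $j=k$ vanish and each off-diagonal pair is counted twice.

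Next I would expand the left-hand side. Writing $\|\mathbf{w}_{j}-\mathbf{w}_{k}\|^{2} = \|\mathbf{w}_{j}\|^{2} - 2\mathbf{w}_{j}^{T}\mathbf{w}_{k} + \|\mathbf{w}_{k}\|^{2}$ and summing over all $j,k$, the first and third terms each contribute $c\sum_{j=1}^{c}\|\mathbf{w}_{j}\|^{2}$, while the cross term contributes $-2\bigl\|\sum_{j=1}^{c}\mathbf{w}_{j}\bigr\|^{2}$. Dividing by $2$ yields
\begin{equation}
	\sum_{j<k}\|\mathbf{w}_{j}-\mathbf{w}_{k}\|^{2} = c\sum_{j=1}^{c}\|\mathbf{w}_{j}\|^{2} - \Bigl\|\sum_{j=1}^{c}\mathbf{w}_{j}\Bigr\|^{2} = c\sum_{j=1}^{c}\|\mathbf{w}_{j}\|^{2} - c^{2}\|\bar{\mathbf{w}}\|^{2}.
\end{equation}

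Then I would expand the right-hand side similarly: $c\sum_{j=1}^{c}\|\mathbf{w}_{j}-\bar{\mathbf{w}}\|^{2} = c\sum_{j=1}^{c}\|\mathbf{w}_{j}\|^{2} - 2c\,\bar{\mathbf{w}}^{T}\sum_{j=1}^{c}\mathbf{w}_{j} + c^{2}\|\bar{\mathbf{w}}\|^{2}$, and use $\sum_{j=1}^{c}\mathbf{w}_{j} = c\bar{\mathbf{w}}$ to rewrite the middle term as $-2c^{2}\|\bar{\mathbf{w}}\|^{2}$, so the whole expression collapses to $c\sum_{j=1}^{c}\|\mathbf{w}_{j}\|^{2} - c^{2}\|\bar{\mathbf{w}}\|^{2}$, which matches the left-hand side. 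This being a purely algebraic manipulation, there is essentially no obstacle; the only point requiring a little care is bookkeeping the factor of $\tfrac12$ from the symmetrization and correctly handling the cross term $\sum_{j,k}\mathbf{w}_{j}^{T}\mathbf{w}_{k} = \|\sum_{j}\mathbf{w}_{j}\|^{2}$, so I would state that step explicitly.
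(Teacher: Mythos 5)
Your proposal is correct and follows essentially the same route as the paper: symmetrize the pairwise sum via $\sum_{j<k} = \frac{1}{2}\sum_{j,k}$ and expand into inner products; the only cosmetic difference is that you reduce both sides to the common expression $c\sum_{j}\|\mathbf{w}_{j}\|^{2} - c^{2}\|\bar{\mathbf{w}}\|^{2}$, whereas the paper completes the square to transform the left-hand side directly into the right-hand side.
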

\begin{proof}
	\begin{align}
		&\sum_{j<k} \|\mathbf{w}_{j} - \mathbf{w}_{k}\|^{2} \notag \\
		= &  \frac{1}{2} \sum_{j=1}^{c} \sum_{k=1}^{c} \left(\mathbf{w}_{j}^{T} \mathbf{w}_{j}  -2 \mathbf{w}_{j}^{T} \mathbf{w}_{k} + \mathbf{w}_{k}^{T} \mathbf{w}_{k} \right) \notag \\
		= & c \sum_{j=1}^{c}  \left(\mathbf{w}_{j}^{T} \mathbf{w}_{j} - \frac{1}{c}  \mathbf{w}_{j}^{T} \sum_{k=1}^{c} \mathbf{w}_{k} \right) \notag \\
		= & c \left(\sum_{j=1}^{c}  \left(\mathbf{w}_{j}^{T} \mathbf{w}_{j} - \frac{2}{c} \sum_{k=1}^{c} \mathbf{w}_{j}^{T} \mathbf{w}_{k} \right) + \sum_{j=1}^{c} \frac{1}{c}  \mathbf{w}_{j}^{T} \sum_{k=1}^{c} \mathbf{w}_{k}\right) \notag \\
		= & c \sum_{j=1}^{c}  \left(\mathbf{w}_{j}^{T} \mathbf{w}_{j} - \frac{2}{c} \sum_{k=1}^{c} \mathbf{w}_{j}^{T} \mathbf{w}_{k} + \frac{1}{c^{2}} \sum_{j=1}^{c}  \mathbf{w}_{j}^{T} \sum_{k=1}^{c} \mathbf{w}_{k} \right) \notag  \\
		= & c \sum_{j=1}^{c} \|\mathbf{w}_{j} - \frac{1}{c} \sum_{k=1}^{c}\mathbf{w}_{k}\|^{2},
	\end{align}
	which completes the proof.
\end{proof}

\begin{theorem}
	The optimization problem (\ref{i_obj}) with $p=2$ has the same optimal solution as
	\begin{equation} \label{sq_obj}
		\min_{\mathbf{W}, \mathbf{b} } \sum_{i=1}^{n} \sum_{k \neq y_{i}}	[1 - f_{y_{i}k}(\mathbf{x}_{i}) ]_{+} +   \lambda c \sum_{k=1}^c\left\|\mathbf{w}_{k}\right\|_2^2  + \varepsilon \|\mathbf{b}\|_{2}^{2}.
	\end{equation}
	
\end{theorem}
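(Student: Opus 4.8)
The plan is to reduce problem~(\ref{i_obj}) with $p=2$ and problem~(\ref{sq_obj}) to one and the same constrained problem, using Theorem~\ref{th_constraints} and Lemma~\ref{variance}, and then to invoke strict convexity so that the two unique minimizers must coincide.

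First I would observe that the objective of~(\ref{i_obj}) with $p=2$ has the form $F(\mathbf{W},\mathbf{b})+\varepsilon\|\mathbf{W}\|_{F}^{2}$, where $F(\mathbf{W},\mathbf{b})=\sum_{i=1}^{n}\sum_{k\neq y_{i}}[1-f_{y_{i}k}(\mathbf{x}_{i})]_{+}+\lambda\sum_{k<l}\|\mathbf{w}_{k}-\mathbf{w}_{l}\|_{2}^{2}+\varepsilon\|\mathbf{b}\|_{2}^{2}$ depends on $\mathbf{W}$ only through the differences $\mathbf{w}_{k}-\mathbf{w}_{l}$ and is therefore column--translation invariant in $\mathbf{W}$, i.e.\ $F(\mathbf{W},\mathbf{b})=F(\mathbf{W}+\bm{\sigma}\mathbf{1}^{T},\mathbf{b})$ for every $\bm{\sigma}\in\mathbb{R}^{d}$. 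Applying Theorem~\ref{th_constraints} with $\mathbf{Z}=\mathbf{W}$ (to the $\mathbf{W}$--block for each fixed $\mathbf{b}$, then minimizing over $\mathbf{b}$) shows that the minimizer of~(\ref{i_obj}) with $p=2$ equals the minimizer of $\min F(\mathbf{W},\mathbf{b})$ subject to $\mathbf{W}\mathbf{1}=\mathbf{0}$. On the feasible set $\mathbf{W}\mathbf{1}=\mathbf{0}$ we have $\tfrac{1}{c}\sum_{l}\mathbf{w}_{l}=\mathbf{0}$, so Lemma~\ref{variance} gives $\sum_{k<l}\|\mathbf{w}_{k}-\mathbf{w}_{l}\|_{2}^{2}=c\sum_{k}\|\mathbf{w}_{k}-\tfrac{1}{c}\sum_{l}\mathbf{w}_{l}\|_{2}^{2}=c\|\mathbf{W}\|_{F}^{2}$. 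Hence~(\ref{i_obj}) with $p=2$ has the same optimal solution as the problem (P) given by $\min \sum_{i=1}^{n}\sum_{k\neq y_{i}}[1-f_{y_{i}k}(\mathbf{x}_{i})]_{+}+\lambda c\|\mathbf{W}\|_{F}^{2}+\varepsilon\|\mathbf{b}\|_{2}^{2}$ subject to $\mathbf{W}\mathbf{1}=\mathbf{0}$, which is precisely the objective of~(\ref{sq_obj}) together with the extra constraint $\mathbf{W}\mathbf{1}=\mathbf{0}$.

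Next I would argue that the unconstrained problem~(\ref{sq_obj}) already satisfies $\mathbf{W}\mathbf{1}=\mathbf{0}$ at its optimum, so the extra constraint is vacuous. Writing~(\ref{sq_obj}) as $\min G(\mathbf{W},\mathbf{b})+\lambda c\|\mathbf{W}\|_{F}^{2}$ with $G(\mathbf{W},\mathbf{b})=\sum_{i=1}^{n}\sum_{k\neq y_{i}}[1-f_{y_{i}k}(\mathbf{x}_{i})]_{+}+\varepsilon\|\mathbf{b}\|_{2}^{2}$ still column--translation invariant in $\mathbf{W}$, Theorem~\ref{th_constraints} applied once more (now with weight $\lambda c$ in the role of $\varepsilon$) yields that its minimizer coincides with that of $\min G$ subject to $\mathbf{W}\mathbf{1}=\mathbf{0}$, and in particular satisfies $\mathbf{W}\mathbf{1}=\mathbf{0}$. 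Equivalently and more directly: if the optimum of~(\ref{sq_obj}) had nonzero column mean $\bar{\mathbf{w}}$, then replacing $\mathbf{W}$ by $\mathbf{W}-\bar{\mathbf{w}}\mathbf{1}^{T}$ leaves the hinge loss and $\varepsilon\|\mathbf{b}\|_{2}^{2}$ unchanged while strictly decreasing $\sum_{k}\|\mathbf{w}_{k}\|_{2}^{2}$ (by the averaging lemma inside the proof of Theorem~\ref{th_constraints}), contradicting optimality.

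Finally I would combine these facts. Both~(\ref{i_obj}) with $p=2$ and~(\ref{sq_obj}) are strictly convex---the $\ell_{2}$ penalties $\varepsilon\|\mathbf{W}\|_{F}^{2}$ (resp.\ $\lambda c\|\mathbf{W}\|_{F}^{2}$) and $\varepsilon\|\mathbf{b}\|_{2}^{2}$ make the whole objective strict---so each admits a unique minimizer, as does (P) (a strictly convex problem over an affine subspace). The minimizer of~(\ref{sq_obj}) is feasible for (P) by the previous paragraph and, since the objectives of~(\ref{sq_obj}) and (P) agree on $\{\mathbf{W}\mathbf{1}=\mathbf{0}\}$, it is also the unique minimizer of (P); by the first step that point is the minimizer of~(\ref{i_obj}) with $p=2$, completing the argument. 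I expect the main obstacle to be keeping the two auxiliary penalties straight: the key is to notice that Theorem~\ref{th_constraints} \emph{absorbs} the $\varepsilon\|\mathbf{W}\|_{F}^{2}$ term of~(\ref{i_obj}) into the mean-zero constraint rather than carrying it along, so that after Lemma~\ref{variance} the reduced problem matches~(\ref{sq_obj}) exactly (up to that constraint), together with the small check---via strict convexity and the translation argument---that~(\ref{sq_obj}) already meets the constraint.
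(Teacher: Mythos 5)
Your proof is correct and takes essentially the same route as the paper's: Theorem \ref{th_constraints} is used to trade the $\varepsilon\|\mathbf{W}\|_{F}^{2}$ term for the column-mean-zero condition at the optimum, and Lemma \ref{variance} converts $\sum_{k<l}\|\mathbf{w}_{k}-\mathbf{w}_{l}\|_{2}^{2}$ into $c\sum_{k}\|\mathbf{w}_{k}\|_{2}^{2}$. The only difference is bookkeeping: the paper keeps the $\varepsilon$-penalty and lets the combined coefficient tend to $\lambda c$ as $\varepsilon\rightarrow 0$, whereas you carry the mean-zero constraint explicitly and add the (correct, and slightly more careful) direct check that the unconstrained optimum of (\ref{sq_obj}) already satisfies $\mathbf{W}\mathbf{1}=\mathbf{0}$.
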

\begin{proof}
	According to Theorem \ref{th_Sigma}, problem
	\begin{equation} \label{obj}
		\begin{aligned}
			\min_{\mathbf{W}  \in \mathbb{R}^{d \times c}, \mathbf{b} \in \mathbb{R}^c} \sum_{k<l} \sum_{y_{i}\in \{k,l\}} [1-y_{ikl}f_{kl}(\mathbf{x}_{i}) ]_{+} \\
			+ \lambda \sum_{k<l} \|\mathbf{w}_{k}-\mathbf{w}_{l}\|_{2}^{p} + \varepsilon \left(\|\mathbf{W}\|_{F}^{2} + \|\mathbf{b}\|_{2}^{2}\right).
		\end{aligned}
	\end{equation}
	is equivalent to
	\begin{equation} \label{i_obj}
		\begin{aligned}
			\min_{\mathbf{W}  \in \mathbb{R}^{d \times c}, \mathbf{b} \in \mathbb{R}^c} \sum_{i=1}^{n} \sum_{k \neq y_{i}}	[1 - f_{y_{i}k}(\mathbf{x}_{i}) ]_{+} + \\
			 \lambda\sum_{k<l} \|\mathbf{w}_{k}-\mathbf{w}_{l}\|_{2}^{p} + \varepsilon \left(\|\mathbf{W}\|_{F}^{2} + \|\mathbf{b}\|_{2}^{2}\right).
		\end{aligned}
	\end{equation}
	According to Theorem \ref{th_constraints}, when $\mathbf{W}$ in problem (\ref{i_obj}) is taken to be optimal, there is $\sum_{k = 1}^{c}\mathbf{w}_{k} = \mathbf{0}$.	
	Due to the conclusion of Lemma \ref{variance}, with $p = 2$, the optimization problem (\ref{i_obj}) is equivalent to
	\begin{equation}\label{c_eps}
		\begin{aligned}
			\min_{\mathbf{W} \in \mathbb{R}^{d \times c}, \mathbf{b} \in \mathbb{R}^c} \sum_{i=1}^{n} \sum_{k \neq y_{i}}	[1 - f_{y_{i}k}(\mathbf{x}_{i}) ]_{+}\\
			 + \lambda(c + \varepsilon)  \sum_{k=1}^c\left\|\mathbf{w}_{k}\right\|_2^2     + \varepsilon \|\mathbf{b}\|_{2}^{2}.
		\end{aligned}
	\end{equation}
	Because $\varepsilon \rightarrow 0$, problem (\ref{sq_obj}) and problem (\ref{c_eps}) have the same optimal solution.
	This concludes the proof.

	%
\end{proof}

\begin{theorem}
	Let $\mathcal{F}$ be the multivariate linear model from $\mathcal{X}$ into $\mathbb{R}^{c}$.
	$\mathcal{F}$ are endowed  with the Euclidean norm.
	If $\mathcal{X}$ is included in a ball of radius $\Lambda_{\mathcal{X}}$ about the origin, then $\forall f \in \mathcal{F}$(parametrized by $\mathbf{W}$ and $\mathbf{b}$) the following bound holds:
	\begin{equation}
		\|T(f)\|_{l_{\infty}, l_{p}} \leqslant  \Lambda_{\mathcal{X}} \left(\sum_{k<l}\|\mathbf{w}_{k} - \mathbf{w}_{l}\|_{2}^{p}\right)^{\frac{1}{p}}.
	\end{equation}
\end{theorem}

\begin{proof}
	According to the definition of functional $T$, $\forall \mathbf{x} \in \mathcal{X}$, $\forall f \in \mathcal{F}$,
	\begin{equation}
		\begin{aligned}
			\|T(f)\|_{l_{\infty}, l_{p}} &= \max_{i \in [2n]} \left(\sum_{k<l} | (\mathbf{w}_{k} - \mathbf{w}_{l})^{T} \mathbf{x}_{i}|^{p}\right)^{\frac{1}{p}}\\
			&\leqslant \max_{i \in [2n]} \left(\sum_{k<l} \| \mathbf{w}_{k} - \mathbf{w}_{l} \|_{2}^{p} \|\mathbf{x}_{i}\|_{2}^{p}\right)^{\frac{1}{p}}\\
			&\leqslant \Lambda_{\mathcal{X}} \left(\sum_{k<l}\|\mathbf{w}_{k} - \mathbf{w}_{l}\|_{2}^{p}\right)^{\frac{1}{p}},
		\end{aligned}
	\end{equation}
	which completes the proof.
\end{proof}

\subsection{A.2 Illustrative experiments}
To demonstrate the varying levels of difficulty in distinguishing between different classes, we conducted a few supplementary experiments.
Figure \ref{supplementary_tsne} displays the t-SNE visualization results of several commonly used datasets.
It can be observed that the reduced-dimensional representations with greater confusion are concentrated in certain localized areas, such as those enclosed by the black boxes.
\begin{figure} [H]
	\centering
	\subfloat[COIL20]{
		\includegraphics[height=2.2cm, width=2.4cm]{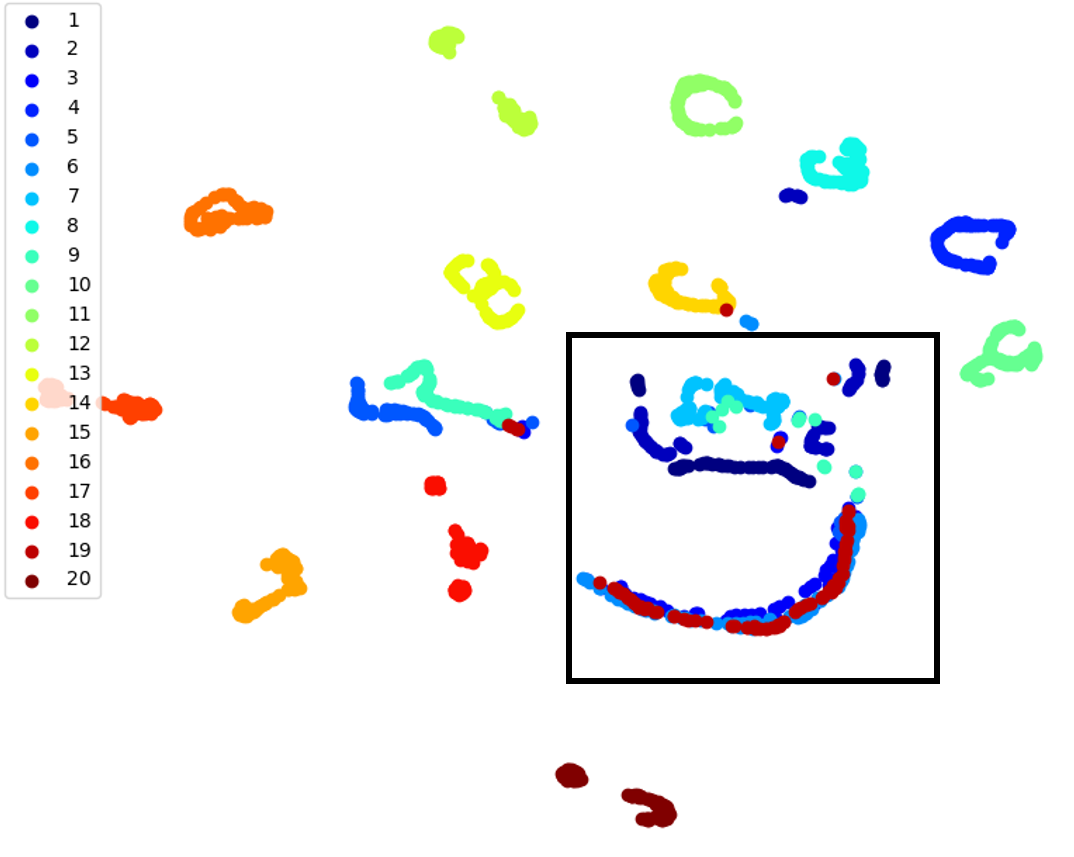} \label{tsne_COIL20}}  
	\hspace{2mm} 
	\subfloat[Dermatology]{
		\includegraphics[ height=2cm, width=2.4cm]{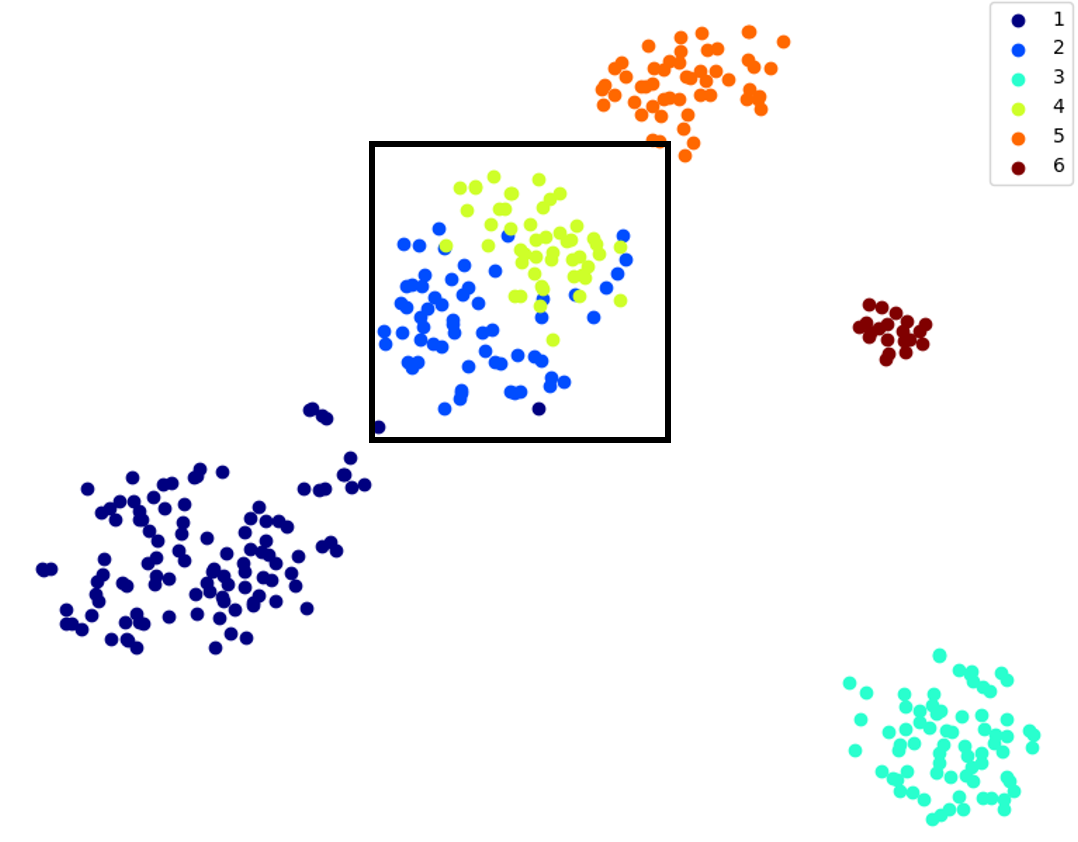} 
		\label{tsne_Der}}
	\hspace{2mm} 
	\subfloat[Segment.]{
		\includegraphics[height=2.2cm, width=2.4cm]{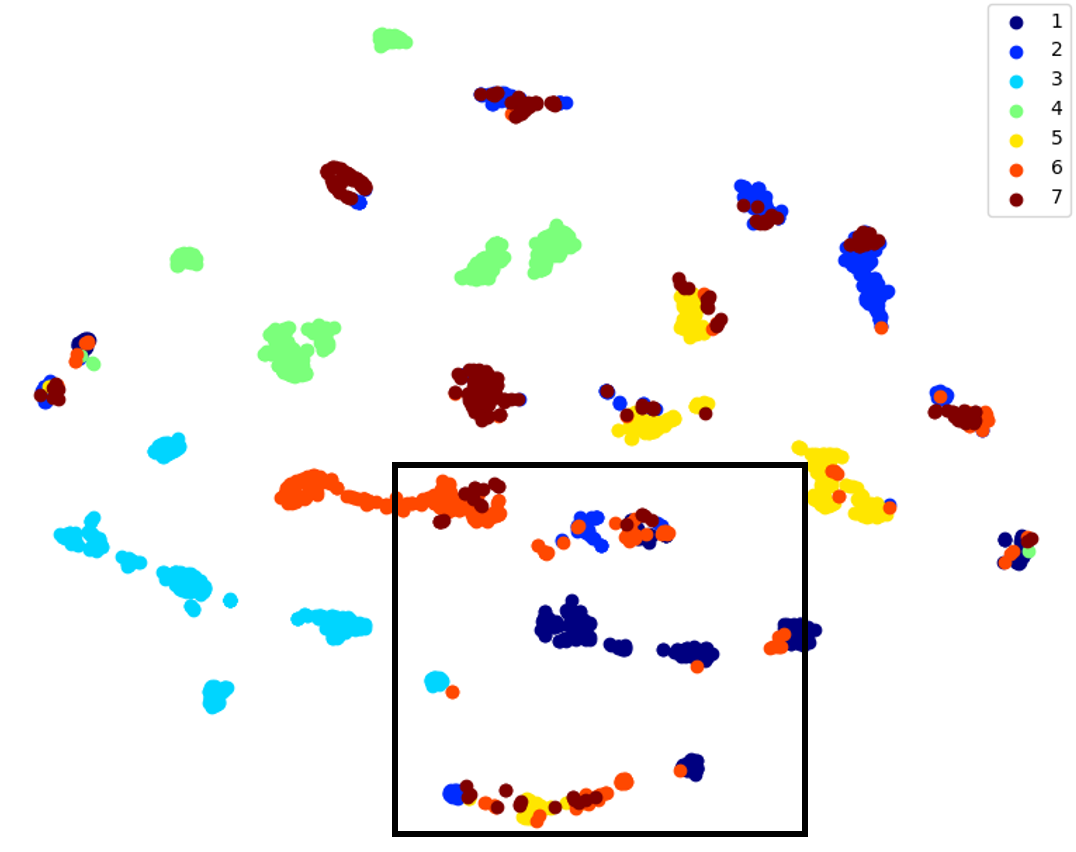} \label{tsne_segment}}
	\caption{Two-dimensional visualization of common datasets.}
	\label{supplementary_tsne}
\end{figure}

On commonly used datasets, using the OvO strategy for binary logistic regression, the confusion matrices of several results are shown in Figure \ref{Confusion}.
It can be observed that the non-zero values on the off-diagonal are concentrated in certain subsets (the displayed confusion matrices have undergone a rearrangement of classes), indicating that misclassification is mainly concentrated in some subsets (within rad boxes) of all classes.
\begin{figure} [H]
	\centering
	\subfloat[Iris with OvO LR.]{
		\includegraphics[height=2.2cm, width=2.4cm]{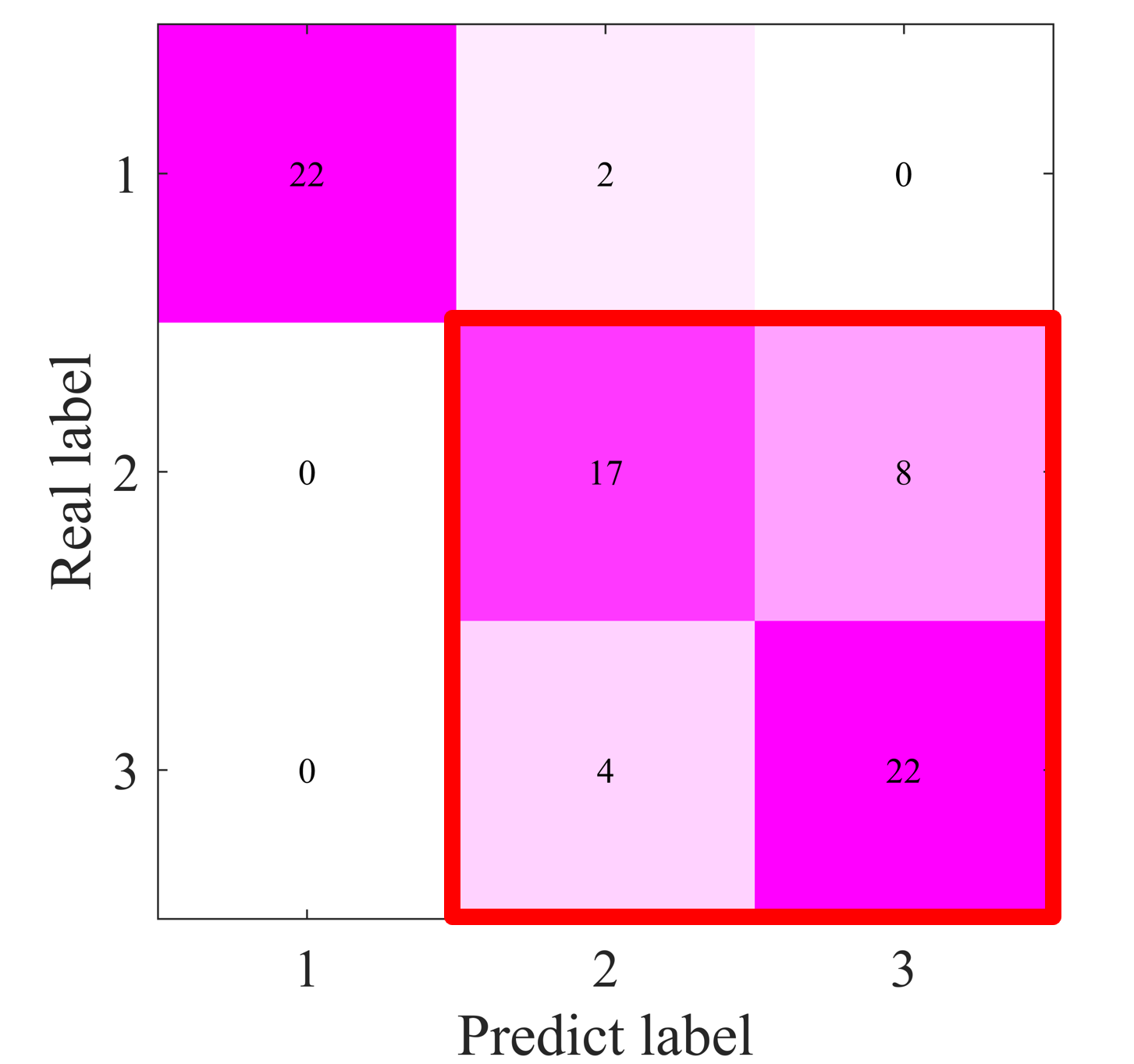} \label{confusion_SVM_Iris}}  
	\hspace{2mm} 
	\subfloat[Dermatology with OvO LR.]{
		\includegraphics[ height=2.2cm, width=2.4cm]{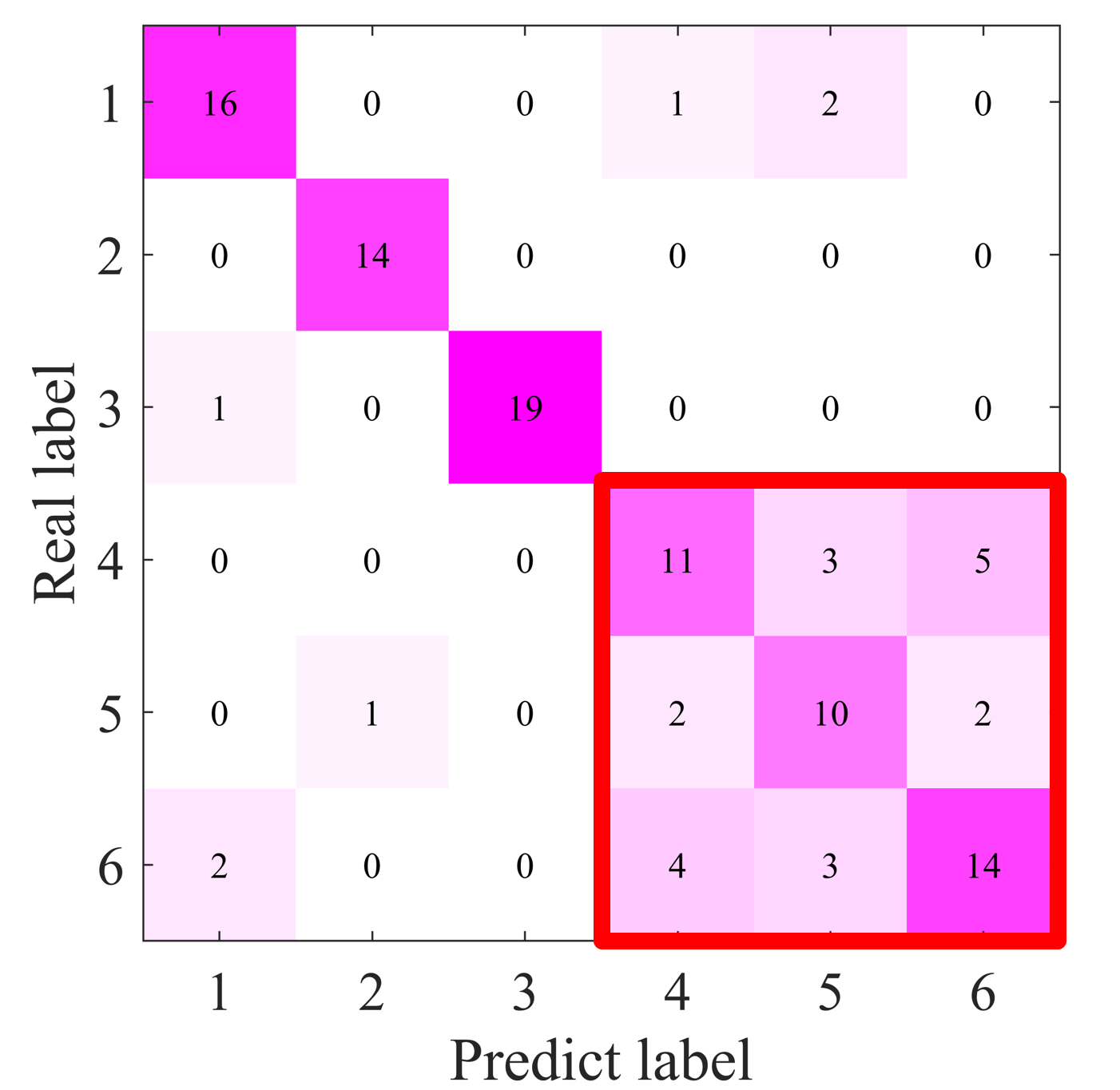} 
		\label{confusion_SVM_Dermatology}}
	\hspace{2mm} 
	\subfloat[Optdigits with OvO LR.]{
		\includegraphics[height=2.2cm, width=2.4cm]{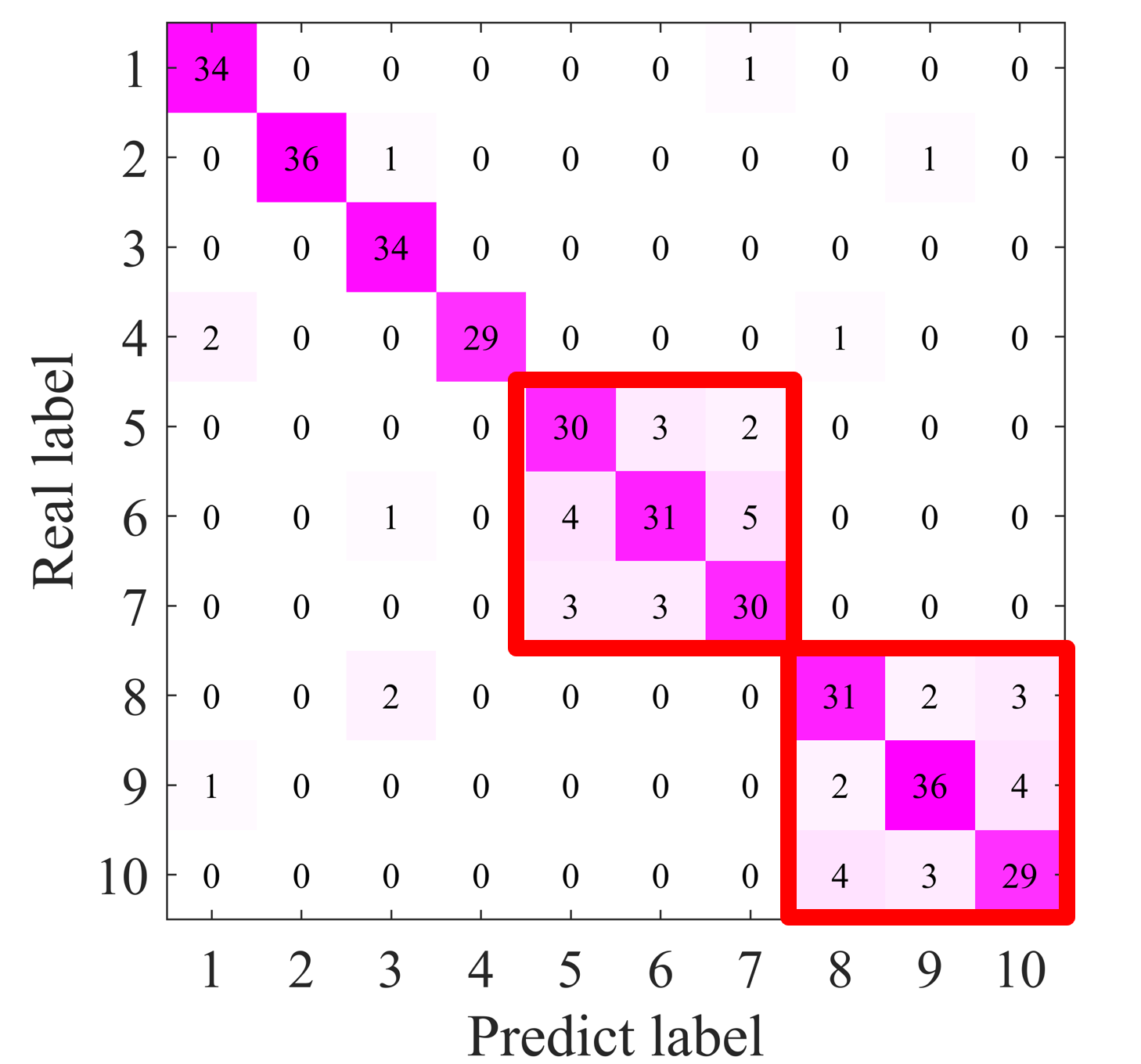} \label{confusion_SVM_Optdigits}}
	\caption{Confusion matrices by OvO strategy.}
	\label{Confusion}
\end{figure}
To pay more attention on those "pivotal" class pairs, we adopt maximizing margin lower bound.

\subsection{A.3 Extension to softmax loss}

Logistic regression transforms the input features into a probability-valued output for each class:	
\begin{equation}
	p(y \mid \mathbf{x})=\frac{1}{1+\exp \left(-y\left(\mathbf{w}^T \mathbf{x}+b\right)\right)},\quad y \in \{-1, 1\}.
\end{equation}
The model parameters can be estimated by the maximum likelihood estimation method $\max_{\mathbf{w}, b} \prod_{i=1}^{n}  p(y_{i} \mid \mathbf{x}_{i})$.
Converting it into an easily solvable logarithmic form, the standard logistic regression turns out to be the following problem:
\begin{equation}
	\min_{\mathbf{w}, b} \mathcal{L}(\mathbf{w}, b) = \sum_{i=1}^{n}  \log\left(1+\exp \left(-y_{i}\left(\mathbf{w}^T \mathbf{x}_{i}+b\right)\right)\right)
\end{equation}
Let $f(t) = log(1+\exp(-t))$, then $f(y_{i}(\mathbf{w}^T \mathbf{x}_{i}+b))$ represents the loss of sample $\mathbf{x}_{i}$.
The $\ell_{2}$-regularized logistic regression is trained form the optimization problem
\begin{equation}\label{RLR_loss}
	\min_{\mathbf{w}, b} \sum_{i =1}^{n} f(y_{i}(\mathbf{w}^T \mathbf{x}_{i}+b)) + \lambda \|\mathbf{w}\|^{2}.
\end{equation}
We now explicate that minimizing the regularizer $\|\mathbf{w}\|^{2}$ not only regulates the model's complexity  but also resembles the maximization of the "margin", the concept adopted in SVM.
%
%
%
Firsty, the linear decision hyperplane for binary logistic regression is $\mathbf{w}^T \mathbf{x}+b = 0$.
Specifically, for $y_{i} = 1$, $p(y_{i} \mid \mathbf{x}_{i}) > 0.5 \Leftrightarrow \mathbf{w}^T \mathbf{x}_{i}+b > 0$ and vice versa.
The converse holds for $y_{i} = -1$.
Assuming the logistic regression loss function to be lower than a stable positive real number $\varepsilon$, and with the hyperplane normal vector (i.e. $\mathbf{w}$) fixed, the solution set of the bias term $b$ can be expressed as $\mathcal{B} = \{b \in \mathbb{R} | \mathcal{L}(\mathbf{w}, b) \leqslant \varepsilon\}$.
\begin{lemma}
	By definition, if non-empty, set $\mathcal{B}$ is connected and has a maximum and a minimum.	
\end{lemma}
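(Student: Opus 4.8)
The plan is to reduce everything to two classical facts about a fixed-$\mathbf{w}$ slice of the loss: it is convex in $b$, and it is coercive in $b$. First I would record that $f(t)=\log(1+e^{-t})$ is smooth with $f''(t)=e^{-t}/(1+e^{-t})^{2}>0$, hence strictly convex. Consequently each summand $b\mapsto f\bigl(y_{i}(\mathbf{w}^{T}\mathbf{x}_{i}+b)\bigr)$ is a strictly convex function composed with a non-constant affine map, so it is strictly convex in $b$, and $\mathcal{L}(\mathbf{w},\cdot)=\sum_{i=1}^{n} f\bigl(y_{i}(\mathbf{w}^{T}\mathbf{x}_{i}+b)\bigr)$ is strictly convex and continuous on $\mathbb{R}$.

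Connectedness is then immediate: $\mathcal{B}$ is the $\varepsilon$-sublevel set of a convex function of a single real variable, and sublevel sets of convex functions are convex; a convex subset of $\mathbb{R}$ is an interval, hence connected. Closedness of $\mathcal{B}$ follows from continuity of $\mathcal{L}(\mathbf{w},\cdot)$, since $\mathcal{B}$ is the preimage of the closed half-line $(-\infty,\varepsilon]$.

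For the existence of a maximum and a minimum, the key step is coercivity, $\mathcal{L}(\mathbf{w},b)\to+\infty$ as $b\to\pm\infty$. Under the standing assumption that the training set contains at least one instance of each class, choose indices $i_{+}$ with $y_{i_{+}}=1$ and $i_{-}$ with $y_{i_{-}}=-1$. As $b\to+\infty$ the term $f\bigl(y_{i_{-}}(\mathbf{w}^{T}\mathbf{x}_{i_{-}}+b)\bigr)=\log\bigl(1+e^{\mathbf{w}^{T}\mathbf{x}_{i_{-}}+b}\bigr)$ grows like $b$, and since all summands are nonnegative we get $\mathcal{L}(\mathbf{w},b)\to+\infty$; symmetrically the term indexed by $i_{+}$ forces $\mathcal{L}(\mathbf{w},b)\to+\infty$ as $b\to-\infty$. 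Hence $\mathcal{B}$ is bounded. A nonempty closed and bounded subset of $\mathbb{R}$ is compact, so $\mathcal{B}$ attains its infimum and supremum; combined with the interval property from the previous paragraph this identifies $\mathcal{B}=[b_{\min},b_{\max}]$.

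The only real obstacle is the coercivity step, which quietly requires both classes to be represented among the samples — a hypothesis always satisfied in a genuine binary classification problem, and one I would state explicitly rather than leave implicit. Everything else is routine convex-analysis bookkeeping.
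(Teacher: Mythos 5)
Your proof is correct and follows essentially the same route as the paper: show that the fixed-$\mathbf{w}$ loss is a well-behaved one-variable function of $b$ (you via strict convexity and convex sublevel sets, the paper via a derivative/unimodality argument plus the Intermediate Value Theorem), then conclude from closedness and coercivity that $\mathcal{B}$ is a compact interval attaining its maximum and minimum. Your write-up is in fact slightly more careful than the paper's, since you retain the $\mathbf{w}^{T}\mathbf{x}_{i}$ offsets that the paper's simplified form $n_{1}\log(1+e^{-b})+(n-n_{1})\log(1+e^{b})$ drops, and you state explicitly the assumption that both classes appear in the training set, which the paper's coercivity claim uses only implicitly.
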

\begin{proof}
	To show that the set $\mathcal{B}$ is connected, we begin by examining the properties of the function $\mathcal{L}$.
	
	\begin{equation}
		\mathcal{L}(\mathbf{w}, b) = \sum_{i=1}^{n}  \log\left(1+\exp \left(-y_{i}\left(\mathbf{w}^T \mathbf{x}_{i}+b\right)\right)\right).
	\end{equation}
	With samples and normal vector of the hyperplane (i.e. $\mathbf{w}$) fixed, it can be simplified into
	\begin{equation}
		\mathcal{L}(b) = n_{1} \log\left(1+\exp \left(-b\right)\right) + (n-n_{1}) \log\left(1+\exp \left(b\right)\right),
	\end{equation}
	where $n_{1}$ represents the number of positive instances in the training set.
	By taking the derivative, it can be deduced that $\mathcal{L}(b)$ first decreases and then increases in $\mathcal{R}$.
	Therefore, by the Intermediate Value Theorem, it is evident that $\mathcal{B}$ is connected.
	
	$\mathcal{B}$ is a closed set by definition.
	Besides, there is $\lim_{b \rightarrow \infty}\mathcal{L}(b) = \infty$, so $\mathcal{B}$ is bounded.
	According to the Extreme Value Theorem, $\mathcal{B}$ has a maximum and a minimum.
\end{proof}

\begin{figure} [H]
	\centering
	\subfloat[Margin in SVM.]{
		\includegraphics[height=2.4cm, width=3.2cm]{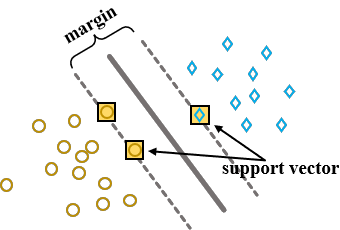} \label{margin_SVM}}  
	\hspace{5mm} 
	\subfloat[Margin in LR.]{
		\includegraphics[ height=2.4cm, width=3.2cm]{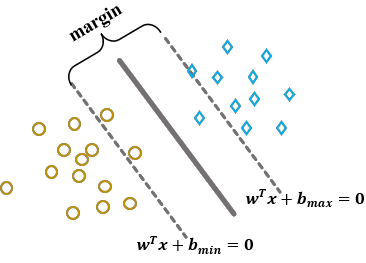} \label{margin_LR}}
	\\
	\vspace{4mm} 
	\subfloat[Multi-class case.]{
		\includegraphics[ height=4cm, width=6cm]{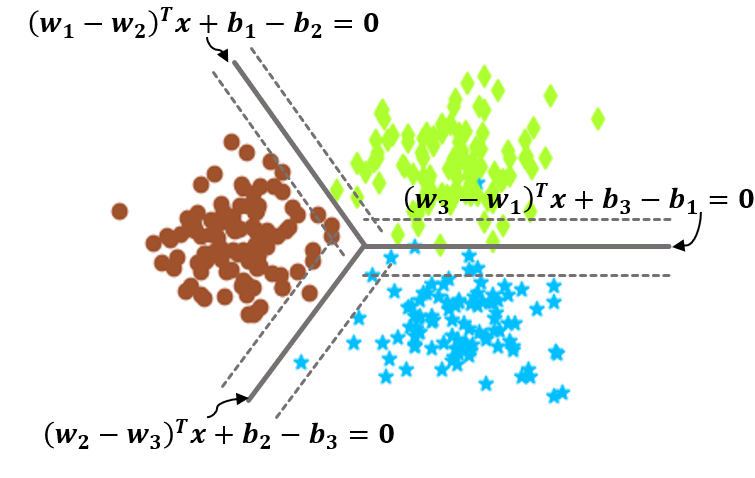} \label{multi_class}}
	\caption{Some illustrative figures.}
\end{figure}

Let $b_{max}$ and $b_{min}$ denote the maximum and minimum in $\mathcal{B}$, respectively.
In SVM, margin is intuitively defined as the shortest distance between two convex hulls of the two classes.
Analogously, as illustrated in Figure \ref{margin_SVM} \ref{margin_LR}, the margin in logistic regression can be defined as the distance between the two farthest hyperplanes in $\mathcal{B}$:
\begin{equation}
	margin = \frac{b_{max} - b_{min}}{\|\mathbf{w}\|}
\end{equation}
Multiplying $\mathbf{w}$ by a constant does not affect the actual hyperplane, so let $\mathbf{w} = \frac{\mathbf{w}}{b_{max} - b_{min}}$.
Therefore the margin maximization problem can be written as
\begin{equation}\label{loss_leq}
	\max_{\mathbf{w}, b} \frac{1}{\|\mathbf{w}\|} \quad \text{ s.t. } \mathcal{L}(\mathbf{w}, b) \leqslant \varepsilon, \varepsilon > 0.
\end{equation}
This is similar to the soft-margin SVM.
Nevertheless, the upper bound of the loss $\varepsilon$ is not possible to be pre-determined and should be optimized.
Therefore, $\varepsilon$ can be incorporated into the optimization problem:
\begin{equation}\label{neq_opt}
	\min_{\mathbf{w}, b, \varepsilon} \|\mathbf{w}\|^{2} + C \varepsilon \quad \text{ s.t. } \mathcal{L}(\mathbf{w}, b) \leqslant \varepsilon, \varepsilon > 0,
\end{equation}
where $C$ is the trade-off parameter between minimizing loss and maximizing margin.
Obviously, by letting $\varepsilon = \mathcal{L}(\mathbf{w}, b)$ and $\lambda = \frac{1}{C}$, problem (\ref{RLR_loss}) is equivalent to problem (\ref{neq_opt}).
This implies that in the case where the training loss can no longer be significantly reduced, problem (\ref{RLR_loss}) tends to search for a hyperplane with large margin, with the importance of both balanced by $\lambda$.

To summarize the discussion, if the hinge loss in M$^3$SVM is replaced by a logistic regression loss, the complete model can be written as
\begin{equation}\label{overall_obj}
	\begin{aligned}
		\min_{\mathbf{W} \in \mathbb{R}^{d \times c}, \mathbf{b} \in \mathbb{R}^{c}} \sum_{j<k}   \sum_{y_{i} \in \{j,k\}} f\left[y_{i}\left((\mathbf{w}_{j} - \mathbf{w}_{k})^T \mathbf{x}_{i}+b_{j} - b_{k}\right)\right]\\
		 + \lambda  \sum_{j<k}  \|\mathbf{w}_{j} - \mathbf{w}_{k}\|^{p}  + \delta \left(\|\mathbf{W}\|_{F}^{2} + \|\mathbf{b}\|^{2}\right)  
	\end{aligned}
\end{equation}

\noindent \textbf{Connection to softmax loss:}
\begin{theorem}\label{th_Sigma}
	The following equation holds,
	\begin{equation}\label{trans}
		\begin{aligned}
			\sum_{j<k}   \sum_{y_{i} \in \{j,k\}} f\left[y_{i}\left((\mathbf{w}_{j} - \mathbf{w}_{k})^T \mathbf{x}_{i}+b_{j} - b_{k}\right)\right] \\
			= \sum_{i=1}^{n} \sum_{j \neq y_{i}}	f \left((\mathbf{w}_{y_{i}} - \mathbf{w}_{j})^T \mathbf{x}_{i}+b_{y_{i}} - b_{j}\right).
		\end{aligned}
	\end{equation}
\end{theorem}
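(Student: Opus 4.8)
The plan is to prove the identity by the same reindexing over ordered and unordered class pairs that was used for the hinge-loss version in~(\ref{Theorem 2}); the only new point is that the sign flip carried by the label indicator is now absorbed by the antisymmetry of the pairwise score instead of by the special form of $[1-\cdot]_{+}$. First I would introduce the shorthand $g_{jk}(\mathbf{x}) = (\mathbf{w}_{j}-\mathbf{w}_{k})^{T}\mathbf{x} + b_{j} - b_{k}$, record the trivial identity $g_{kj}(\mathbf{x}) = -g_{jk}(\mathbf{x})$, and write the left-hand summand for a pair $j<k$ as $f(y_{ijk}\, g_{jk}(\mathbf{x}_{i}))$, where $y_{ijk}=1$ if $y_{i}=j$ and $y_{ijk}=-1$ if $y_{i}=k$.

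Next I would fix a pair $j<k$ and split $\sum_{y_{i}\in\{j,k\}}$ into the cases $y_{i}=j$ and $y_{i}=k$. In the first case the summand is $f(g_{jk}(\mathbf{x}_{i})) = f((\mathbf{w}_{y_{i}}-\mathbf{w}_{k})^{T}\mathbf{x}_{i}+b_{y_{i}}-b_{k})$; in the second case it equals $f(-g_{jk}(\mathbf{x}_{i})) = f(g_{kj}(\mathbf{x}_{i})) = f((\mathbf{w}_{y_{i}}-\mathbf{w}_{j})^{T}\mathbf{x}_{i}+b_{y_{i}}-b_{j})$. Summing the first family over all pairs $j<k$ gives, for each sample $i$, exactly one term $f((\mathbf{w}_{y_{i}}-\mathbf{w}_{m})^{T}\mathbf{x}_{i}+b_{y_{i}}-b_{m})$ for every $m>y_{i}$; summing the second family over all $j<k$ and relabelling the pair indices (turning $\sum_{j<k}\sum_{y_{i}=k}$ into $\sum_{k}\sum_{j<k}\sum_{y_{i}=k}$, precisely as in the hinge-loss computation) gives, for each sample $i$, one term $f((\mathbf{w}_{y_{i}}-\mathbf{w}_{m})^{T}\mathbf{x}_{i}+b_{y_{i}}-b_{m})$ for every $m<y_{i}$. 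Merging the two families, sample $i$ contributes such a term for every $m\neq y_{i}$, which is the right-hand side of~(\ref{trans}).

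In effect the chain of equalities can be copied almost verbatim from~(\ref{Theorem 2}) with each occurrence of $[1-f_{kl}(\mathbf{x}_{i})]_{+}$ replaced by $f(g_{kl}(\mathbf{x}_{i}))$, since the manipulation uses only $g_{kl}=-g_{lk}$ and bookkeeping of index ranges — no convexity, monotonicity, or any other property of $f$ is required. I therefore do not expect a genuine obstacle; the one spot demanding care is keeping the pair-index relabelling consistent, i.e. recognising that a given ordered term $(\mathbf{w}_{y_{i}}-\mathbf{w}_{m})$ arises from the pair $(y_{i},m)$ when $m>y_{i}$ and from the pair $(m,y_{i})$ when $m<y_{i}$, which is routine and identical to the already-established hinge-loss case.
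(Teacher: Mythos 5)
Your proposal is correct and follows essentially the same route as the paper: the paper's own argument (spelled out for the hinge-loss analogue in the chain of equalities of Theorem \ref{th_Sigma}) is exactly this case split on $y_i=j$ versus $y_i=k$, use of the antisymmetry $g_{kj}=-g_{jk}$, and relabelling of the pair indices, and it carries over verbatim to the logistic loss since no property of $f$ beyond bookkeeping is used. Your observation that the sign flip is absorbed by antisymmetry of the pairwise score rather than by the form of $[1-\cdot]_{+}$ is precisely the point, so there is no gap.
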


Following most existing works, we equally define the softmax loss as the cross-entropy loss of the last fully connected layer after the softmax function with the true label, shown in Figure \ref{arch}.
\begin{figure*}[h]\label{arch}
	\centering
	\includegraphics[height = 2cm, width=16cm]{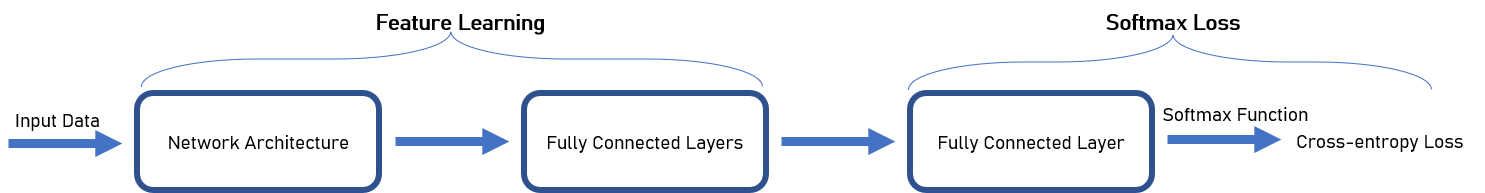}
	\caption{The architecture for the proposed ISM3.}
	\label{arch}
\end{figure*}

\begin{lemma}
	Softmax loss can be written as $\sum_{i=1}^{n} \log \left( 1 + \sum_{j \neq y_{i}} \exp((\mathbf{w}_{j} - \mathbf{w}_{y_{i}})^{T} \mathbf{x}_{i} + b_{j} - b_{y_{i}}) \right)$.
\end{lemma}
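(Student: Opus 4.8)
The plan is to start from the standard definition of the softmax loss as the cross-entropy between the softmax output and the one-hot true label, and then perform elementary algebraic manipulation to reach the claimed form. Concretely, for a sample $(\mathbf{x}_i, y_i)$ the softmax output assigns to class $j$ the probability
\begin{equation}
	p(j \mid \mathbf{x}_i) = \frac{\exp(\mathbf{w}_j^T \mathbf{x}_i + b_j)}{\sum_{k=1}^{c} \exp(\mathbf{w}_k^T \mathbf{x}_i + b_k)},
\end{equation}
and the cross-entropy loss for that sample is $-\log p(y_i \mid \mathbf{x}_i)$. Summing over all $n$ samples gives the softmax loss as $\sum_{i=1}^{n} \left( -(\mathbf{w}_{y_i}^T \mathbf{x}_i + b_{y_i}) + \log \sum_{k=1}^{c} \exp(\mathbf{w}_k^T \mathbf{x}_i + b_k) \right)$.

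Next I would factor $\exp(\mathbf{w}_{y_i}^T \mathbf{x}_i + b_{y_i})$ out of the sum inside the logarithm, which converts the $-(\mathbf{w}_{y_i}^T \mathbf{x}_i + b_{y_i})$ term into cancellation with one of the exponents and rewrites each remaining exponent as a difference. This yields
\begin{equation}
	-\log p(y_i \mid \mathbf{x}_i) = \log \left( 1 + \sum_{j \neq y_i} \exp\bigl( (\mathbf{w}_j - \mathbf{w}_{y_i})^T \mathbf{x}_i + b_j - b_{y_i} \bigr) \right),
\end{equation}
where the $+1$ comes from the $j = y_i$ term, for which the difference of parameters is zero. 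Summing over $i$ gives exactly the stated expression.

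The argument is entirely routine — the only "step" of substance is recognizing that dividing numerator and denominator of the softmax probability by $\exp(\mathbf{w}_{y_i}^T \mathbf{x}_i + b_{y_i})$ produces the $1 + \sum_{j\neq y_i}(\cdots)$ pattern, which is a one-line manipulation. There is no genuine obstacle here; the only thing to be careful about is keeping the sign convention on the exponent difference consistent (the difference $\mathbf{w}_j - \mathbf{w}_{y_i}$ rather than its negative, matching the convention used earlier in Eq.~(\ref{trans}) and in the logistic-regression discussion). I would present it as a short direct computation.
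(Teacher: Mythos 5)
Your derivation is correct: starting from the cross-entropy definition of the softmax loss and dividing numerator and denominator by $\exp(\mathbf{w}_{y_i}^T \mathbf{x}_i + b_{y_i})$ gives exactly the stated form, with the $+1$ arising from the $j = y_i$ term. This is precisely the routine computation the paper relies on (it states the lemma without writing out a proof, taking this manipulation as immediate from its definition of softmax loss), so your approach matches the intended one.
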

We consider the relationship between the loss term in Eq. (\ref{trans}) and softmax loss.
It can be found that the two loss functions differ only in the position of the summation sign.
If the log function is approximated by its first order Taylor expansion, the two loss functions then become identical, which is $\sum_{i=1}^{n} \sum_{j \neq y_{i}} \exp((\mathbf{w}_{j} - \mathbf{w}_{y_{i}})^{T} \mathbf{x}_{i} + b_{j} - b_{y_{i}}) $.
In other words, the two distinct models achieve the purpose of class discrimination with isomorphic loss function.
On this basis, our method explicitly enlarges the inter-class margins, hence it is referred to as \textbf{I}mproved \textbf{S}oftmax loss via \textbf{M}aximizing \textbf{M}inimum \textbf{M}argin (ISM3).

\subsection{A.4 Datasets} 
The details of the datasets used in our experiments are described in Table \ref{datasets}.
\begin{table}[H]
	\begin{center}
		\caption{Details of the experimental datasets. } 
		\label{datasets}
		\resizebox{\linewidth}{26mm}{
		\begin{tabular}{ccccc}
			\toprule[1pt]
			\textbf{Dataset}   & Instances & Features & Classes & Data Type  \\ 
			\hline
			\slshape{Cornell} & 827      & 4143     & 7  & Document \\
			\slshape{ISOLET} & 1560      & 26     & 617  & Speech \\
			\slshape{HHAR} & 10229     & 561    & 6  & Movement Signal \\
			\slshape{USPS} & 1854      & 256      & 10  & Handwritten Image \\
			\slshape{ORL} & 400      & 1024      & 40  & Face Image \\
			\slshape{Dermatology}        & 366      & 34       & 6    & Attribute   \\
			\slshape{Vehicle}    & 846      & 18      & 4  & Attribute    \\
			\slshape{Glass}       & 214       & 9     & 6  & Attribute    \\
			\hline
		\end{tabular}
	}
	\end{center}
\end{table}

\subsection{A.5 Integrated into deep learning}
Softmax loss has been widely utilized in various deep learning tasks such as visual classification, language modeling, and generative modeling.
In this paper, we assess the enhancement of our method on softmax loss through visual classification task.
To be fair, we chose ISM3 (logistic regression version) over M$^3$SVM for the comparison.
ISM3 loss in the final layer also facilitates the learning of a large inter-class margin embedding $\mathbf{x}$ through the backpropagation.
To reiterate, neural networks are not utilized as feature extractors for ISM3 loss; rather, ISM3 loss directs the learning of network parameters.
Hence, the stacked CNN layers are not a frozen backbone, but rather updatable.

\begin{table}[h]
	\centering
	\caption{Simple CNN architecture for different benchmark datasets.}
	\label{table:network_architecture}
	\resizebox{\linewidth}{9mm}{
		\begin{tabular}{|c|c|c|c|c|c|c|}
			\hline
			Layers & Filters & Kernel\_size & Stride & Padding & Max\_Pooling & Stacking                  \\ \hline
			layer1 & 16      & 5            & 1      & 2       & kernel\_size = 2, stride = 2 & 1\\ \hline
			layer2 & 32      & 5            & 1      & 2       & kernel\_size = 2, stride = 2 & 1\\ \hline
			layer3 & 64      & 5            & 1      & 2       & kernel\_size = 2, stride = 2 & 3\\ \hline
		\end{tabular}
	}
\end{table}

For the sake of simplicity and consistency, the parameter $p$ of ISM3 is uniformly set to $4$ without further adjustments.
The visual classification task encompasses several datasets of comparable size, including FashionMNIST, PlantVillage, SVHN, STL10, CIFAR10.
They are all locatable within Deeplake \footnote{\url{https://github.com/activeloopai/deeplake}}.
To illustrate the effectiveness, we employ the convolutional neural network comprising merely five layers as the feature extractor and substitute the softmax loss with ISM3 for comparison experiments.
The specific network architecture is presented in Table \ref{table:network_architecture}.
The results on the lightweight visual classification task are presented in Table \ref{table:visual_classification}.

\begin{table}[h]\large
	\begin{center}
		\caption{Average performance (w.r.t.  Accuracy) on test set for visual classification.}
		\label{table:visual_classification}
		\resizebox{\linewidth}{9.2mm}{
			\begin{tabular}{lccccc}
				\toprule[2pt]
				Datasets         
				& FashionMNIST & PlantVillage  & SVHN  &STL10   & CIFAR10           \\ \hline
				\slshape{Softmax}     	&0.8856 &0.9006	&  0.8573  & 0.5461    & 0.6808  \\ \hline
				\slshape{ISM3}	 & 0.9221 &0.9275  &  0.9184  & 0.5814  & 0.7163    \\
				\hline
			\end{tabular}
		}
	\end{center}
\end{table}

\begin{figure}[H]
	\setlength{\abovecaptionskip}{2mm}
	\setlength{\belowcaptionskip}{0mm}
	\centering
	\subfloat[FashionMNIST]{
		\includegraphics[height=2.2cm, width=2.7cm]{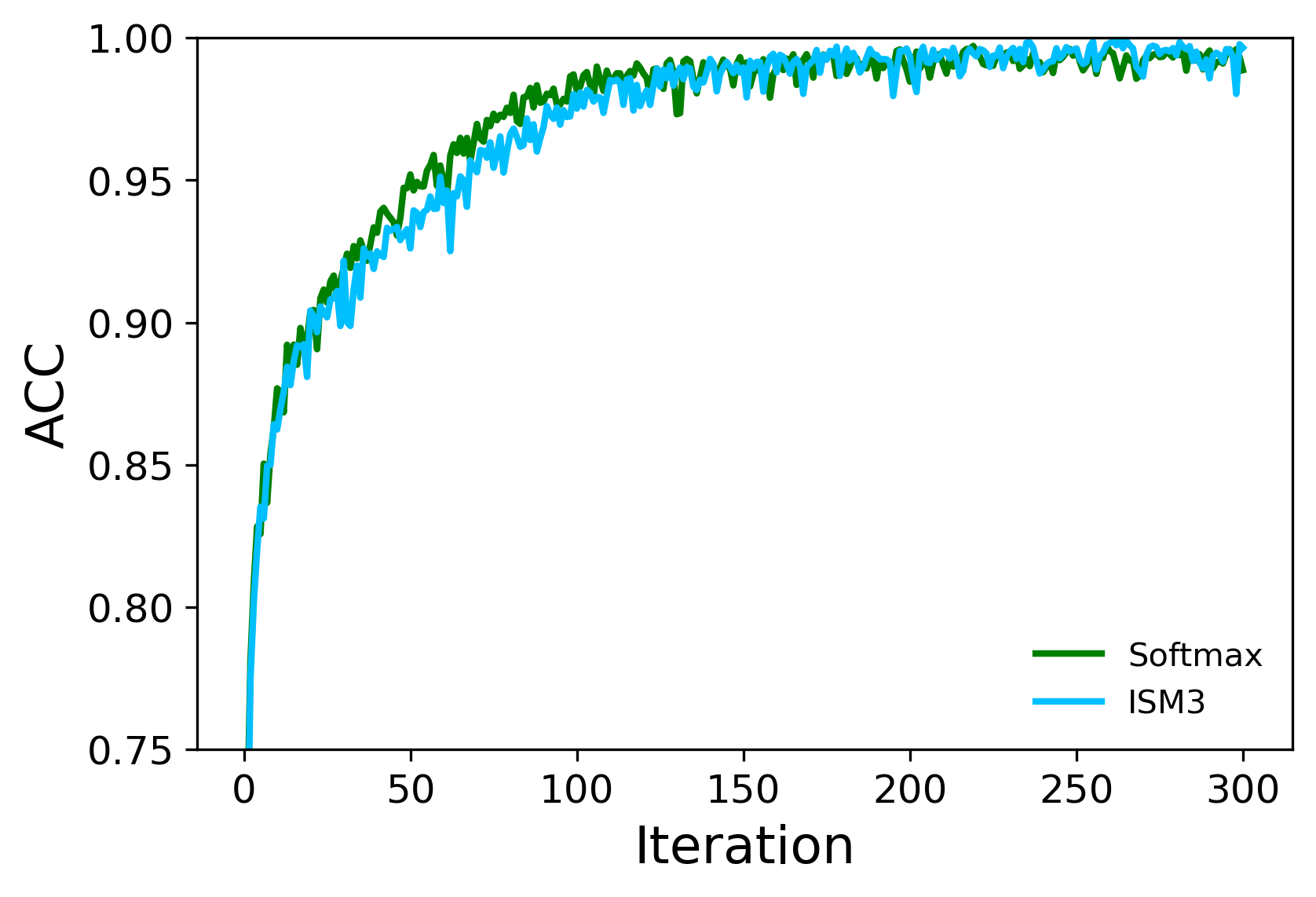} \label{initial}}
	\subfloat[SVHN]{
		\includegraphics[height=2.2cm, width=2.7cm]{SVHN_train_acc.png} \label{with high uniformity}}
	\subfloat[CIFAR10]{
		\includegraphics[height=2.2cm, width=2.7cm]{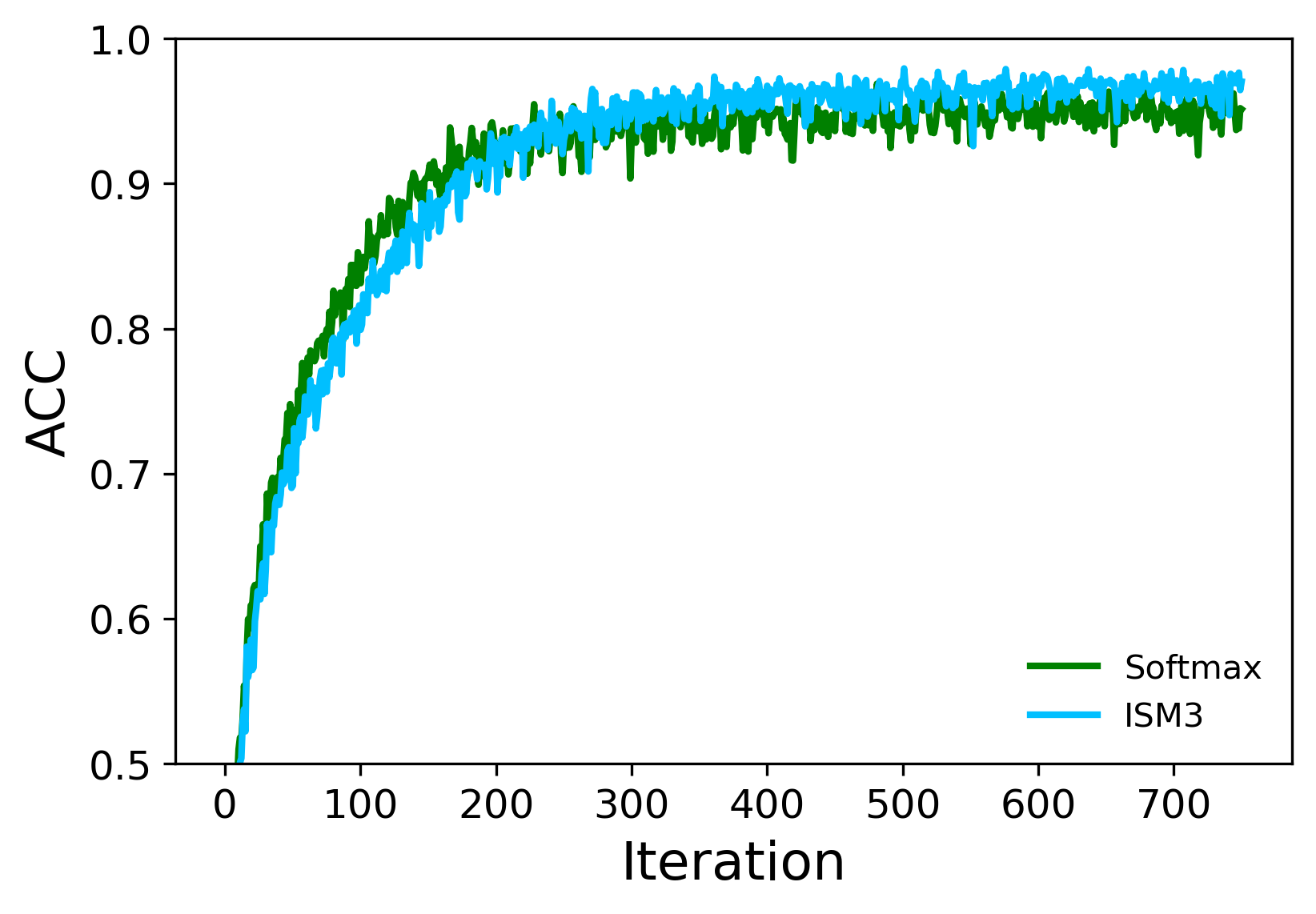} \label{with high uniformity}}\\ 
	\subfloat[FashionMNIST]{
		\includegraphics[height=2.0cm, width=2.7cm]{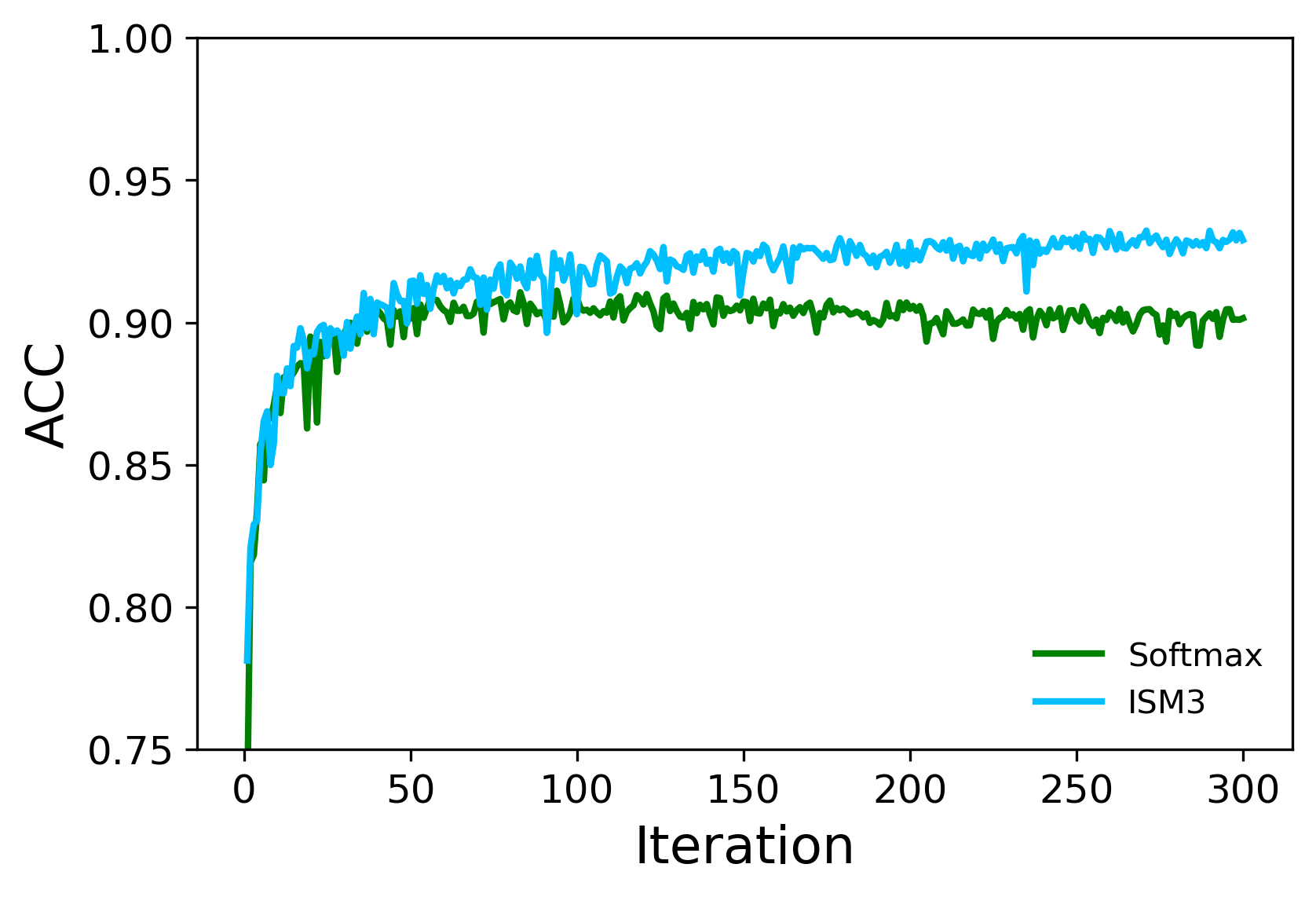} \label{initial}}
	\subfloat[SVHN]{
		\includegraphics[height=2.0cm, width=2.7cm]{SVHN_test_acc.png} \label{with high uniformity}}
	\subfloat[CIFAR10]{
		\includegraphics[height=2.0cm, width=2.7cm]{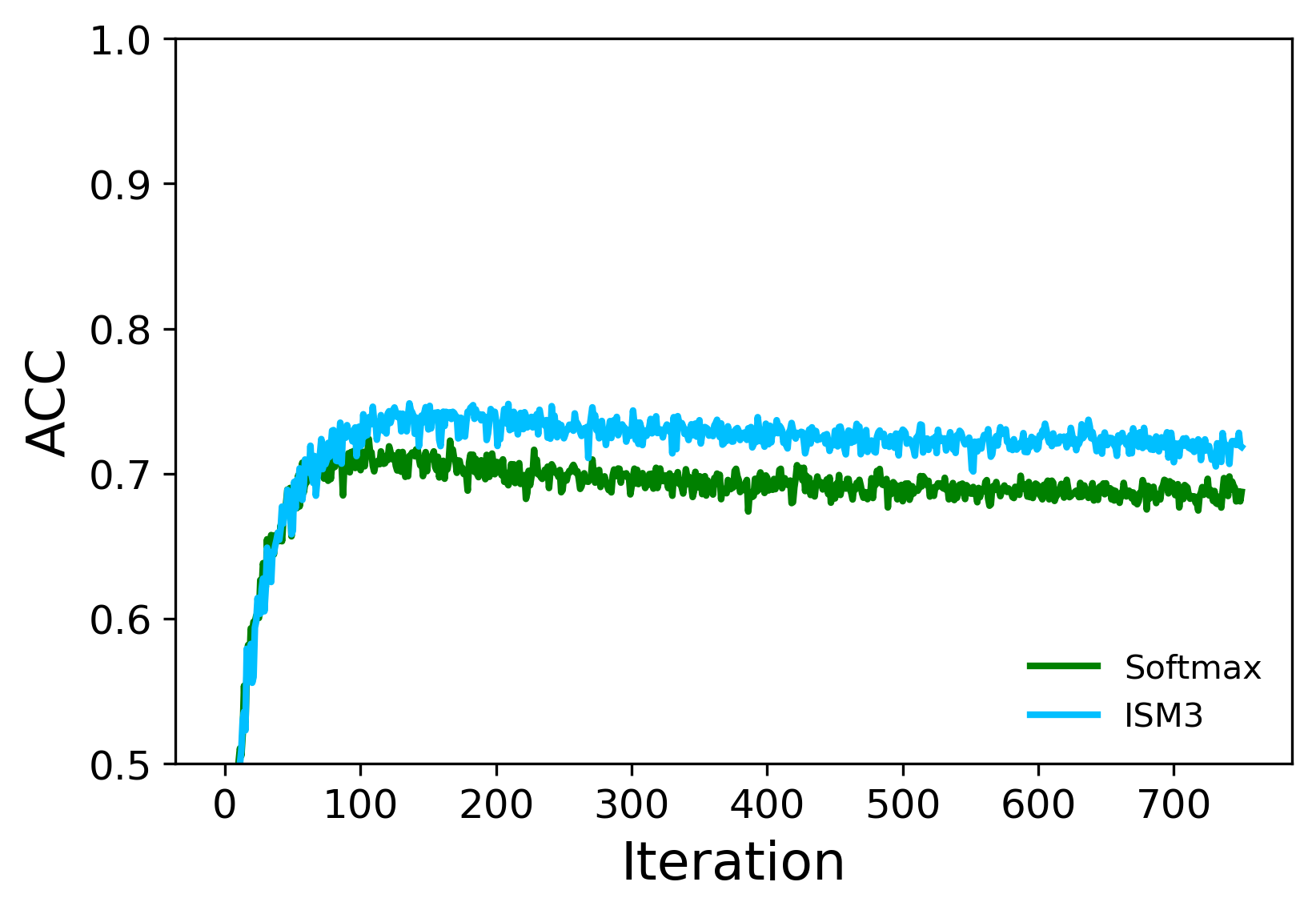} \label{with high uniformity}}\\
	\caption{Accuracy curves with iterations, training accuracy on top and test accuracy below.}
	\label{overfitting}
\end{figure}

The evolution of the training results over the number of iterations is illustrated in Figure \ref{overfitting}, with the top row displaying accuracy on the training set and the bottom row displaying accuracy on the test set.
It can be observed that even on a lightweight network, the original softmax loss still suffers from significant overfitting, as the training accuracy increases while the testing accuracy decreases. In contrast, our proposed ISM3 effectively alleviates such problem.
In circumstances where the accuracy of the training set is comparable, the test accuracy with ISM3 is noticeably improved.
Introducing a regularization term not only causes the parameter space to shrink towards the origin, but also explicitly increases the lower bound of the inter-class margin, effectively alleviating overfitting.

\subsection{A.6 Discussions}
One may wonder about the difference between our job and the previous one Xu17 \cite{Xu2017} (shorthand for convenience).
The distinctions between the two are elucidated in threefold:

\noindent  {$\diamondsuit$ \textit{Multi-class margins:}}
Both works study the unified formula derived from binary SVM; however,  Xu17 does not explain why to minimize the regular term $\sum_{k=1}^{c-1} \sum_{l=k+1}^{c} \|\mathbf{w}_{k} - \mathbf{w}_{l}\|_{2}^{2}$.
Instead, it is given directly, which has no geometric interpretation.
The objective of maximizing the sum of margins between all class pairs is
\begin{equation}\label{max sum}
	\max_{\mathbf{W} \in \mathbb{R}^{d \times c}} \sum_{k=1}^{c-1} \sum_{l=k+1}^{c}   \frac{1}{ \|\mathbf{w}_{k}-\mathbf{w}_{l}\|_{2}},
\end{equation}
which is not directly related to Xu17.
In our work, we explained that directly optimizing the ideal objective (\ref{max sum}) is impractical due to the shared parameters between hyperplanes and introduced our method.

\noindent  {$\diamondsuit$ \textit{The concept of maximizing the minimum margin:}}
Motivated by the illustration and visualization in both the main text and appendices, we converted maximizing all margins to maximizing the lower bound of margins to resolve the above delimma.
This is a conceptual innovation.
Besides, the introduced parameter $p$ in our model can be conceived as a mechanism for balancing the enlargement of overall margins and the enlargement of the lower bound of margins.

As a result, we found that the formula of Xu17 is a special case (i.e., $p = 2$) of our method, which both explains why Xu17 works and suggests a generalized form with $p \in \mathbb{R}^{+}$.
We describe the case of $p=2$ at the bottom of the left sidebar on page 5, and our experimental findings indicate that performance isn't optimal when $p=2$;  instead, empirical evidence favors $p$ around $4$.
Accordingly, the comparison with Xu17 is not in Table \ref{table:Accuracy} but in Figure \ref{p-value}.

\noindent  {$\diamondsuit$ \textit{ISM3:}} 
We introduce the margin concept into  softmax loss and explicitly enhance the lower bound of margins between classes by applying our regularizer, which is seamlessly integrable into lightweight networks.

In addition to the essential distinctions above, our work includes some non-core yet meaningful extensions, including unique solution, model properties, interpretation from SRM.
It is worth noting that the $\frac{1}{2} \sum_{j=1}^{c}\|\mathbf{w}_{j}\|_{2}^{2}$ term in the Xu17 model lacks interpretability, and we established a theorem to attain the unique solution.

It's also noteworthy that the hyper-parameters for OvR and OvO can actually be specific to each subproblem, i.e., $\{\lambda_{j}\}_{j=1}^{c}$  and $\{\lambda_{j}\}_{j=1}^{c(c-1)/2}$, respectively.
However, individually tuning them is impractical, which is also why our unified formulation surpasses OvO with less parameters.

As for the utilization of kernelization:
We recognize that M$^3$SVM is incompatible with it since the objective does not minimize each margin.
All the results are performed on linear kernel.
Despite the elegant formulations of kernel methods, they have been increasingly replaced by neural networks in practical applications.
Our ISM3 can be seamlessly applied to the output layer, regardless of the type of the preceding feature extractor, 
thus diminishing the necessity for its integration with kernelization.
Besides, since our method only regularizes the output layer, its performance cannot be compared with techniques such as dropout, layernorm, etc.
Therefore, ISM3 can be regarded as an experimental extension.

\end{document}